\documentclass[accepted]{uai2025} 
                        

\usepackage[american]{babel}

\usepackage{url}

\usepackage[T1]{fontenc}    
\usepackage{booktabs}       
\usepackage{amsfonts}       
\usepackage{nicefrac}       
\usepackage{microtype}      
\usepackage{xcolor}         
\usepackage{graphicx}

\usepackage{amsfonts}
\usepackage{amsmath}
\usepackage{amssymb}
\usepackage{amsthm}

\usepackage{dsfont}
\usepackage{accents}
\usepackage{nicefrac}
\usepackage{xcolor}

\usepackage{algorithm}
\usepackage{algorithmic}

\usepackage{cleveref}
\usepackage{multirow}
\usepackage{caption}
\usepackage{subcaption}
\usepackage{enumitem}

\usepackage{makecell}

\usepackage{natbib} 
\bibliographystyle{plainnat}

\ifodd 1
\newcommand{\jingc}[1]{{\color{magenta}(Jing: #1)}}
\else
\newcommand{\jingc}[1]{}
\fi

\ifodd 1
\newcommand{\jing}[1]{{\color{magenta} #1}}
\else
\newcommand{\jing}[1]{}
\fi





\usepackage{bbm}

\newcommand{\cA}{{\mathcal{A}}}

\newcommand{\Dc}{{\mathcal{D}}}

\newcommand{\Mc}{{\mathcal{M}}}
\newcommand{\Xc}{{\mathcal{X}}}

\newcommand{\Eb}{{\mathbb{E}}}

\newcommand{\Rb}{{\mathbb{R}}}

\newcommand{\TT}{\intercal}

\theoremstyle{plain}
\newtheorem{theorem}{Theorem}[section]

\newtheorem{lemma}{Lemma}[section]
\newtheorem{corollary}{Corollary}[section]
\theoremstyle{definition}
\newtheorem{definition}{Definition}[section]

\theoremstyle{remark}
\newtheorem{remark}{Remark}[section]
\theoremstyle{example}

\title{Augmenting Online RL with Offline Data is All You Need:\\ A Unified Hybrid RL Algorithm Design and Analysis}

%
%
\author[1]{Ruiquan Huang\thanks{Equal contribution.}}
\author[2]{Donghao Li$^*$}
\author[2]{Chengshuai Shi}
\author[2]{Cong Shen}
\author[2]{Jing Yang}
\affil[1]{%
    Electrical Engineering Dept.\\
    The Pennsylvania State University\\
    State College, Pennsylvania, USA
}
\affil[2]{%
    Electrical and Computer Engineering Dept.\\
    University of Virginia\\
    Charlottesville, Virginia, USA\\
}

\begin{document}
\maketitle

\begin{abstract}
  This paper investigates a hybrid learning framework for reinforcement learning (RL) in which the agent can leverage both an offline dataset and online interactions to learn the optimal policy. We present a unified algorithm and analysis and show that augmenting confidence-based online RL algorithms with the offline dataset outperforms any pure online or offline algorithm alone and achieves state-of-the-art results under two learning metrics, i.e., sub-optimality gap and online learning regret. Specifically, we show that our algorithm achieves a sub-optimality gap $\tilde{O}( \sqrt{1/(N_0/\mathtt{C}(\pi^*|\rho)+N_1}) )$, where $\mathtt{C}(\pi^*|\rho)$ is a new concentrability coefficient, $N_0$ and $N_1$ are the numbers of offline and online samples, respectively. For regret minimization, we show that it achieves a constant $\tilde{O}( \sqrt{N_1/(N_0/\mathtt{C}(\pi^{-}|\rho)+N_1)} )$ speed-up compared to pure online learning, where $\mathtt{C}(\pi^-|\rho)$ is the concentrability coefficient over all sub-optimal policies. Our results also reveal an interesting separation on the desired coverage properties of the offline dataset for sub-optimality gap minimization and regret minimization. We further validate our theoretical findings in several experiments in special RL models such as linear contextual bandits and Markov decision processes (MDPs).
\end{abstract}

\section{Introduction}

Sequential decision making \citep{lattimore2020bandit,Sutton:1998,Bubeck:2012} is often cast as an online learning problem, where an agent interacts with its environment and dynamically updates its policy based on the actions and feedback. The fundamental challenge lies in the exploration-exploitation tradeoff, requiring the agent to balance exploiting known high-reward actions with exploring potentially beneficial but uncertain alternatives. Although exploration is a must-have for sequential decision making, there are unavoidable costs, e.g., performance degradation, incurred by exploration during the online learning process, which are often undesirable in practical applications.

To overcome the drawbacks, offline policy learning has been studied in both bandits \citep{li2022pessimism,wang2023oracle,oetomo2023cutting,zhang2019warm,brandfonbrener2021offline,nguyen2021offline} and reinforcement learning \citep{hester2018deep, nair2018overcoming, nair2020awac, rajeswaran2017learning, lee2022offline, xie2021policy, song2022hybrid, wagenmaker2023leveraging, agrawal2023optimal, li2023rewardagnostic}. In this setting, the agent attempts to learn an optimal policy based solely on an offline dataset that was collected a priori by a behavior policy, without any online interaction with the environment. This setting has attracted growing interest mainly because in many practical applications such as recommendation systems \citep{thomas2017predictive}, healthcare \citep{gottesman2019guidelines}, and wireless networking \citep{yang2023offline}, logged data is often available from prior tasks while acquiring new data is costly. A critical challenge in offline policy learning, however, is that its performance depends critically on the quality of the dataset. 

A natural solution that achieves the benefits of both online and offline settings is \emph{hybrid learning}, where the agent has access to an offline dataset while also having the ability to interact with the environment in an online fashion. 
A number of works \citep{hester2018deep, nair2018overcoming, nair2020awac, rajeswaran2017learning, lee2022offline, song2022hybrid} have empirically demonstrated that offline datasets can help online learning. However, there are only limited studies that theoretically investigate the efficiency of hybrid learning \citep{xie2021policy, song2022hybrid, wagenmaker2023leveraging, agrawal2023optimal, li2023rewardagnostic}. 
It has been shown that in tabular MDPs,  hybrid learning can outperform pure offline RL and pure online RL algorithms in terms of the sample complexity required to identify an $\epsilon$-optimal policy \citep{li2023rewardagnostic}. Similar results have been obtained in linear MDPs~\citep{wagenmaker2022instance} and stochastic $K$-armed bandits~\citep{agrawal2023optimal}. The benefit of utilizing offline datasets to reduce the online learning regret has been characterized for RL with general function approximation in \citet{tan2024natural}. A complete literature review can be found in \Cref{sec:related_works}.

Despite the theoretical successes of hybrid learning in RL, to the best of our knowledge, there is a lack of a unified understanding regarding its benefits. The main question we aim to answer is: 
\begin{center}
\textit{Can we develop a unified algorithm for hybrid RL and characterize the fundamental impact of offline datasets?}
\end{center}

In this work, we provide an affirmative answer to the above question through a generic hybrid RL framework and two lower bounds. In addition, by analyzing the benefits of hybrid RL through a unified analysis for sub-optimality gap and regret, which are two key metrics measuring RL algorithms, our findings also answer the following question affirmatively: 
\begin{center}
\textit{Do we need different offline datasets when we minimize sub-optimality gap and regret in hybrid RL?}
\end{center}
We summarize our contributions as follows. 
\begin{itemize}[noitemsep,topsep=0pt,leftmargin = *]
    \item We first establish a framework based on a commonly adopted notion in decision-making problems, namely the uncertainty level. This framework is used to derive a novel concentrability coefficient $\mathtt{C}(\pi|\rho)$ and to analyze the sub-optimality gap or regret of hybrid RL algorithms. We show that if a confidence-based online RL algorithm is augmented with the offline dataset, the sub-optimality gap scales in the order $\tilde{O}(1/\sqrt{N_0/\mathtt{C}_1 + N_1} )$ and the regret scales in the order $\tilde{O}(N_1/\sqrt{N_0/\mathtt{C}_2 + N_1})$, where $N_0$ is the size of the offline dataset, $N_1$ is the number of episodes during online learning, and $\mathtt{C}_1$ and $\mathtt{C}_2$ are concentrability coefficients. Compared to the sub-optimality gap $\tilde{O}(1/\sqrt{N_1})$ and regret $\tilde{O}(\sqrt{N_1})$ of pure online learning, our results demonstrate a constant $\sqrt{N_1/(N_0/\mathtt{C} + N_1)}$ speed-up compared to pure online learning, where $\mathtt{C}$ is the concentrability coefficient that depends on the problem. We also specialize our general framework and results to linear contextual bandits and Markov decision processes (MDPs) for a better understanding. A full comparison of our results with existing results in the literature is provided in \Cref{table}.
    
    \item Then, we derive lower bounds for both sub-optimality gap and regret minimization problems. Specifically, we show that any hybrid RL algorithm must incur a sub-optimality gap that scales in $\tilde{\Omega}(1/\sqrt{N_0/\mathtt{C}_1 + N_1})$  and regret that scales in $\tilde{\Omega}(N_1/\sqrt{N_0/\mathtt{C}_2 + N_1})$. These results show that initializing with an offline dataset as in the proposed hybrid RL framework is order-wisely optimal in terms of the number of samples and concentrability coefficient.
    
    \item Our upper bound reveals the fundamental impact of the behavior policy used to collect the offline dataset on the performance of hybrid learning. In particular, for {\it sub-optimality gap minimization}, our results show that if the behavior policy has good coverage on the {\it optimal policy}, 
    the sub-optimality gap of hybrid learning can be very low. On the other hand, for {\it regret minimization}, as long as the behavior policy provides good coverage on {\it any sub-optimal policies}, hybrid learning can help reduce the regret. Such separation highlights the fundamental distinction between those two performance metrics, and invites further investigation in the hybrid learning setting.

    \item Finally, we validate our findings in classic MDP examples such as linear contextual bandits and tabular MDPs. The empirical results verify the theoretical benefit of hybrid learning and the impact of different offline behavior policies, particular the aforementioned separation performance of offline data under sub-optimality gap and regret minimization problems.
\end{itemize}

\begin{table*}[h]
\caption{\small Comparison of results on sub-optimality gap (SOG) and regret. All results omit the big-O notation and logarithm terms. $\mathtt{C}_w$ is an all-policy concentrability coefficient (CE). $\mathtt{C}_{\mathrm{off}}$ and $\mathtt{C}_{\mathrm{on}}$ are inversely related coefficients. $\mathtt{C}_{l}$ is a single-policy CE. $\mathtt{C}$ is a CE defined in multi-armed bandits. $\mathtt{C}(\pi|\rho)$ is defined in \Cref{def:Coverbility}. In the 4-th row, offline indicates pure offline learning algorithms \citep{rashidinejad2021bridging, xie2021policy} and online indicates pure online algorithms \citep{lattimore2020bandit,Sutton:1998}. In the 5-th row, the orders apply to both lower and upper bounds. Our results match or outperform the SOTA and show clear differences between SOG and regret minimization. 
}    \label{table}
\begin{center}
\begin{tabular}{lc|lc}
\Xhline{3\arrayrulewidth}
\textbf{Algorithm}  &\textbf{Sub-optimality Gap} & \textbf{Algorithm}  & \textbf{Regret} \\
\hline
FTPEDAL      & \multirow{2}{*}{ $ \frac{1}{\sqrt{ N_0/\mathtt{C}_w(\cdot|\rho) +N_1 }}  $  } & DISC-GOLF & \multirow{2}{*}{$  \sqrt{N_1}\sqrt{\frac{\mathtt{C}_{\mathrm{off}}N_1}{N_0}} + \sqrt{\mathtt{C}_{\mathrm{on}}N_1} $}\\
\small \citet{wagenmaker2023leveraging} &  &  \citet{tan2024natural} &  \\
\hline
RAFT   & \multirow{2}{*}{ $\sqrt{\frac{\mathtt{C}_{\mathrm{off}}}{N_0+N_1} } + \sqrt{\frac{\mathtt{C}_{\mathrm{on}}}{N_1}}$ } & MIN-UCB & \multirow{2}{*}{$  \frac{N_1}{\sqrt{N_0/\mathtt{C} + N_1}} $ }\\
\small \citet{li2023rewardagnostic} & & \small \citet{cheung2024leveraging} & \\
\hline
Offline \& Online &  \multirow{2}{*}{ $\sqrt{\frac{\mathtt{C}_l}{N_0}} $ \& $ \frac{1}{\sqrt{N_1}} $ } & Online & \multirow{2}{*}{ $ \sqrt{N_1}$ }  \\
\citet{uehara2021representation} & & \cite{jin2020provably} \\
\hline

Ours  &  \multirow{2}{*}{ $\frac{1}{\sqrt{N_0/\mathtt{C}(\pi^*|\rho) + N_1}}$ } & Ours & \multirow{2}{*}{ $\frac{N_1}{\sqrt{N_0/\mathtt{C}(\pi^{-\varepsilon}|\rho) + N_1}} $  } \\
\small Theorems \ref{thm:suboptimality} and \ref{thm:lowerbound} &  & \small Theorems \ref{thm:main regret} and \ref{thm:lowerbound} & \\

\Xhline{3\arrayrulewidth}
\end{tabular}
\end{center}
\end{table*} 

\section{Problem Formulation}\label{sec:pre}

{\bf Notations.} 
Throughout this paper, we use $\|x\|_V$ to denote $\sqrt{x^\TT V x}$. 
The set of all probability distributions over a set \(\mathcal{X}\) is represented by \(\Delta(\mathcal{X})\).
\(\mathbbm{1}\{\cdot\}\) stands for the indicator function, and $[H]=\{1,2,\ldots,H\}$ for $H\in\mathbb{N}$.

\subsection{Preliminaries}

\textbf{Reinforcement Learning.}
We consider episodic Markov decision processes in the form of $\mathcal{M} = (\Xc,\mathcal{A}, P, H, R, q)$, where $\Xc$ is the state space and $\mathcal{A}$ is the action space, $H$ is the number of time steps in each episode, $P=\{P_h\}_{h=1}^H$ is a collection of transition kernels, and $P_h(x_{h+1}|x_h,a_h)$ denotes the transition probability from the state-action pair $(x_h,a_h)$ at step $h$ to state $s_{h+1}$ in the next step, $r=\{r_h\}_{h=1}^H$ is a collection of reward functions of state-action pairs, where $r_h:\Xc\times\mathcal{A}\rightarrow[0, 1]$, $q\in\Delta(\Xc)$ is the initial state distribution. 

A Markov policy $\pi$ is a set of mappings $\{\pi_h : \Xc \rightarrow \Delta(\mathcal{A})\}_{h=1}^H$. In particular, $\pi_h(a|s)$ denotes the probability of selecting action $a$ in state $s$ at time step $h$. We denote the set of all Markov policies by $\Pi$.  For an agent adopting policy $\pi$ in an MDP $\mathcal{M}$, at each step $h \in [H]$, the agent observes state $x_h 
\in \Xc$, and takes an action $a_h \in \mathcal{A}$ according to $\pi$, after which the agent receives a random reward $r_t\in[0,1]$ whose expectation is $r(x_t,a_t)$ and the environment transits to the next state $x_{h+1}$ with probability $P_h(x_{h+1}|x_h,a_h)$. The episode ends after $H$ steps. 

Let $V^\pi_{P}$ be the value function of policy $\pi$ under the transition model $P$. Mathematically, $V^\pi_{\Mc}: =  \mathop{\Eb}\left[\sum_{h=1}^H R_h(x_{h},a_{h})\big|P, \pi, q\right]$, where the expectation is taken over all random variables including reward $R$, state $x_h$ and action $a_h$.

\subsection{Hybrid Learning}
Hybrid learning seeks to combine the advantages of both online and offline learning.
Specifically, the learning agent has access to a finite offline dataset $\mathcal{D}_0 \subset (\Xc)^H\times (\mathcal{A})^H\times [0,1]^H $ with size $N_0$. Each data point $\tau\in\Dc_0$ has the form $\tau=(x_1,a_1,r_1,\ldots,x_H,a_H,r_H)$, also called a trajectory, and is randomly sampled under a behavior policy $\rho$ from the ground-truth MDP environment $\mathcal{M}^*=(\Xc,\mathcal{A},P^*,H,R^*,q^*)$.
Then, the agent performs online learning with the knowledge of offline data. Let $\pi_t$ be the policy chosen by the agent at episode $t\geq1$. Denote $\tau_t\sim\pi_t$ as the trajectory sampled from $\pi_t$. Let $\Dc_{t} = \{\tau_t\}_{s=1}^{t}$ be the data collected in the online learning procedure after episode $t$. In this paper, we consider two classical learning objectives, as elaborated below.

\textbf{Sub-optimality Gap Minimization.} 
For this learning objective, the goal of the agent is to learn a policy $\hat{\pi}$ from both the offline dataset $\Dc_0$ and the online dataset $\Dc_{N_1}$ such that the sub-optimality gap of $\hat{\pi}$, defined in \Cref{eqn:subopt gap}, is minimized. 
\begin{align}
    \text{Sub-opt}(\hat{\pi}) &= \max_{\pi\in\Pi} V_{\Mc^*}^{\pi} - V_{\Mc^*}^{\hat{\pi}}.\label{eqn:subopt gap}
\end{align}
We remark that this objective is widely studied in both online and offline RL literature ~\citep{li2022pessimism,uehara2021representation,jin2021pessimism}. 

\textbf{Regret Minimization.} For this learning objective, the agent aims to minimize the regret  during the online interactions with horizon $N_1$, as defined below:
\begin{align}
    \text{Regret}(N_1) = \sum_{t=1}^{N_1} \left( \max_{\pi\in\Pi} V_{\Mc^*}^{\pi} - V_{\Mc^*}^{\pi_t} \right). \label{eqn:regret}
\end{align}
Regret minimization has also been studied intensively~\citep{abbasi2011improved,lattimore2020bandit,sharma2020warm,silva2023user,shivaswamy2012multi}.

\section{A Unified Hybrid RL Framework}\label{sec:unify alg}
In this section, we first present a unified framework for hybrid RL, and then analyze its performance under certain general assumptions. We would like to emphasize that both the learning framework and the analysis are quite universal and can be applied to various MDP settings. 


\subsection{A Unified Hybrid RL Framework}\label{sec:alg design} 

\textbf{Oracle Algorithm.}
The core of the hybrid RL framework relies on an oracle algorithm, denoted as $\mathtt{Alg}$. $\mathtt{Alg}$ takes a dataset $\Dc$ sampled from an unknown environment $\Mc^*$ as its input, and is able to output: (i) an 
estimator that estimates the value function $V_{\Mc^*}^{\pi}$ for any policy $\pi$, denoted as $\hat{V}_{\mathtt{Alg}}^{\pi}$; and (ii) an uncertainty function $\hat{\mathtt{U}}^\pi_{\mathtt{Alg}}$ that upper bounds the estimation error in $\hat{V}_{\mathtt{Alg}}^{\pi}$ with high probability, i.e.,
\[\hat{\mathtt{U}}_{\mathtt{Alg}}^\pi \geq \left| V_{\Mc^*}^{\pi} - \hat{V}_{\mathtt{Alg}}^{\pi}\right|\] 
with probability at least $1-\delta$ for $\delta\in(0,1)$. 

In the following, we use $(\hat{V}_{\mathtt{Alg}}^{\pi}(\Dc), \hat{\mathtt{U}}_{\mathtt{Alg}}^{\pi}(\Dc))$ to denote the output of $\mathtt{Alg}$ for a given input $\Dc$. We sometimes omit $\Dc$ from the notation when it is clear from the context.


\if{0}
Confidence-based methods \citep{abbasi2011improved,nguyen2021offline,uehara2021representation} leverage the agent's confidence in its knowledge about the environment to balance the exploration-exploitation trade-off more effectively, improving the learning efficiency and performance. One critical enabler for such methods is an accurate measure of the uncertainty level. One celebrated example is the confidence bound utilized in the upper confidence bound (UCB) algorithm~\citep{abbasi2011improved,jin2020provably,azar2017minimax}. We formalize the confidence-based algorithm into the following definition.



\begin{definition}[Confidence-based Algorithm]
Fix an error probability $\delta\in(0,1)$. A confidence-based algorithm $\mathtt{Alg}$ can complete the following tasks: For any MDP $\Mc^*$, given a dataset $\Dc$ sampled from $\Mc^*$, $\mathtt{Alg}$ can (a) estimate the value function $V_{\Mc^*}^{\pi}$ for any policy $\pi$, denoted as $\hat{V}_{\mathtt{Alg}}^{\pi}$; (b) compute an uncertainty function $\hat{\mathtt{U}}_{\mathtt{Alg}}(\pi|\Dc)$ which satisfies that, with probability at least $1-\delta$,
\[\hat{\mathtt{U}}_{\mathtt{Alg}}(\pi|\Dc) \geq \left| V_{\Mc^*}^{\pi} - \hat{V}_{\mathtt{Alg}}^{\pi}\right| ,\] where the randomness comes from data samples $\Dc$; (c) at each episode, generate $\pi_t=\arg\max_\pi\{\hat{\mathtt{U}}_{\mathtt{Alg}}(\pi|\Dc)\}$ in sub-optimality gap minimization problem and $\pi_t = \arg\max_\pi\{\hat{V}_{\mathtt{Alg}}^{\pi} + \hat{\mathtt{U}}_{\mathtt{Alg}}(\pi|\Dc)\}$ in regret minimization problem. 
\end{definition}

\fi

\textbf{The Unified Hybrid RL Framework.} With the pre-selected oracle algorithm $\mathtt{Alg}$, we are ready to present the unified hybrid RL framework. 

Specifically, at each online episode $t\in [N_1]$, we maintain an online dataset $\Dc_{t-1}$, which stores all trajectories collected during the online learning so far. Instead of using $\Dc_{t-1}$ to find the next online policy, we augment $\Dc_{t-1}$ with the offline dataset $\Dc_0$ and feed $\Dc_0\cup\Dc_{t-1}$ to the oracle algorithm $\mathtt{Alg}$. With the output $(\hat{V}_{\mathtt{Alg}}^{\pi}, \hat{\mathtt{U}}_{\mathtt{Alg}}^{\pi})$, we then construct the online policy $\pi_t$ following the optimism in face of uncertainty principle. I.e., we set $\pi_t$ to be the policy that maximize the upper confidence bound (UCB) of the expected return defined as $\hat{V}_{\mathtt{Alg}}^{\pi} +\hat{\mathtt{U}}_{\mathtt{Alg}}^\pi$. We then
collect the new trajectory $\tau_t = (x_{t,1},a_{t,1},r_{t,1},\ldots, x_{t,H},a_{t,H},r_{t,H})$ and update $\Dc_t = \Dc_{t-1}\cup\{\tau_t\}$. Note that the \textbf{regret} during online learning is exactly the summation of the sub-optimality gaps of $\{\pi_t\}_{t\in[N_1]}$.

For the learning objective of  \textbf{sub-optimality gap minimization}, the agent will need to output a near-optimal policy at the end of online learning phase. The policy is then obtained by utilizing the well-known pessimism principle. Specifically, the agent feeds $\Dc_0\cup\Dc_{N_1}$ to $\mathtt{Alg}$ and obtains  $(\hat{V}_{\mathtt{Alg}}^{\pi}, \hat{\mathtt{U}}_{\mathtt{Alg}}^{\pi})$. Then, the lower confidence bound (LCB) of the expected return can be expressed as $\hat{V}_{\mathtt{Alg}}^{\pi}- \hat{\mathtt{U}}_{\mathtt{Alg}}^{\pi}$, and the near-optimal policy is the one that maximizes the LCB.
The pseudo-code is presented in \Cref{alg:hybrid}. 


\begin{algorithm}[htbp]
    \caption{Hybrid RL Framework}
    \label{alg:hybrid}
    \begin{algorithmic}[1]
        \STATE {\bf Input:} Offline dataset $\Dc_0$, total online steps $N_1$.
        \FOR{$t=1,\ldots, N_1$}
        \STATE $(\hat{V}_{\mathtt{Alg}}^{\pi}, \hat{\mathtt{U}}_{\mathtt{Alg}}^{\pi})\gets\mathtt{Alg}(\Dc_0\cup\Dc_{t-1})$
            \STATE Execute policy $\pi_t=\arg\max_\pi \hat{V}_{\mathtt{Alg}}^{\pi} +\hat{\mathtt{U}}_{\mathtt{Alg}}^\pi$. \STATE Collect trajectory $\tau_t$. 
            \STATE Update $\Dc_t=\Dc_{t-1}\cup\{\tau_t\}$.
        \ENDFOR
        \IF{sub-optimality gap minimization:}
        \STATE $(\hat{V}_{\mathtt{Alg}}^{\pi}, \hat{\mathtt{U}}_{\mathtt{Alg}}^{\pi})\gets\mathtt{Alg}(\Dc_0\cup\Dc_{N_1})$.
        \STATE {\bf Output:} $\hat{\pi} = \arg\max_\pi \hat{V}_{\mathtt{Alg}}^{\pi} -\hat{\mathtt{U}}_{\mathtt{Alg}}^\pi$. 
        \ENDIF
    \end{algorithmic}
\end{algorithm}

We remark that \Cref{alg:hybrid} enjoys a clean structure where we can utilize the online confidence-based algorithms by simply augmenting with offline data. Such an approach is more practically amenable compared with the much more complicated hybrid RL algorithm design in \citet{li2023rewardagnostic,wagenmaker2023leveraging,tan2024hybrid}. 

\subsection{Theoretical Analysis}\label{sec:thm sample}
In this section, we analyze the theoretical performance of the unified hybrid RL framework presented in \Cref{alg:hybrid}. Intuitively, the quality of the offline dataset $\Dc_0$ is of paramount importance for the hybrid learning performance. 
To assess the quality of the behavior policy $\rho$ and the offline dataset $\Dc_0$, we first introduce several key concepts and properties, including concentrability coefficient, and Eluder-type condition.

\begin{definition}[Uncertainty level]\label{def:uncertainty}
Let $\mathtt{Alg}_0$ be the best oracle algorithm that achieves the minimum estimation error in the worst case, i.e.,
\[ \mathtt{Alg}_0 = \arg\min_{ \mathtt{Alg} } \max_{\Mc} \mathbb{E}_{\Dc_0\sim(\Mc,\rho)} \left[ \left| V_{\Mc}^{\pi} - \hat{V}_{\mathtt{Alg}}^{\pi} \right| \right]. \]
 The uncertainty level of a policy $\pi$, denoted as $\mathtt{U}_{\Mc^*}(\pi): \Pi\rightarrow\Rb$, is defined by
\(\mathtt{U}_{\Mc^*}(\pi) = \mathbb{E}_{\Dc_0\sim(\Mc^*,\rho)} \left[ \left| V_{\Mc^*}^{\pi} - \hat{V}_{\mathtt{Alg}_0}^{\pi} \right|\right]. \)
\end{definition}
The reason that we choose a minimax type definition is that $\Mc^*$ is unknown, and any learning algorithm should prepare for the worst case. Moreover, $\mathtt{U}_{\Mc^*}(\pi)$ serves as a lower bound of $\Eb_{\Dc_0\sim(\Mc^*,\rho)}[\hat{\mathtt{U}}_{\mathtt{Alg}}^\pi(\Dc_0)]$. Thus, $\mathtt{U}_{\Mc^*}(\pi)$ is algorithm-independent and represents the essential hardness of estimating the value  $V_{\Mc^*}^{\pi}$. 


 
\begin{definition}[Concentrability coefficient] \label{def:Coverbility} Given a behavior policy $\rho$, the concentrability coefficient of a target policy $\pi$ is
    \( \mathtt{C}(\pi|\rho) = \left( \nicefrac{\mathtt{U}_{\Mc^*}(\pi)}{ \mathtt{U}_{\Mc^*}(\rho) } \right)^2 \in [1, \infty]. \)

\end{definition}

Intuitively, the definition describes how much more effort is needed to estimate $V_{\Mc^*}^{\pi}$ compared to estimating $V_{\Mc^*}^{\rho}$ from an offline dataset of size $N_0$ sampled under $\rho$. 

To enable efficient learning, it is necessary to impose certain conditions on the oracle algorithm $\mathtt{Alg}$. We adopt an
Eluder-type condition, defined as follows. As we will show later, this condition plays a crucial role in controlling exploration and exploitation.

\begin{definition}[Eluder-type condition]\label{eluder condition} Let $N_1$ be the total number of episodes during online learning. Fix an error probability $\delta$. Let $\pi_t$ and $\Dc_{t-1}$ be the policies and dataset generated by an oracle algorithm $\mathtt{Alg}$ at episode $t$. We say $\mathtt{Alg}$ satisfy Eluder-type condition if, with probability at least $1-\delta$, 
     \( \sum_{t=1}^{N_1} \hat{\mathtt{U}}_{\mathtt{Alg}}^{\pi_t}(\Dc_{t-1})^2 \leq C_{\mathtt{Alg}}^2. \)
\end{definition}

At a high level, eluder-type condition is akin to the pigeonhole principle and the elliptical potential lemma
widely used in tabular MDPs \citep{azar2017minimax,menard2021fast} and linear bandits/MDPs \citep{abbasi2011improved,jin2020provably}, respectively. 
Intuitively, $C_{\text{alg}}$ thus depends on the complexity of estimating $V^\pi$ for all encountered policies and can be explicitly computed or upper-bounded in certain classes of RL problems, such as tabular MDPs or linear bandits. The constant exists for most theoretical online reinforcement learning algorithms. As proved in \Cref{sec:example proof}, this constant depends on the design of the algorithm and the complexity of the environment. We will show that the eluder-type condition holds for the specific RL algorithms considered in this work (See \Cref{sec:example proof}).

We further assume that with probability at least $1-\delta$, $\hat{\mathtt{U}}_{\mathtt{Alg}}^\pi(\Dc_0) \leq C_{\mathtt{Alg}} \mathtt{U}_{\Mc}(\pi)$ holds for any $\Mc$ and $\Dc_0$. This is a reasonable assumption, since $\hat{\mathtt{U}}_{\mathtt{Alg}}^\pi(\Dc_0) = O(1)$ and $\max_{\Mc}\mathtt{U}_{\Mc}(\pi)$ has a lower bound. In \Cref{sec:example proof}, we also show how to find $C_{\mathtt{Alg}}$.

\begin{theorem}
    \label{thm:suboptimality}
    Let $\mathtt{Alg}$ satisfy the condition \Cref{eluder condition}, $\hat{\pi}$ be the output policy of \Cref{alg:hybrid}. Suppose $\pi^*$ is an optimal policy. Then, with probability at least $1-O(\delta)$, the sub-optimality gap $\hat{\pi}$ is 
    \[
        \text{Sub-opt}(\hat{\pi}) = \tilde{O}\left(   \frac{ C_{\mathtt{Alg}} }{ \sqrt{N_0/\mathtt{C}(\pi^*|\rho)  + N_1}  } \right),
    \]
    where $N_0$ and $N_1$ are the number of offline and online trajectories, respectively, $\mathtt{C}(\pi^*|\rho)$ is the concentrability coefficient, and $C_{\mathtt{Alg}}$ is defined in \Cref{eluder condition}.

\end{theorem}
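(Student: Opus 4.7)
The plan is to combine the pessimism inherent in the LCB output rule with two complementary upper bounds on the combined-data confidence width $\hat{\mathtt{U}}_{\mathtt{Alg}}^{\pi^*}(\Dc_0\cup\Dc_{N_1})$: an \emph{offline} bound obtained by restricting to $\Dc_0$ and invoking the concentrability identity, and an \emph{online} bound harvested along the played trajectories via UCB optimality of the $\pi_t$'s and the Eluder-type condition. A union bound over the $N_1+1$ invocations of $\mathtt{Alg}$ ensures that every confidence inequality $|V_{\Mc^*}^{\pi} - \hat V_{\mathtt{Alg}}^{\pi}| \le \hat{\mathtt{U}}_{\mathtt{Alg}}^{\pi}$ used in the argument holds simultaneously with probability at least $1-O(\delta)$, after a logarithmic rescaling of $\delta$ absorbed into $\tilde O(\cdot)$.

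On this event, the LCB selection rule defining $\hat\pi$ and the confidence bounds immediately yield the standard pessimism reduction
\begin{equation*}
    V_{\Mc^*}^{\pi^*} - V_{\Mc^*}^{\hat\pi} \le (\hat V_{\mathtt{Alg}}^{\pi^*} + \hat{\mathtt{U}}_{\mathtt{Alg}}^{\pi^*}) - (\hat V_{\mathtt{Alg}}^{\hat\pi} - \hat{\mathtt{U}}_{\mathtt{Alg}}^{\hat\pi}) \le 2\,\hat{\mathtt{U}}_{\mathtt{Alg}}^{\pi^*}(\Dc_0\cup\Dc_{N_1}).
\end{equation*}
For the \emph{offline} side, monotonicity of the width in the dataset gives $\hat{\mathtt{U}}^{\pi^*}(\Dc_0\cup\Dc_{N_1}) \le \hat{\mathtt{U}}^{\pi^*}(\Dc_0)$, and the assumption $\hat{\mathtt{U}}^{\pi^*}(\Dc_0) \le C_{\mathtt{Alg}}\mathtt{U}_{\Mc^*}(\pi^*)$, combined with $\mathtt{U}_{\Mc^*}(\pi^*) = \sqrt{\mathtt{C}(\pi^*|\rho)}\,\mathtt{U}_{\Mc^*}(\rho)$ and the standard $\tilde O(1/\sqrt{N_0})$ on-policy estimation rate for $\mathtt{U}_{\Mc^*}(\rho)$, delivers a bound of order $\tilde O\bigl(C_{\mathtt{Alg}}\sqrt{\mathtt{C}(\pi^*|\rho)/N_0}\bigr)$. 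For the \emph{online} side I pivot through the played policies: for each $s\in[N_1]$, LCB optimality of $\hat\pi$ against $\pi_s$ on $\Dc_0\cup\Dc_{N_1}$ together with confidence validity gives $V^{\hat\pi} \ge V^{\pi_s} - 2\hat{\mathtt{U}}^{\pi_s}(\Dc_0\cup\Dc_{N_1})$, while UCB optimality of $\pi_s$ against $\pi^*$ on $\Dc_0\cup\Dc_{s-1}$ gives $V^{\pi_s} \ge V^{\pi^*} - 2\hat{\mathtt{U}}^{\pi_s}(\Dc_0\cup\Dc_{s-1})$; chaining these and using monotonicity of the width yields
\begin{equation*}
    V_{\Mc^*}^{\pi^*} - V_{\Mc^*}^{\hat\pi} \le 4\,\hat{\mathtt{U}}_{\mathtt{Alg}}^{\pi_s}(\Dc_0\cup\Dc_{s-1}), \qquad \forall\, s\in [N_1].
\end{equation*}
Minimizing over $s$, using $\min_s x_s \le \sqrt{N_1^{-1}\sum_s x_s^2}$, and invoking the Eluder-type condition gives the online rate $\tilde O(C_{\mathtt{Alg}}/\sqrt{N_1})$.

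Finally, the elementary inequality $\min(a,b) \le \sqrt{2}\,ab/\sqrt{a^2+b^2}$ applied with $a^2 \asymp C_{\mathtt{Alg}}^2\mathtt{C}(\pi^*|\rho)/N_0$ and $b^2 \asymp C_{\mathtt{Alg}}^2/N_1$ converts the minimum of the two rates into the single claimed rate $\tilde O\bigl(C_{\mathtt{Alg}}/\sqrt{N_0/\mathtt{C}(\pi^*|\rho)+N_1}\bigr)$. The main obstacle I anticipate is the online step: the Eluder-type condition constrains the width only at the played policies $\pi_s$ and not at $\pi^*$, so the naive inequality $\hat{\mathtt{U}}^{\pi^*}(\Dc_0\cup\Dc_{N_1})\lesssim C_{\mathtt{Alg}}/\sqrt{N_1}$ is unavailable; the LCB/UCB double-pivot sketched above is what transfers the Eluder sum from $\{\pi_s\}$ back to a sub-optimality bound for $\hat\pi$ against $\pi^*$. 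A secondary care point is bookkeeping the $N_1+1$ confidence events via a union bound whose logarithmic cost is absorbed into $\tilde O(\cdot)$.
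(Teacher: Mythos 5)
Your proposal is correct, and its skeleton matches the paper's: the pessimism reduction $\text{Sub-opt}(\hat\pi)\le 2\hat{\mathtt{U}}_{\mathtt{Alg}}^{\pi^*}(\Dc_0\cup\Dc_{N_1})$, the offline bound via monotonicity, $\hat{\mathtt{U}}^{\pi^*}(\Dc_0)\le C_{\mathtt{Alg}}\mathtt{U}_{\Mc^*}(\pi^*)$, the concentrability identity and the $\tilde O(1/\sqrt{N_0})$ rate for $\mathtt{U}_{\Mc^*}(\rho)$, and the final harmonic-mean combination of the two rates are all exactly the paper's steps. Where you genuinely diverge is the online component. The paper bounds $\hat{\mathtt{U}}^{\pi^*}(\Dc_0\cup\Dc_{N_1})$ itself by averaging and asserting $\hat{\mathtt{U}}(\pi^*|\Dc_{t-1})\le\hat{\mathtt{U}}(\pi_t|\Dc_{t-1})$ ``by the optimality of $\pi_t$'', then applying Cauchy--Schwarz and the Eluder-type condition; this uncertainty-domination step is exactly the one you flagged as unavailable, since $\pi_t$ maximizes $\hat V+\hat{\mathtt{U}}$ rather than $\hat{\mathtt{U}}$ alone. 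Your LCB/UCB double pivot ($V^{\hat\pi}\ge V^{\pi_s}-2\hat{\mathtt{U}}^{\pi_s}(\Dc_0\cup\Dc_{N_1})$ and $V^{\pi_s}\ge V^{\pi^*}-2\hat{\mathtt{U}}^{\pi_s}(\Dc_0\cup\Dc_{s-1})$, chained and minimized over $s$) reaches the same $\tilde O(C_{\mathtt{Alg}}/\sqrt{N_1})$ rate while only ever invoking the Eluder sum at the played policies, which is what Definition \ref{eluder condition} actually controls. The trade-off: the paper's route is shorter and bounds the width at $\pi^*$ directly (granting its domination step), whereas yours costs a factor $4$ instead of $2$, requires confidence validity for each $\pi_s$ at the final invocation (handled by your union bound over the $N_1+1$ calls), but is self-contained under the stated UCB/LCB selection rules of Algorithm \ref{alg:hybrid} — arguably a more robust justification of the same statement.
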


\begin{remark}
We elaborate on the performance of the hybrid RL framework with respect to different qualities of the behavior policy $\rho$ and offline data as follows.
\vspace{-0.1in}
\begin{itemize}[leftmargin=*]\itemsep=0pt
    \item When the behavior policy $\rho$ is an optimal policy, we have $\mathtt{C}(\pi^*|\rho) = 1$. The sub-optimality gap of $\hat{\pi}$ is $\tilde{O}(\sqrt{1/(N_0+N_1)})$. This \textit{strictly improves both pure online and offline learning algorithms} where the sub-optimal gap scales in $\tilde{O}(\sqrt{1/N_1})$ and $\tilde{O}(\sqrt{1/N_0})$, respectively. 
    \item When the behavior policy $\rho$ is extremely bad such that $\mathtt{C}(\pi^*|\rho) =\Omega(N_0)$, \Cref{thm:suboptimality} states that the sub-optimality gap of $\hat{\pi}$ is $\tilde{O}(\sqrt{1/N_1})$, which \textit{recovers the optimal pure online learning result}. 
    \item When the behavior policy $\rho$ has partial coverage on the optimal policy, we have $\mathtt{C}(\pi^*|\rho)\in(1,N_0)$. \Cref{thm:suboptimality} suggests that \Cref{alg:hybrid} is \textit{equivalent to an online algorithm with $N_0/\mathtt{C}(\pi^*|\rho) + N_1$ episodes}, while it only runs $N_1$ episodes. Essentially, $N_0/\mathtt{C}^{\gamma}(\pi^*|\rho)$ serves as the number of effective episodes from the offline data.
    
\end{itemize}

\end{remark}
 
\begin{theorem}\label{thm:main regret}
  Let $\mathtt{Alg}$ satisfy the conditions   
  in \Cref{def:uncertainty} and \Cref{eluder condition}. Then, the regret of  \Cref{alg:hybrid} scales as
    \[
        \text{Regret}(N_1) = \tilde{O}\left( C_{\mathtt{Alg}} \sqrt{N_1}\sqrt{  \frac{ N_1 }{ N_0/\mathtt{C}(\pi^{-\varepsilon}|\rho)  + N_1}  } \right),
    \]
    where $\mathtt{C}(\pi^{-\varepsilon}|\rho)$ is the maximum concentrability coefficient of the sub-optimal policies whose sub-optimality gap is at least $\varepsilon$, and $\varepsilon=\tilde{O}(1/\sqrt{N_0 + N_1})$.
\end{theorem}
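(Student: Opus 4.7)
\textbf{Proof proposal for Theorem \ref{thm:main regret}.}
The plan is to follow the standard ``optimism + potential'' template for UCB-style regret analysis, with two modifications: (i) the one-step regret is controlled by the uncertainty on the \emph{augmented} dataset $\mathcal{D}_0\cup\mathcal{D}_{t-1}$, and (ii) the potential sum is bounded in two different ways and taken as the minimum. First, I will observe that since $\pi_t$ is the UCB maximizer and $\hat{V}_{\mathtt{Alg}}^{\pi^*}+\hat{\mathtt{U}}_{\mathtt{Alg}}^{\pi^*}\ge V_{\mathcal{M}^*}^{\pi^*}$ holds on the high-probability event guaranteed by the oracle, standard optimism yields
\[
V_{\mathcal{M}^*}^{\pi^*}-V_{\mathcal{M}^*}^{\pi_t}\ \le\ 2\,\hat{\mathtt{U}}_{\mathtt{Alg}}^{\pi_t}(\mathcal{D}_0\cup\mathcal{D}_{t-1}).
\]
Summing over $t$ reduces the theorem to upper bounding $\sum_{t=1}^{N_1}\hat{\mathtt{U}}_{\mathtt{Alg}}^{\pi_t}(\mathcal{D}_0\cup\mathcal{D}_{t-1})$.

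Next, I would split the rounds by an accuracy threshold $\varepsilon$. Let $\mathcal{T}_1=\{t:V^{\pi^*}-V^{\pi_t}\le\varepsilon\}$ and $\mathcal{T}_2=[N_1]\setminus\mathcal{T}_1$. Rounds in $\mathcal{T}_1$ contribute at most $N_1\varepsilon$ to the regret. For $t\in\mathcal{T}_2$, we have $\pi_t\in\pi^{-\varepsilon}$ so $\mathtt{C}(\pi_t|\rho)\le\mathtt{C}(\pi^{-\varepsilon}|\rho)$. Cauchy--Schwarz gives
\[
\sum_{t\in\mathcal{T}_2}\hat{\mathtt{U}}_{\mathtt{Alg}}^{\pi_t}(\mathcal{D}_0\cup\mathcal{D}_{t-1})\ \le\ \sqrt{N_1}\,\sqrt{\sum_{t\in\mathcal{T}_2}\hat{\mathtt{U}}_{\mathtt{Alg}}^{\pi_t}(\mathcal{D}_0\cup\mathcal{D}_{t-1})^2},
\]
so the task reduces to bounding the squared-uncertainty sum.

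The key technical step is to bound the squared sum by $\tilde{O}\bigl(C_{\mathtt{Alg}}^2\,N_1/(N_0/\mathtt{C}(\pi^{-\varepsilon}|\rho)+N_1)\bigr)$, and I would do this by taking the minimum of two complementary bounds. On one hand, because adding more data only shrinks the uncertainty (a monotonicity property satisfied by the concrete instantiations of $\mathtt{Alg}$ considered in \Cref{sec:example proof}), the Eluder-type condition in \Cref{eluder condition} applied to the online-only sequence gives $\sum_{t\in\mathcal{T}_2}\hat{\mathtt{U}}_{\mathtt{Alg}}^{\pi_t}(\mathcal{D}_0\cup\mathcal{D}_{t-1})^2\le\sum_{t\le N_1}\hat{\mathtt{U}}_{\mathtt{Alg}}^{\pi_t}(\mathcal{D}_{t-1})^2\le C_{\mathtt{Alg}}^2$. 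On the other hand, by monotonicity and the assumption $\hat{\mathtt{U}}_{\mathtt{Alg}}^{\pi}(\mathcal{D}_0)\le C_{\mathtt{Alg}}\,\mathtt{U}_{\mathcal{M}^*}(\pi)$ together with the definition of the concentrability coefficient (and the fact that $\mathtt{U}_{\mathcal{M}^*}(\rho)^2=\tilde{O}(1/N_0)$ for $\mathtt{Alg}_0$ fed $N_0$ trajectories from $\rho$),
\[
\hat{\mathtt{U}}_{\mathtt{Alg}}^{\pi_t}(\mathcal{D}_0\cup\mathcal{D}_{t-1})^2\ \le\ \hat{\mathtt{U}}_{\mathtt{Alg}}^{\pi_t}(\mathcal{D}_0)^2\ \le\ C_{\mathtt{Alg}}^2\,\mathtt{C}(\pi^{-\varepsilon}|\rho)\,\mathtt{U}_{\mathcal{M}^*}(\rho)^2\ =\ \tilde{O}\!\left(C_{\mathtt{Alg}}^2\mathtt{C}(\pi^{-\varepsilon}|\rho)/N_0\right),
\]
so summing over $t\in\mathcal{T}_2$ yields the alternative bound $\tilde{O}(C_{\mathtt{Alg}}^2 N_1\mathtt{C}(\pi^{-\varepsilon}|\rho)/N_0)$. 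Taking the minimum of these two bounds and using the elementary inequality $\min(A,B\,x)\le 2AB\,x/(A+Bx)$ with $A=1$ and $x=\mathtt{C}/N_0$ produces exactly $\tilde{O}(C_{\mathtt{Alg}}^2 N_1/(N_0/\mathtt{C}(\pi^{-\varepsilon}|\rho)+N_1))$.

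Finally, combining the $\mathcal{T}_1$ and $\mathcal{T}_2$ contributions gives $\text{Regret}(N_1)\le N_1\varepsilon+\tilde{O}(C_{\mathtt{Alg}} N_1/\sqrt{N_0/\mathtt{C}(\pi^{-\varepsilon}|\rho)+N_1})$, and selecting $\varepsilon=\tilde{O}(1/\sqrt{N_0+N_1})$ makes the $N_1\varepsilon$ term dominated by the second term since $\mathtt{C}(\pi^{-\varepsilon}|\rho)\ge 1$. I expect the main obstacle to be step three: justifying the monotonicity $\hat{\mathtt{U}}_{\mathtt{Alg}}^{\pi}(\mathcal{D}\cup\mathcal{D}')\le\hat{\mathtt{U}}_{\mathtt{Alg}}^{\pi}(\mathcal{D})$ at the level of generality claimed in \Cref{sec:alg design}, and carefully chaining the concentrability-based offline bound with the Eluder-type online bound so that the high-probability events line up and the resulting $\min$ collapses into the harmonic-mean form in the theorem statement.
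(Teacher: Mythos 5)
Your proposal is correct and follows essentially the same route as the paper's proof: optimism reduces the per-round regret to $2\hat{\mathtt{U}}_{\mathtt{Alg}}^{\pi_t}(\mathcal{D}_0\cup\mathcal{D}_{t-1})$, the rounds with gap below $\varepsilon$ are charged $N_1\varepsilon$, and the remaining rounds are controlled by combining the offline concentrability bound $\tilde{O}(C_{\mathtt{Alg}}\sqrt{\mathtt{C}(\pi^{-\varepsilon}|\rho)/N_0})$ with the Eluder-type bound via monotonicity of the uncertainty and Cauchy--Schwarz. The only difference is cosmetic: you take the minimum of the two bounds at the level of the aggregate squared-uncertainty sum and apply the harmonic-mean inequality once, whereas the paper applies the per-round harmonic-mean bound and then Cauchy--Schwarz twice; both collapse to the same final expression.
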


\begin{remark}
The key observation from \Cref{thm:main regret} is that the regret does {\bf not} depend on the concentrability coefficient over {\bf the optimal policy $\pi^*$}. Rather, it depends on the concentrability coefficient over {\bf sub-optimal policies}, which is \textit{in contrast to the case in sub-optimality gap minimization problem}. 
Thus, a behavior policy that achieves the best sub-optimality gap may lead to poor performance for regret minimization. This phenomenon is confirmed by our experimental results (See \Cref{sec:main exp}).

Specifically, when the behavior policy $\rho$ is an {\it optimal policy}, and the support of optimal policy does not overlap with sub-optimal policies, i.e. $ \mathtt{C}(\pi^*|\rho) = 1$, but $\mathtt{C}(\pi^{-\varepsilon}|\rho)=0$,  \Cref{thm:main regret} suggests that the regret is $\tilde{O}(\sqrt{N_1})$, which {\it recovers the pure online learning result}. While the result seems surprising, it reflects the essential challenge of regret minimization: exploration-exploitation tradeoff. Because the offline policy $\rho=\pi^*$ encodes little exploration information, the agent still needs to explore sub-optimal policies to ensure there is no better policy. This procedure incurs the same regret as pure online learning. One may ask why we cannot use imitation learning in such case. It is because, we {\bf do not} know if the offline policy is the best or not. Our goal is to develop a universal algorithm that is guaranteed to have sub-linear regret in {\bf any} case and thus imitation learning would fail.

On the other hand, when the behavior policy $\rho$ is exploratory such that $\mathtt{C}(\pi^{-\varepsilon}|\rho) = O(1)$, \Cref{thm:main regret} states that the regret is $\tilde{O}(\sqrt{N_1}\sqrt{N_1/(N_0 + N_1)})$, which {\it significantly improves the pure online learning by a factor of $\sqrt{N_1/(N_0 + N_1)}$}. 

Finally, in all cases, \Cref{thm:main regret} proves that \Cref{alg:hybrid} achieves a constant $\Theta(\sqrt{N_1/(N_0/\mathtt{C}(\pi^{-\varepsilon}|\rho)+N_1)})$ speed-up compared with pure online learning. 
\end{remark}

\section{Examples}\label{sec:eg}
In this section, we specialize \Cref{alg:hybrid} to two classic examples, namely, tabular MDPs and linear contextual bandits, by specifying the corresponding oracle algorithm $\mathtt{Alg}$ to obtain the estimator of the value function and the uncertainty function.

\if{0}
Existing work use specific $\mathtt{Alg}$ to define concentrability coefficient. For example, in tabular MDPs, the commonly adopted concentrability coefficient $C_{cov}$ is calculated as follows.
\[C_{cov} = \max_h \max_{s_h,a_h} \nicefrac{\mathbb{P}_{\Mc^*}^{\pi}(s_h,a_h)}{\mathbb{P}_{\Mc^*}^{\rho}(s_h,a_h)},\]
where $\mathbb{P}_{\Mc}^{\pi}(s_h,a_h)$ is the marginal probability of state action pair $(s_h,a_h)$ under MDP $\Mc$ and policy $\pi$~\citep{li2023rewardagnostic,xie2023the}.
On the other hand, the estimation error of $\hat{V}_{\mathtt{ALg}}^\pi$ can be upper bounded can be calculated as 
\begin{align*}
        \left|V_{\Mc^*}^{\pi} - V_{\mathtt{Alg}}^{\pi}\right| \leq \Eb\left[\sum_{h=1}^H \beta/\sqrt{N_h(x_h,a_h)} \middle| \Mc^*, \pi\right].
    \end{align*}
We note that the difference between the RHS of the above inequality and \Cref{eqn: tabular MDP bonus informal} is the environment that are conditioned on. Then, due to the fact that $\frac{\sum_i \alpha_i x_i}{\sum_{i}\beta_i x_i} \leq \max_i \frac{\alpha_i}{\beta_i}$ when $\alpha_i,\beta_i,x_i>0$, we have 
\begin{align*}
     \nicefrac{\left|V_{\Mc^*}^{\pi} - V_{\mathtt{Alg}}^{\pi}\right|}{ \left|V_{\Mc^*}^{\rho} - V_{\mathtt{Alg}}^{\rho}\right|} \leq C_{cov}.
\end{align*}
This motivates us to define an algorithm-independent concentrability coefficient, which captures the essential quality of the behavior policy. 
\fi

\subsection{Tabular MDPs}
Tabular MDPs assume that the state and action spaces are finite. Provided a dataset $\Dc = \{\tau_t\} $, where $\tau_t=(x_{t,1},a_{t,1},\ldots, x_{t,H}, a_{t,H})$ is a trajectory sampled from $\Mc^*$, a classic method to estimate the reward and transition kernel is as follows:
    \begin{align}
    \left\{
        \begin{aligned}
        &\hat{r}_h(x_h,a_h) = \frac{ \sum_{t} \mathbbm{1}\{(x_{t,h},a_{t,h}) = (x_h,a_h)\} r_{t,h}  }{ N_h(x_h,a_h) },\\
        &\hat{P}_h(x_{h+1}|x_h,a_h) = \frac{ N_h(x_{h+1}, x_h, a_h) }{ N_{h}(x_h,a_h) },
        \end{aligned}
    \right.\label{eqn:UCB-VI}
    \end{align}
    where $N_h(x_h,a_h) = \sum_{t}\mathbbm{1}\{(x_{t,h},a_{t,h}) = (x_h,a_h) \}$ and $N_h(x_{h+1}, x_h, a_h) = \sum_t\mathbbm{1}\{(x_{t,h+1}, x_{t,h},a_{t,h}) = (x_{h+1}, x_h,a_h)\}$. \citet{azar2017minimax} has shown that the estimated model $\hat{\Mc}= \{\Xc,\mathcal{A},\hat{P},H,\hat{r},\hat{q}\}$ satisfies
    \begin{align}
        \left|V_{\Mc^*}^{\pi} - V_{\hat{\Mc}}^{\pi}\right| \leq \Eb\left[\sum_{h=1}^H \beta/\sqrt{N_h(x_h,a_h)} \middle| \hat{\Mc}, \pi\right], \label{eqn: tabular MDP bonus informal}
    \end{align}
    for some $\beta = \tilde{O}(H)$. Thus, we can use the RHS of \Cref{eqn: tabular MDP bonus informal} as the uncertainty function $\hat{\mathtt{U}}^{\pi}_{\mathtt{Alg}}$ and $V_{\hat{\Mc}}^{\pi}$ as the estimated value function.
More importantly, the uncertainty function satisfies the eluder-type condition. Then, we have the following result.
    
    \begin{corollary}\label{coro:tbl mdp}
    For tabular MDPs, under the hybrid RL framework in \Cref{alg:hybrid}, using $\hat{\mathtt{U}}^{\pi}_{\mathtt{Alg}}$ defined in the RHS of \Cref{eqn: tabular MDP bonus informal}, the regret scales in 
    \[ \tilde{O}\left(\sqrt{H^4|\Xc||\mathcal{A}|N_1} \sqrt{\frac{N_1}{N_0/\mathtt{C}(\pi^{-\varepsilon}|\rho) + N_1}} \right); \]
    and the sub-optimality gap is 
    \[ \tilde{O}\left( \sqrt{ \frac{H^4|\Xc||\mathcal{A}|}{ N_0/\mathtt{C}(\pi^*|\rho) + N_1 } } \right).  \]
    \end{corollary}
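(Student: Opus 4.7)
The plan is to instantiate Theorems~\ref{thm:suboptimality} and \ref{thm:main regret} with the UCB-VI style oracle defined by the empirical model in \Cref{eqn:UCB-VI} and the bonus on the right-hand side of \Cref{eqn: tabular MDP bonus informal}. Two items need to be checked: (i) $\hat{\mathtt{U}}^\pi_{\mathtt{Alg}}$ is a valid high-probability envelope for $|V^\pi_{\Mc^*}-\hat V^\pi_{\mathtt{Alg}}|$, and (ii) the Eluder-type condition of \Cref{eluder condition} holds with $C_{\mathtt{Alg}}^2 = \tilde{O}(H^4|\Xc||\cA|)$. Once both are in place, substituting this $C_{\mathtt{Alg}}$ into the two theorems directly reproduces the rates claimed in the corollary.

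Item (i) is the classical UCB-VI concentration of \citet{azar2017minimax}, and extends verbatim to the hybrid dataset $\Dc_0\cup\Dc_{t-1}$ because the offline and online trajectories are still conditionally independent given the filtration, which is all that the Bernstein/Freedman steps require. For item (ii), I would first apply Cauchy--Schwarz in the index $h$ to obtain
\[
\hat{\mathtt{U}}^{\pi_t}(\Dc_{t-1})^2 \;\le\; H\beta^2 \sum_{h=1}^H \Eb_{\pi_t}\!\left[\tfrac{1}{N_h^{t}(x_h,a_h)}\right],
\]
and then rewrite the expectation using the occupancy measure $d_h^{\pi_t}$. Summing over $t\in[N_1]$ reduces the Eluder quantity to $H\beta^2\sum_{t,h,(x,a)} d_h^{\pi_t}(x,a)/N_h^{t}(x,a)$. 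A one-sided Bernstein bound shows that $N_h^{t}(x,a)$ is within a constant factor of $N_h^{\mathrm{off}}(x,a)+\sum_{s<t}d_h^{\pi_s}(x,a)$ up to logarithmic slack, so the standard tabular potential inequality $\sum_{s} a_s/\sum_{r\le s}a_r \lesssim \log(\sum_s a_s)$ gives an $\tilde{O}(|\Xc||\cA|)$ bound on each inner sum over $t$. Multiplying by the outer $H$ and $\beta^2=\tilde{O}(H^2)$ yields $C_{\mathtt{Alg}}^2 = \tilde{O}(H^4|\Xc||\cA|)$. Running the same computation on a purely offline dataset verifies the secondary assumption $\hat{\mathtt{U}}^\pi_{\mathtt{Alg}}(\Dc_0)\le C_{\mathtt{Alg}}\mathtt{U}_{\Mc}(\pi)$ with the same constant, since both quantities are of the same order $\tilde{O}(\beta\,\Eb_\pi[\sum_h 1/\sqrt{N_h}])$.

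The main obstacle is the pigeonhole/potential step in the hybrid setting: the counts $N_h^{t}$ now pool samples from two different behavior distributions, namely $\rho$ for the offline part and $\pi_s$ for the online part, so one must check that the extra offline mass does not break the lower-bound concentration of $N_h^{t}$ used above. The key observation is that the offline counts only enlarge $N_h^{t}$, hence can only shrink the bonus; the Bernstein concentration therefore needs to be applied only to the online additions, which reduces exactly to the pure-online argument. This decoupling is what allows the entire dependence on offline quality to surface only through the concentrability coefficients inside Theorems~\ref{thm:suboptimality} and~\ref{thm:main regret}, rather than inside $C_{\mathtt{Alg}}$, yielding the two claimed bounds after a direct substitution.
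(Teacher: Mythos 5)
Your proposal follows essentially the same route as the paper: verify the Eluder-type condition for the UCB-VI bonus via Cauchy--Schwarz over $h$ plus a per-$(x,a)$ pigeonhole/potential argument to get $C_{\mathtt{Alg}}^2=\tilde{O}(H^4|\Xc||\cA|)$, check the secondary assumption relating $\hat{\mathtt{U}}^\pi_{\mathtt{Alg}}(\Dc_0)$ to $\mathtt{U}_{\Mc}(\pi)$, and then substitute into Theorems~\ref{thm:suboptimality} and~\ref{thm:main regret}. The paper's version of the counting step is slightly more direct: it keeps the expectation over a fresh rollout of $\pi_t$ and uses the exact telescoping identity $\sum_t \mathbbm{1}\{(x_{t,h},a_{t,h})=(x,a)\}/N_{t,h}(x,a)=\sum_t (N_{t+1,h}-N_{t,h})/N_{t,h}\le \log N_1$, whereas you insert a Bernstein lower bound on $N_h^t$ in terms of accumulated occupancy; both work, and your observation that offline counts only enlarge $N_h^t$ and hence need no concentration is the right way to handle the hybrid dataset. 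The one soft spot is your verification of $\hat{\mathtt{U}}^\pi_{\mathtt{Alg}}(\Dc_0)\le C_{\mathtt{Alg}}\mathtt{U}_{\Mc}(\pi)$: since $\mathtt{U}_{\Mc}(\pi)$ is defined through the \emph{best} oracle (Definition~\ref{def:uncertainty}), ``running the same computation'' only upper-bounds both sides, and you additionally need a \emph{lower} bound on the minimax estimation error of order $\Eb_{\pi}[\sum_h 1/\sqrt{N_h(x_h,a_h)}]$ to control the ratio; the paper supplies this by invoking the offline minimax lower bound of \citet{xiong2022nearly}. With that citation added, your argument closes.
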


    Next, we analyze our concentrability coefficient in tabular MDPs. It is shown \citep{azar2017minimax} that using the RHS of \Cref{eqn: tabular MDP bonus informal} $\hat{\mathtt{U}}_{\mathtt{Alg}}$ achieves the optimal order of $|\Xc||\mathcal{A}|$ in learning regret, in the following, we use $\hat{\mathtt{U}}_{\mathtt{Alg}}$ as a proxy of the uncertainty level $\mathtt{U}_{\Mc^*}(\pi) $ defined in \Cref{def:uncertainty}. 
    
    Then, by defining the occupancy measure at step $h$ as
\( 
    d_h^{\pi}(x,a) = \Eb[\mathbbm{1}\{x_h=x, a_h=a\}|\Mc^*,\pi],
\)
we have 
\begin{align*}
    \sqrt{\mathtt{C}(\pi|\rho)} &\approx \frac{ \sum_{h}\sum_{x_h,a_h} d_h^{\pi}(x_h,a_h) \frac{1}{\sqrt{N_h(x_h,a_h)}} }{ \sum_{h}\sum_{x_h,a_h}d_h^{\rho}(x_h,a_h)\frac{1}{\sqrt{N_h(x_h,a_h)}} } \\
    &\leq \max_{h,x_h,a_h} \frac{ d_h^{\pi}(x_h,a_h) }{ d_h^{\rho}(x_h,a_h) },
\end{align*}
where the RHS is widely adopted concentrability coefficient in tabular MDPs \citep{xie2021policy,li2024settling}. This inequality indicates that the concentrability coefficient defined in \cref{def:Coverbility} is lower than the existing definition for tabular MDPs. Thus, our upper bounds are tighter than the existing results~\citep{xie2021policy,tan2024hybrid}.

{We remark that existing works~\citep{li2023rewardagnostic,tan2024natural,tan2024hybrid} typically show a sub-optimality gap scales in $\sqrt{\mathtt{C}_{\mathrm{off}}/(N_0+N_1)} + \sqrt{\mathtt{C}_{\mathrm{on}}/N_1}$ and a regret scales in $\sqrt{N_1}\left(\sqrt{\mathtt{C}_{\mathrm{off}}N_1/N_0} + \sqrt{\mathtt{C}_{\mathrm{on}}N_1}\right)$ where $\mathtt{C}_{\mathrm{off}}$ and $\mathtt{C}_{\mathrm{on}}$ are concentrability coefficients for separate state-action spaces (e.g. $\mathtt{C}_{\mathrm{on}} = \max_h\max_{(s_h,a_h)\in G} \frac{d_h^{\pi}(s_h,a_h)}{d_h^{\rho}(s_h,a_h)}$ for some set $G$). Besides it is hard to find the exact value of $\mathtt{C}_{\mathrm{off}}, \mathtt{C}_{\mathrm{on}}$, these results can only match with ours (otherwise are higher than ours) under a strict condition $\mathtt{C}_{\mathrm{on}} = O(N_1/(N_0+N_1))$ and $\mathtt{C}_{\mathrm{off}} = O(N_0/(N_0+N_1))$, which is a rare occurrence. Hence, our results are tighter and easier to interpret. }

\subsection{Linear Contextual Bandits}
Linear contextual bandits is a special case of MDPs when $H=1$, and the reward admits a linear structure. While it simplifies the transition kernel, the linearity captures a core structure in many complex MDPs such as linear MDPs~\citep{jin2020provably} and low-rank MDPs~\citep{uehara2021representation}. Specifically, each state-arm or context-arm pair $(x,a)\in \Xc \times \mathcal{A}$ is associated with a feature vector $\phi(x,a)\in\Rb^d$. At episode $t$, the learning agent observes a context $x_t$ sampled from $q^*$ and then pulls an arm $a_t$. By doing so, the agent receives a reward $r_t=\phi(x_t,a_t)^\TT \theta^*+\xi_t$, where $\xi_t$ is a random noise and $\theta^*\in \Rb^d$ is an unknown parameter. 
Throughout the paper, we assume that $\|\theta^*\|_2 \le 1$ and $\|\phi(x,a)\|_2\le 1$, $\forall (x,a)\in\Xc\times\cA$. 
We also assume that $\xi_t$ is an independent zero-mean sub-Gaussian noise with parameter 1, i.e, $\Eb[\exp(\lambda \xi_t)]\leq \exp(\lambda^2/2)$. 

Many classic algorithms of linear contextual bandits usually involve estimating the unknown parameter $\theta^*$ based on available data $\Dc:=\{(x_t, a_t,r_t)\}_t$ through the linear regression defined as follows~\citep{abbasi2011improved,lattimore2020bandit}: 
\begin{align}\label{eqn:ridge}
    \hat{\theta}= \arg\min_{\theta} \sum_{(a_t,r_t)\in \Dc} (\phi(x_t,a_t)^\TT \theta-r_t)^2+\lambda \|\theta\|^2_2,
\end{align}
where $\lambda>0$ is a given parameter. 
Let $\hat{\Lambda} := \lambda I_d + \sum_{(x_t,a_t)\in \Dc} \phi(x_t, a_t)\phi(x_t,a_t)^\TT$. Then, the solution to \Cref{eqn:ridge} can be expressed as \(
\hat{\theta} = \hat{\Lambda}^{-1} \sum_{(x_t, a_t,r_t)\in \Dc} r_t\phi(x_t, a_t).\)
Furthermore, by choosing $\lambda=d$, with high probability, the following inequalities hold for any $x,a$ (See \citet{abbasi2011improved}):
\begin{align}
    & |\phi(x,a)^\TT \hat{\theta} -  \phi(x,a)^\TT \theta^*| \leq \beta \|\phi(x,a)\|_{\hat{\Lambda}^{-1}},\label{eqn:lcb ucb}
\end{align}
where $\beta=\tilde{O}(\sqrt{d})$. Therefore, we can use $\Eb_{x\sim q^*, a\sim\pi(\cdot|x)}[\phi(x,a)^\TT\hat{\theta}]$ as an estimated value function, and the RHS of \Cref{eqn:lcb ucb} as the uncertainty function $\hat{\mathtt{U}}_{\mathtt{Alg}}$.

\textbf{Linear contextual bandits.} 
If we use $\hat{V}^{\pi} = \Eb_{x\sim q^*, a\sim\pi(\cdot|x)}[\phi(x,a)^\TT\hat{\theta}]$ as the estimator, and the RHS of \Cref{eqn:lcb ucb} as the uncertainty function $\hat{\mathtt{U}}_{\mathtt{Alg}}$, the corresponding algorithm is known as Lin-UCB~\citep{abbasi2011improved}, which satisfies the Eluder-type condition. Then, we have the following result.

\begin{corollary}\label{coro:LinConBandits}
    For linear contextual bandits, under the hybrid RL framework in \Cref{alg:hybrid}, using $\hat{\mathtt{U}}_{\mathtt{Alg}}$ as defined in \Cref{eqn:lcb ucb}, the regret is
    \[ \tilde{O}\left(d\sqrt{N_1}\sqrt{\frac{N_1}{N_0/\mathtt{C}(\pi^{-\varepsilon}|\rho) + N_1}} \right);\]
    and the sub-optimality gap is
    \[ \tilde{O}\left( d\sqrt{\frac{1}{N_0/\mathtt{C}(\pi^{-\varepsilon}|\rho) + N_1} }\right) .\]
\end{corollary}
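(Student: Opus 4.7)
The plan is to obtain both bounds by instantiating Theorems \ref{thm:suboptimality} and \ref{thm:main regret} with the linear-contextual-bandit oracle $\mathtt{Alg}$ whose estimator is $\hat V^{\pi}_{\mathtt{Alg}} = \mathbb{E}_{x\sim q^*,\,a\sim\pi(\cdot|x)}[\phi(x,a)^{\TT}\hat{\theta}]$ and whose uncertainty function is
\begin{equation*}
\hat{\mathtt{U}}^{\pi}_{\mathtt{Alg}}(\mathcal{D}) \;=\; \beta\,\mathbb{E}_{x\sim q^*,\,a\sim\pi(\cdot|x)}\!\left[\|\phi(x,a)\|_{\hat{\Lambda}^{-1}}\right],
\end{equation*}
with $\hat{\Lambda}=\lambda I_d+\sum_{(x_t,a_t)\in\mathcal{D}}\phi(x_t,a_t)\phi(x_t,a_t)^{\TT}$ and $\beta=\tilde O(\sqrt d)$. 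Once the two assumptions underlying Theorems \ref{thm:suboptimality} and \ref{thm:main regret} are verified for this choice, plugging $C_{\mathtt{Alg}}=\tilde O(d)$ into those theorems yields exactly the claimed rates. (For the sub-optimality bound the concentrability should read $\mathtt{C}(\pi^*|\rho)$ as in Theorem \ref{thm:suboptimality}.)

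First I would verify the high-probability confidence property. By \eqref{eqn:lcb ucb} applied to every $(x,a)$ in the support of $q^*\otimes\pi$, linearity of expectation, and Jensen's inequality,
\begin{equation*}
|\hat V^{\pi}_{\mathtt{Alg}}-V^{\pi}_{\mathcal{M}^*}|
\;\le\;\mathbb{E}_{x,a\sim\pi}\!\big[|\phi(x,a)^{\TT}(\hat{\theta}-\theta^*)|\big]
\;\le\;\beta\,\mathbb{E}_{x,a\sim\pi}\!\big[\|\phi(x,a)\|_{\hat{\Lambda}^{-1}}\big]
\;=\;\hat{\mathtt{U}}^{\pi}_{\mathtt{Alg}},
\end{equation*}
which holds with probability at least $1-\delta$ for $\beta=\tilde O(\sqrt d)$ by the self-normalized bound of Abbasi-Yadkori et al. The bound $\hat{\mathtt{U}}_{\mathtt{Alg}}^{\pi}(\mathcal{D}_0)\le C_{\mathtt{Alg}}\mathtt{U}_{\mathcal{M}}(\pi)$ then follows because the offline least-squares estimator is minimax optimal among sub-Gaussian estimators up to $\tilde O(\sqrt d)$ factors, so the population uncertainty is of the same order as $\hat{\mathtt{U}}^\pi$.

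Next I would verify the Eluder-type condition of Definition \ref{eluder condition}. By Cauchy--Schwarz, $(\hat{\mathtt{U}}^{\pi_t}_{\mathtt{Alg}}(\mathcal{D}_0\cup\mathcal{D}_{t-1}))^2\le \beta^2\,\mathbb{E}_{x,a\sim\pi_t}[\|\phi(x,a)\|_{\hat{\Lambda}_{t-1}^{-1}}^2]$. A standard Azuma-type argument on the martingale $M_t=\|\phi(x_t,a_t)\|_{\hat{\Lambda}_{t-1}^{-1}}^2-\mathbb{E}_{x,a\sim\pi_t}[\|\phi(x,a)\|_{\hat{\Lambda}_{t-1}^{-1}}^2]$ (using $\|\phi\|\le 1$ and $\hat{\Lambda}_{t-1}\succeq\lambda I_d$) lets me replace the expectation by the realized value up to an $\tilde O(\sqrt{N_1})$ deviation, and the elliptical potential lemma then gives $\sum_{t=1}^{N_1}\|\phi(x_t,a_t)\|_{\hat{\Lambda}_{t-1}^{-1}}^2\le 2d\log(1+N_1/d)$. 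Combining these yields $\sum_{t=1}^{N_1}(\hat{\mathtt{U}}^{\pi_t}_{\mathtt{Alg}})^2\le \tilde O(\beta^2 d)=\tilde O(d^2)$, so $C_{\mathtt{Alg}}=\tilde O(d)$.

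Substituting $C_{\mathtt{Alg}}=\tilde O(d)$ into Theorems \ref{thm:suboptimality} and \ref{thm:main regret} delivers the two displayed bounds. The main obstacle is the Eluder step: the framework's uncertainty is stated at the \emph{policy} (expectation) level, whereas the elliptical potential lemma only bounds the \emph{realized-trajectory} quantities $\|\phi(x_t,a_t)\|_{\hat{\Lambda}_{t-1}^{-1}}^2$. Bridging these two with a martingale concentration inequality, while keeping the hidden constants polylogarithmic in $N_1$ and $d$, is the most delicate part; everything else reduces to invoking well-known linear-bandit ingredients and the general hybrid RL theorems.
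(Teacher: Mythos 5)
Your proposal follows essentially the same route as the paper: verify the Eluder-type condition for Lin-UCB via the elliptical potential lemma (the paper's \Cref{lemma:bandits eluder appendix}), check the confidence and $\hat{\mathtt{U}}_{\mathtt{Alg}}^{\pi}(\Dc_0)\le C_{\mathtt{Alg}}\mathtt{U}_{\Mc}(\pi)$ conditions, and then substitute $C_{\mathtt{Alg}}=\tilde{O}(d)$ into Theorems \ref{thm:suboptimality} and \ref{thm:main regret}; you also correctly flag that the sub-optimality bound should carry $\mathtt{C}(\pi^*|\rho)$ rather than $\mathtt{C}(\pi^{-\varepsilon}|\rho)$.

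The one step that does not go through as written is your bridge from the policy-level quantity $\Eb_{x,a\sim\pi_t}\bigl[\|\phi(x,a)\|^2_{\hat{\Lambda}_{t-1}^{-1}}\bigr]$ to the realized $\|\phi(x_t,a_t)\|^2_{\hat{\Lambda}_{t-1}^{-1}}$. A plain Azuma bound on the bounded differences gives an additive $\tilde{O}(\sqrt{N_1})$ deviation, so your chain would only yield $\sum_{t}(\hat{\mathtt{U}}^{\pi_t}_{\mathtt{Alg}})^2\le\tilde{O}\bigl(\beta^2(d+\sqrt{N_1})\bigr)$, i.e.\ $C_{\mathtt{Alg}}=\tilde{O}(d+\sqrt{d}\,N_1^{1/4})$, which is not $\tilde{O}(d)$ once $N_1\gg d^2$ and hence does not deliver the stated rates. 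The standard fix is a multiplicative (Freedman-type) concentration for nonnegative increments bounded by $1/\lambda$, whose conditional variance is dominated by $\lambda^{-1}$ times the conditional mean; solving the resulting quadratic gives $\sum_t\Eb_{x,a\sim\pi_t}\bigl[\|\phi(x,a)\|^2_{\hat{\Lambda}_{t-1}^{-1}}\bigr]\le 2\sum_t\|\phi(x_t,a_t)\|^2_{\hat{\Lambda}_{t-1}^{-1}}+\tilde{O}(1)=\tilde{O}(d)$ and recovers $C_{\mathtt{Alg}}=\tilde{O}(d)$. Note that the paper's own proof of \Cref{lemma:bandits eluder appendix} simply keeps the outer expectation and never converts it to a high-probability statement over the realized sum (as \Cref{eluder condition} requires), so on this point you are actually being more careful than the source; you just need the sharper concentration inequality rather than Azuma.
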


Since Lin-UCB is shown to be nearly minimax optimal~\citep{chu2011contextual,he2022reduction}, we can use $\hat{\mathtt{U}}_{\mathtt{Alg}}$ as an approximate of the uncertainty level:
$\mathtt{U}_{\Mc^*}(\pi) \approx \beta  \|\Eb_{x\sim q^*} \Eb_{a\sim\pi(x)}[\phi(x,a)]\|_{\hat{\Lambda}_0^{-1}} $. 
Therefore,  we have 
\begin{align*}
    \mathtt{C}(\pi|\rho) &\approx \frac{ \|\Eb_{x\sim q^*} \Eb_{a\sim\pi(x)}[\phi(x,a)]\|^2_{\hat{\Lambda}_0^{-1}} }{ \|\Eb_{x\sim q^*} \Eb_{a\sim\rho(x)}[\phi(x,a)]\|^2_{\hat{\Lambda}_0^{-1}}  } \\
    &\leq \max_{w:\|w\|=1} \frac{ w^\top \Eb_{x\sim q^*} \Eb_{a\sim\pi(x)}[\phi(x,a)\phi(x,a)^\TT] w }{ w^\top \Eb_{x\sim q^*} \Eb_{a\sim\rho(x)}[\phi(x,a)\phi(x,a)^\TT] w  },
\end{align*}
where the RHS is widely adopted as a concentrability coefficient in linear MDPs or low-rank MDPs~\citep{uehara2021representation,tan2024natural}. This inequality indicates that our result is tighter than existing results, especially for problems with a linear structure.



We note that in multi-armed bandits, our regret can be further simplified to $\tilde{O}(N_1/\sqrt{N_0\min_a\rho(a) + N_1})$, which matches the result in \citet{cheung2024leveraging} order-wisely.

\section{Lower Bounds}\label{sec:lower bound}
In this section, we provide a lower bound for hybrid RL. The lower bound shows the tightness of \Cref{thm:suboptimality} and \Cref{thm:main regret}.

\begin{theorem}\label{thm:lowerbound}
    There exists an MDP instance such that any hybrid RL algorithm must incur a sub-optimality gap in 
    \(\Omega \left( \frac{1}{\sqrt{N_0/\mathtt{C}(\pi^*|\rho) + N_1}} \right),\)
    and regret in 
    \(
    \Omega\left( \frac{N_1}{\sqrt{N_0/\mathtt{C}(\pi^{-\varepsilon}|\rho) + N_1}}\right).\)
\end{theorem}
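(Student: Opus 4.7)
I plan to prove both parts of \Cref{thm:lowerbound} by a two-hypothesis (Le Cam) reduction, adapted to the hybrid setting so that offline and online samples enter the KL-divergence bound in the combination predicted by the concentrability coefficient. Since only \emph{existence} of a hard instance is needed, I will work in the simplest possible setting, namely a multi-armed bandit, where the uncertainty level $\mathtt{U}_{\Mc^*}(\pi)$ in \Cref{def:uncertainty} has a closed form given by classical minimax bounds for mean estimation under biased sampling.

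For the \textbf{sub-optimality gap}, I would construct a pair of two-armed bandit instances $\Mc_\pm$ that differ only in the reward of a designated arm $a^*$, paying $1/2\pm\varepsilon$ in $\Mc_\pm$ while the other arm pays $1/2$, so that the identity of the optimal policy flips between the two instances. Pick a behavior policy that pulls $a^*$ with probability $\rho(a^*)=1/\mathtt{C}$ for the prescribed target $\mathtt{C}=\mathtt{C}(\pi^*|\rho)$. Using the chain rule for KL together with the sub-Gaussian estimate $\mathrm{KL}(\mathcal{N}(\mu_+,1)\|\mathcal{N}(\mu_-,1))\lesssim \varepsilon^2$, the KL between the joint laws of all observations and the output $\hat\pi$ satisfies
\[
\mathrm{KL}(\Pb_{\Mc_+}\|\Pb_{\Mc_-})\;\lesssim\; \varepsilon^2 \bigl(N_0\rho(a^*) + \Eb_{\Mc_+}[N_{a^*}^{\mathrm{on}}]\bigr) \;\le\; \varepsilon^2 (N_0/\mathtt{C}+N_1),
\]
where $N_{a^*}^{\mathrm{on}}\le N_1$ is the number of online pulls of $a^*$. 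Setting $\varepsilon=c/\sqrt{N_0/\mathtt{C}+N_1}$ makes the right-hand side $O(1)$, so by Pinsker's inequality and the Le Cam reduction of sub-optimality to hypothesis testing, at least one of the two instances forces sub-optimality gap $\Omega(\varepsilon)$.

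For the \textbf{regret bound}, I alter the construction so that the hard direction lies on a \emph{sub-optimal} policy: fix $a^*$ as the uniquely optimal arm in both instances (with a constant gap), and introduce a sub-optimal arm $a^-$ whose reward differs by $\Theta(\varepsilon)$ between $\Mc_+$ and $\Mc_-$ while remaining sub-optimal by $\Theta(\varepsilon)$ in each. Each online pull of $a^-$ costs $\Theta(\varepsilon)$ regret, and a KL calculation analogous to the one above forces $\Eb[N_{a^-}^{\mathrm{on}}] = \Omega\bigl((1/\varepsilon^2 - N_0\rho(a^-))_+\bigr)$ in order to identify the correct instance with non-trivial probability. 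Setting $\rho(a^-)=1/\mathtt{C}(\pi^{-\varepsilon}|\rho)$ and optimizing $\varepsilon$ against $N_1$ (via the standard balance $\mathrm{Regret}\gtrsim \varepsilon \cdot N_{a^-}^{\mathrm{on}}$) yields the claimed $\Omega(N_1/\sqrt{N_0/\mathtt{C}(\pi^{-\varepsilon}|\rho)+N_1})$ lower bound, in line with the KL-based bandit lower bounds in \citet{lattimore2020bandit}.

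The main obstacle is verifying that the two-point construction actually realizes the target concentrability coefficient, since $\mathtt{C}(\pi|\rho)$ is defined through a minimax-optimal estimator rather than a count-based coverage ratio. I plan to resolve this by computing $\mathtt{U}_{\Mc^*}(\pi)$ in closed form using the minimax rate $\Theta(1/\sqrt{N_0\rho(a)})$ for mean estimation of arm $a$ under $\rho$-sampling, and then checking algebraically that the ratio $(\mathtt{U}(\pi^*)/\mathtt{U}(\rho))^2$ matches the prescribed $\mathtt{C}$ up to universal constants, and similarly for $\mathtt{C}(\pi^{-\varepsilon}|\rho)$. A secondary calibration is needed to ensure the $\varepsilon$ chosen in the hypothesis-testing step is consistent with the $\varepsilon$ appearing in the definition of $\mathtt{C}(\pi^{-\varepsilon}|\rho)$, which amounts to solving a simple self-consistency equation of the form $\varepsilon=\Theta(1/\sqrt{N_0/\mathtt{C}(\pi^{-\varepsilon}|\rho)+N_1})$.
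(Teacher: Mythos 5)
Your sub-optimality-gap argument is correct and takes a genuinely different, more elementary route than the paper. The paper works with a two-arm \emph{linear contextual} bandit whose parameter $\theta^*$ is drawn uniformly from a sphere, lower-bounds the Bayesian estimation error $\Eb[\|\hat\theta-\theta^*\|^2]$ via a fingerprinting identity (\Cref{lemma: Fingerprinting} and \Cref{lemma: upper bound Z}), and then converts estimation error into gap and per-round regret through the reduction of \citet{he2022reduction}. Your Le Cam two-point construction in a plain two-armed bandit avoids all of that machinery: the divergence decomposition for adaptive sampling does give $\mathrm{KL}\lesssim\varepsilon^2(N_0\rho(a^*)+N_1)$, and your closed-form computation of the uncertainty levels ($\Theta(1/\sqrt{N_0\rho(a)})$ for a single arm versus $\Theta(1/\sqrt{N_0})$ for $V^\rho$) correctly yields $\mathtt{C}(\pi^*|\rho)=1/\rho(a^*)$. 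One bookkeeping point you should make explicit: $\pi^*$, and hence $\mathtt{C}(\pi^*|\rho)$, differs between $\Mc_+$ and $\Mc_-$ (in $\Mc_-$ the optimal arm is the heavily covered one, so its coefficient is $\approx 1$). Since the realized instance's own coefficient is in either case at most $\mathtt{C}$, the proved gap $\Omega(1/\sqrt{N_0/\mathtt{C}+N_1})$ still dominates the claimed $\Omega(1/\sqrt{N_0/\mathtt{C}(\pi^*|\rho)+N_1})$ for whichever hypothesis Le Cam designates as hard; this is harmless but needs to be said.

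The regret construction, as written, has a genuine gap. You keep $a^*$ uniquely optimal in \emph{both} $\Mc_+$ and $\Mc_-$ and only perturb the mean of an arm $a^-$ that remains sub-optimal under both hypotheses. Then the trivial algorithm that always plays $a^*$ incurs zero regret in both instances, so nothing forces it to ``identify the correct instance,'' and no KL argument can force $\Eb[N^{\mathrm{on}}_{a^-}]=\Omega\bigl((1/\varepsilon^2-N_0\rho(a^-))_+\bigr)$: identification is only compelled when the identity of the optimal arm differs across the two hypotheses. (There is also an internal inconsistency: if $a^*$ is optimal ``with a constant gap,'' each pull of $a^-$ costs $\Theta(1)$, not $\Theta(\varepsilon)$.) The standard repair is the Lai--Robbins-style perturbation: in $\Mc_+$ let $a^-$ be sub-optimal by $\varepsilon$, and in $\Mc_-$ raise its mean by $2\varepsilon$ so that it becomes optimal by $\varepsilon$; then low regret in $\Mc_-$ requires distinguishing the instances, which requires $N_0\rho(a^-)+\Eb[N^{\mathrm{on}}_{a^-}]\gtrsim 1/\varepsilon^2$, and those online pulls each cost $\varepsilon$ regret in $\Mc_+$. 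Balancing $\varepsilon\asymp 1/\sqrt{N_0/\mathtt{C}(\pi^{-\varepsilon}|\rho)+N_1}$, together with the self-consistency check you already anticipate, then yields the claimed bound. With that fix your route goes through and, unlike the paper's single linear-bandit instance (in which the two concentrability coefficients are comparable), it cleanly decouples the roles of $\mathtt{C}(\pi^*|\rho)$ and $\mathtt{C}(\pi^{-\varepsilon}|\rho)$ in the two parts of the theorem.
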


\if{0}
\begin{remark}
    We remark that the regret lower bound is not a contradiction with \Cref{thm:main regret}. Note that our instance is a linear contextual bandit with uniformly generated feature vectors. Thus, even if the offline behavior policy only covers sub-optimal policies, the offline dataset includes rich information about optimal policy. Therefore, in this setting, we have $\mathtt{C}(\pi^*|\rho)\leq O(\mathtt{C}(\pi^{-\varepsilon}|\rho) )$. 
\end{remark}
\fi

{\it Proof sketch.} Our proof is built upon a 2-arm linear contextual bandit instance specified in \citet{he2022reduction}. It has been shown that the regret per episode or the sub-optimality gap is determined by the estimation error of parameter $\theta^*$. Then, we show that the estimation error $\|\hat{\theta} - \theta^*\|_2^2=\mathtt{U}^{\pi^*}(\Dc)^2$ scales in the order of $\Theta(1/\Eb_{\Dc}[(\phi(x,a)^\TT\theta^*_{\bot})^2])$, where $\Dc$ is the available dataset, and $\theta_{\bot}^*$ is orthogonal to $\theta^*$. By choosing $\Dc=\Dc_0$, we have $\mathtt{U}^{\pi^*}(\Dc_0) = 1/\Eb_{\Dc_0}[(\phi(x,a)^\TT\theta^*_{\bot})^2]$. Therefore, $\mathtt{C}(\pi^*|\rho) = N_0/\Eb_{\Dc_0}[(\phi(x,a)^\TT\theta^*_{\bot})^2]$. By choosing $\Dc=\Dc_0\cup\Dc_{t-1}$ at each episode $t$, we conclude that the regret per episode scales in $\Omega(1/\sqrt{N_0\mathtt{C}(\pi^*|\rho) + t}).$

\section{Experimental Results}\label{sec:main exp}

\begin{figure*}[h!]

\begin{minipage}[b]{.19\linewidth}
    \centering
    \begin{tabular}{c|c|c}
        $\rho$ & $\mathtt{C}(\pi^*|\rho)$ & $k$ \\
        \hline
        $\rho_1$  & $1.0$ & $\infty$ \\
        $\rho_2$  & $1.7$ & $5$      \\
        $\rho_3$  & $8.7$ & $-10$
        \end{tabular}
    \subcaption{CE in Bandits}
  \end{minipage}
  \hfill
  \begin{minipage}[b]{.19\linewidth}
    \centering
    \includegraphics[width=0.95\linewidth]{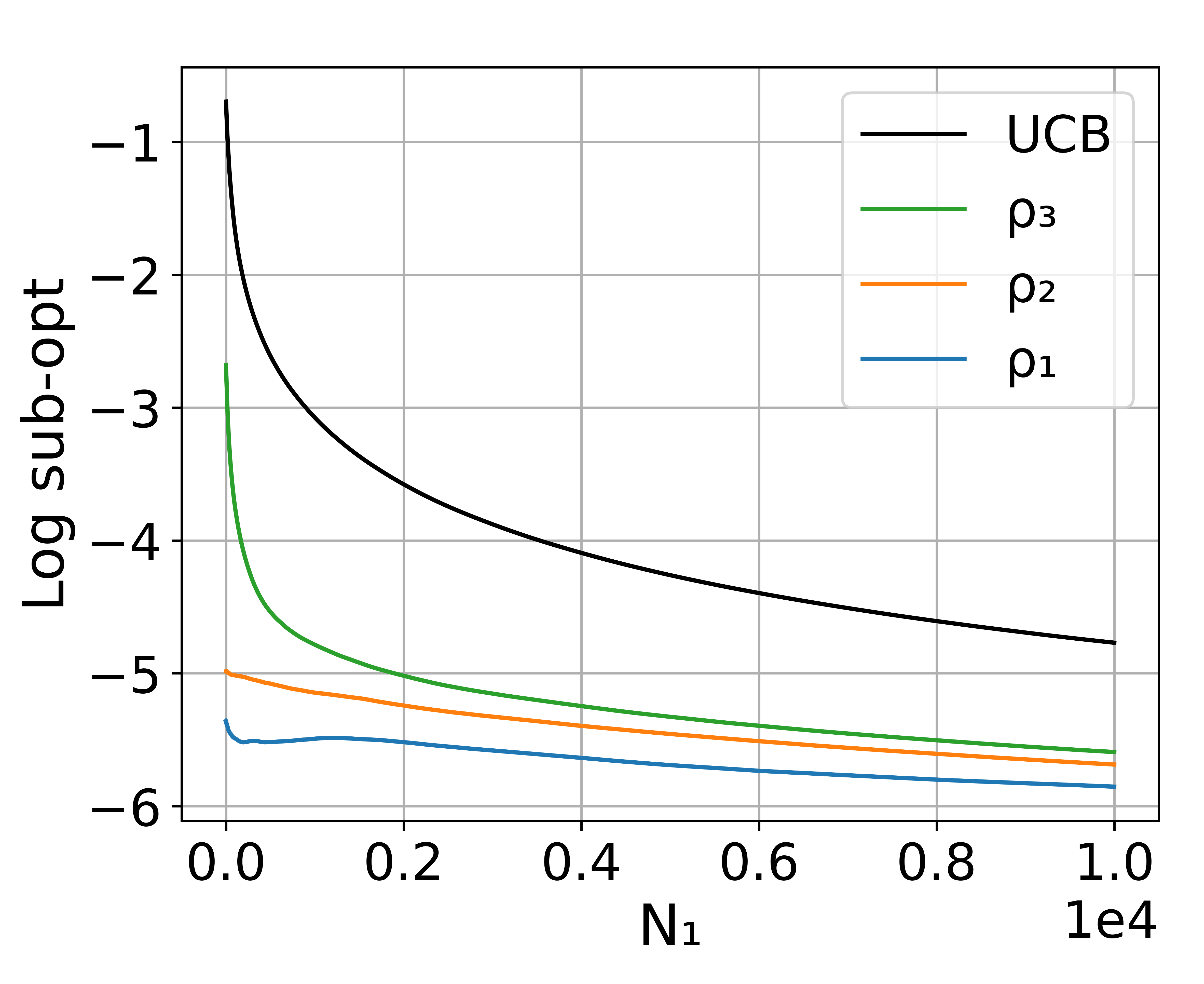}
    \subcaption{SOG v.s. $\rho$} 
    \label{fig:1_1}
  \end{minipage}
  \hfill
  \begin{minipage}[b]{.19\linewidth}
    \centering
    \includegraphics[width=0.95\linewidth]{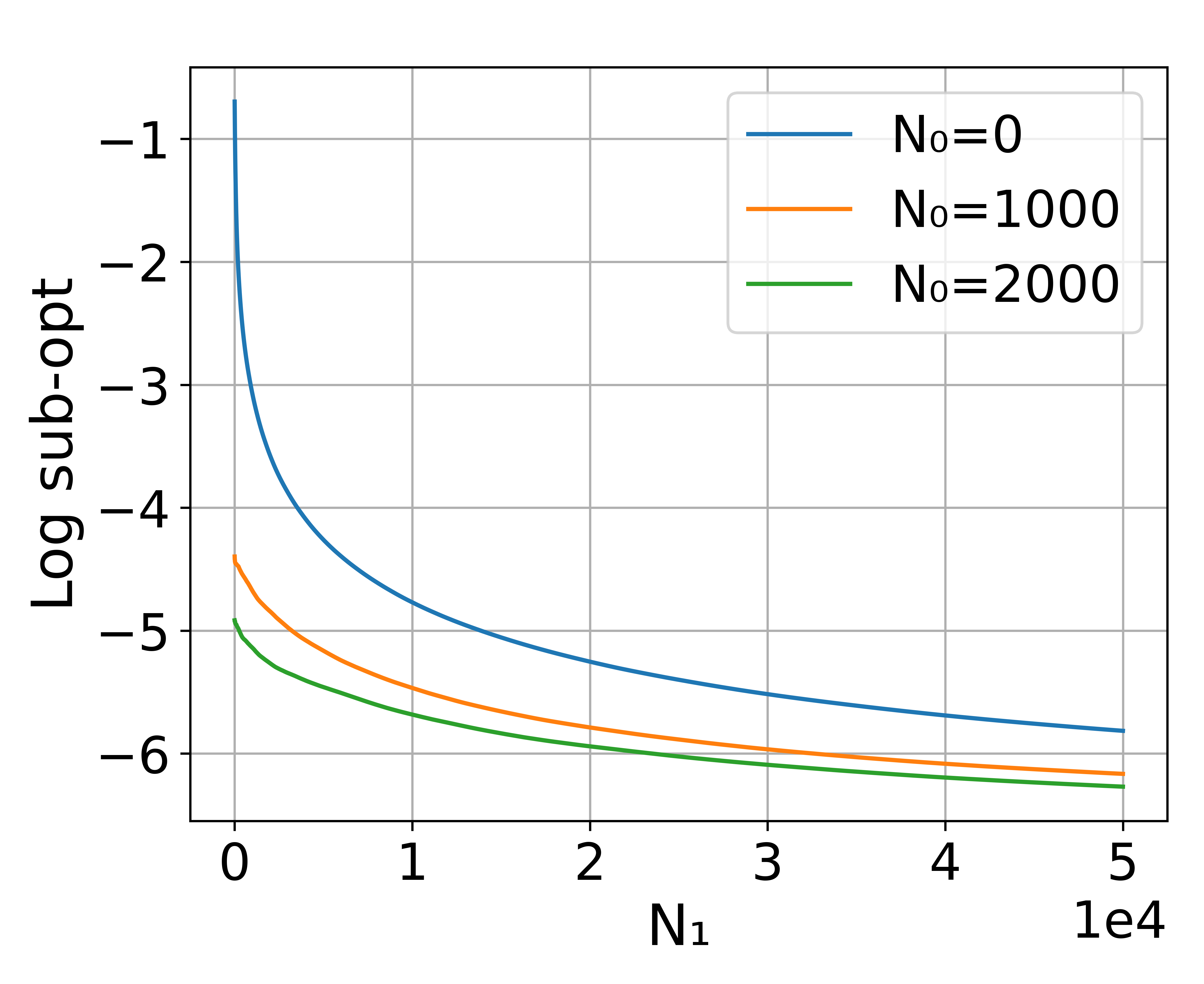} 
    \subcaption{SOG v.s. $N_0$}
    \label{fig:1_2}
  \end{minipage}
  \hfill
  \begin{minipage}[b]{.19\linewidth}
    \centering
    \includegraphics[width=0.95\linewidth]{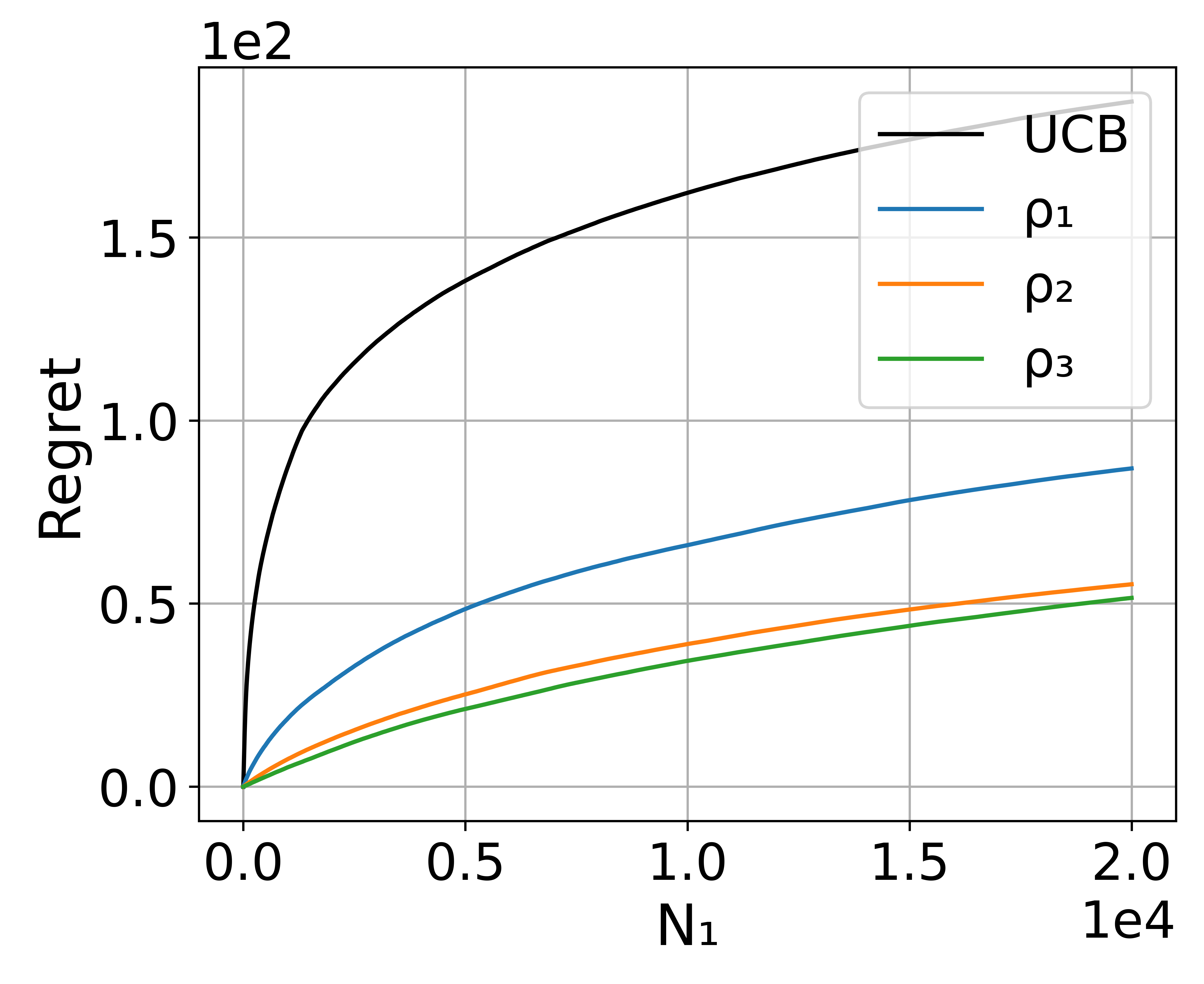} 
    \subcaption{Regret v.s. $\rho$}
    \label{fig:1_3}
  \end{minipage}
   \hfill
  \begin{minipage}[b]{.19\linewidth}
    \centering
    \includegraphics[width=0.95\linewidth]{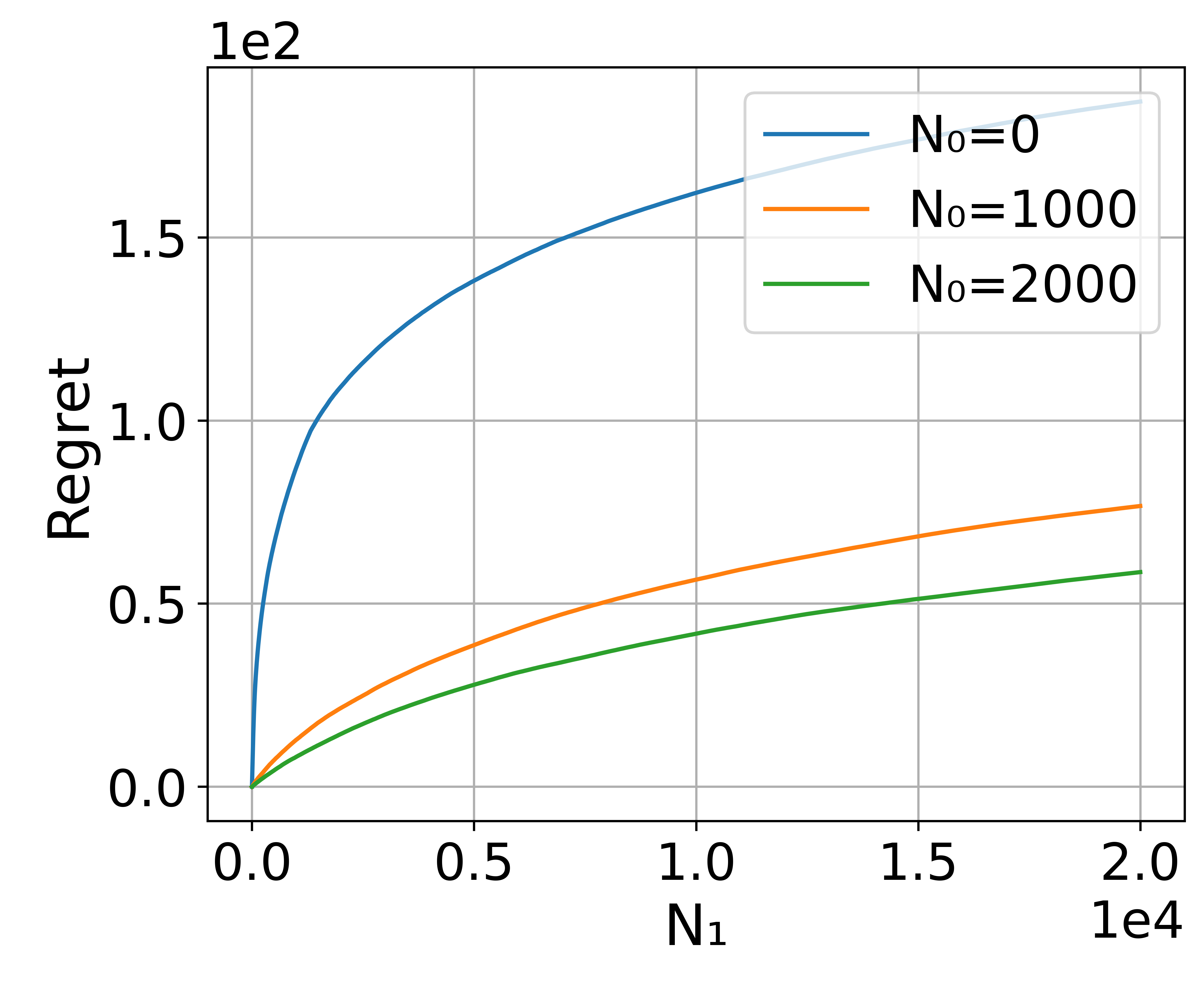} 
    \subcaption{Regret v.s. $N_0$}
    \label{fig:1_4}
  \end{minipage}

  \begin{minipage}[b]{.19\linewidth}
    \centering
    \begin{tabular}{c|c|c}
        $\rho$ & $\mathtt{C}(\pi^*|\rho)$ & $k$\\
        \hline
        $\rho_1$  & $1.0$ & $\infty$ \\
        $\rho_2$  & $2.2$ & $2$ \\
        $\rho_3$  & $4.8$ & $0$
        \end{tabular}
    \subcaption{CE in MDPs}
  \end{minipage}
  \hfill
  \begin{minipage}[b]{.19\linewidth}
    \centering
    \includegraphics[width=0.95\linewidth]{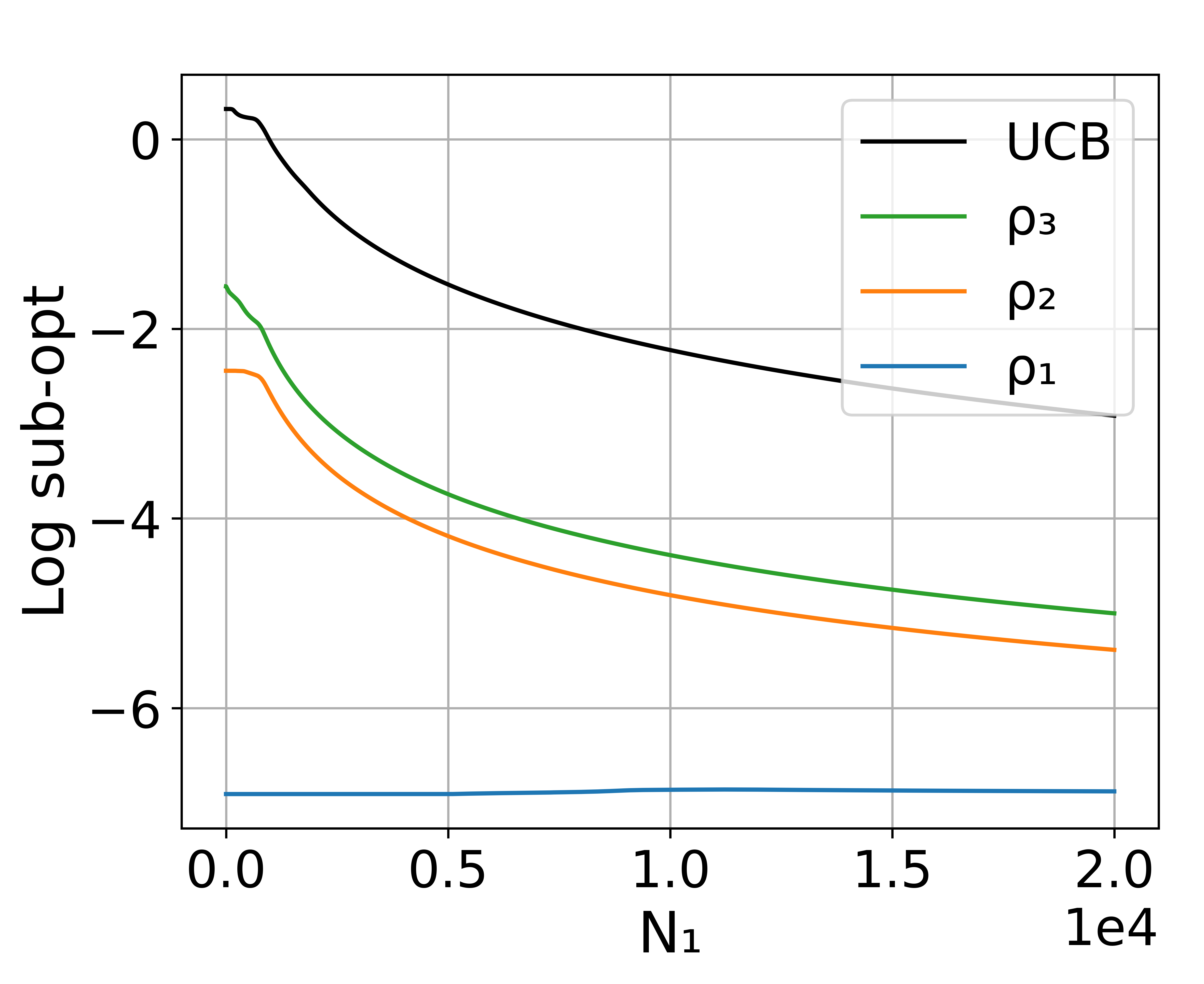} 
    \subcaption{SOG v.s. $\rho$}
    \label{fig:2_1}
  \end{minipage}
  \hfill
  \begin{minipage}[b]{.19\linewidth}
    \centering
    \includegraphics[width=0.95\linewidth]{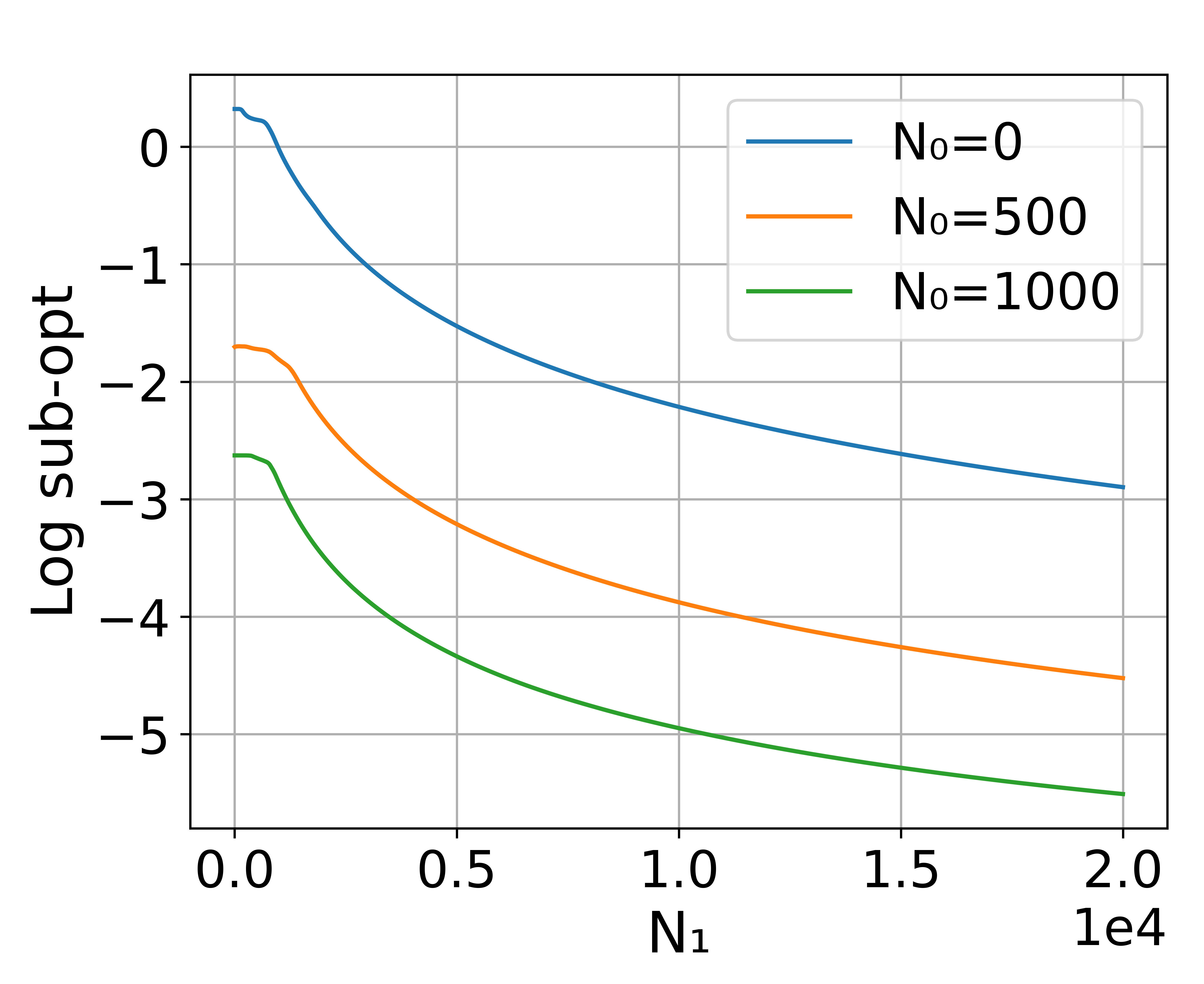} 
    \subcaption{SOG v.s. $N_0$}
    \label{fig:2_2}
  \end{minipage}
  \hfill
  \begin{minipage}[b]{.19\linewidth}
    \centering
    \includegraphics[width=0.95\linewidth]{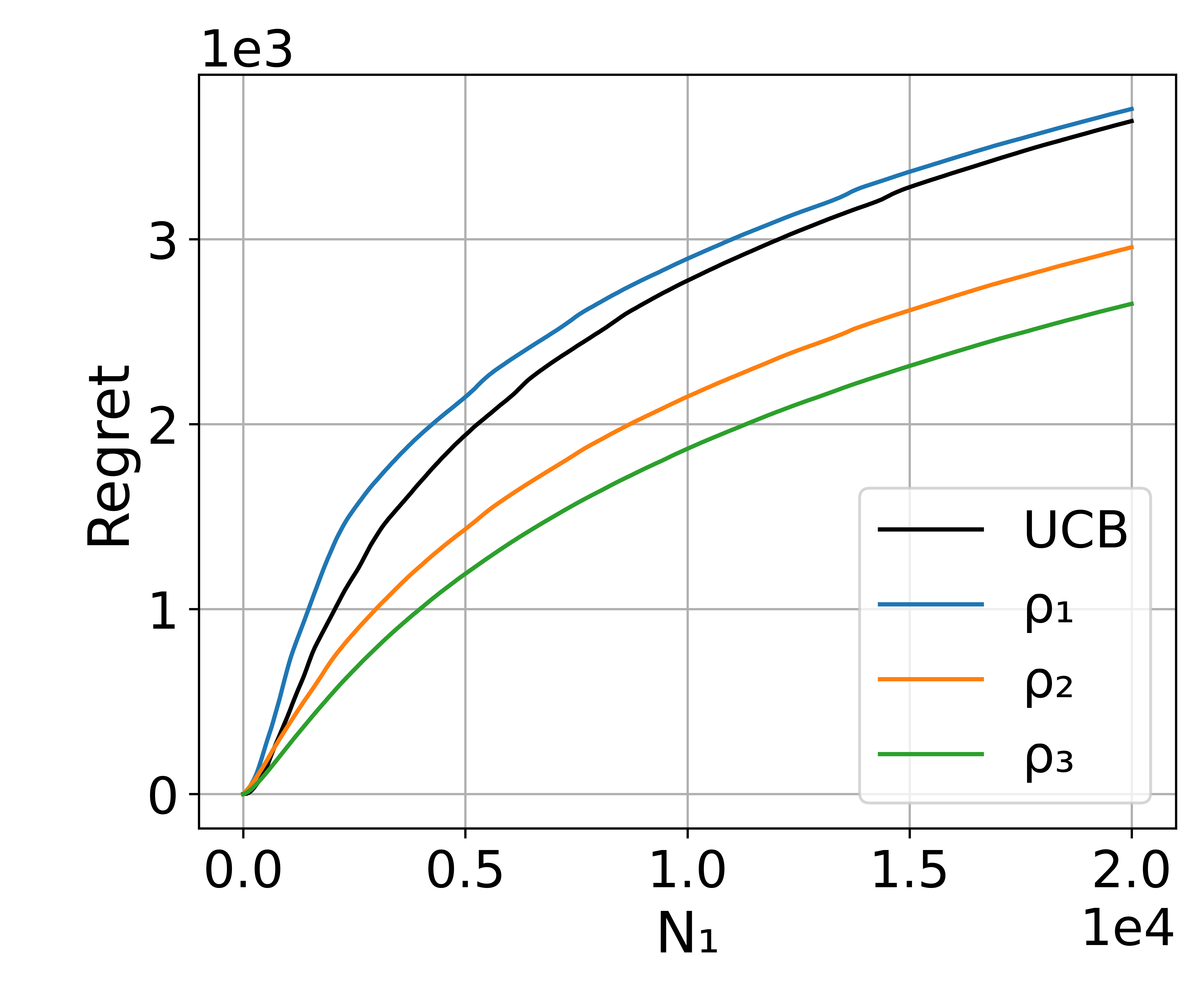} 
    \subcaption{Regret v.s. $\rho$}
    \label{fig:2_3}
  \end{minipage}
   \hfill
  \begin{minipage}[b]{.19\linewidth}
    \centering
    \includegraphics[width=0.95\linewidth]{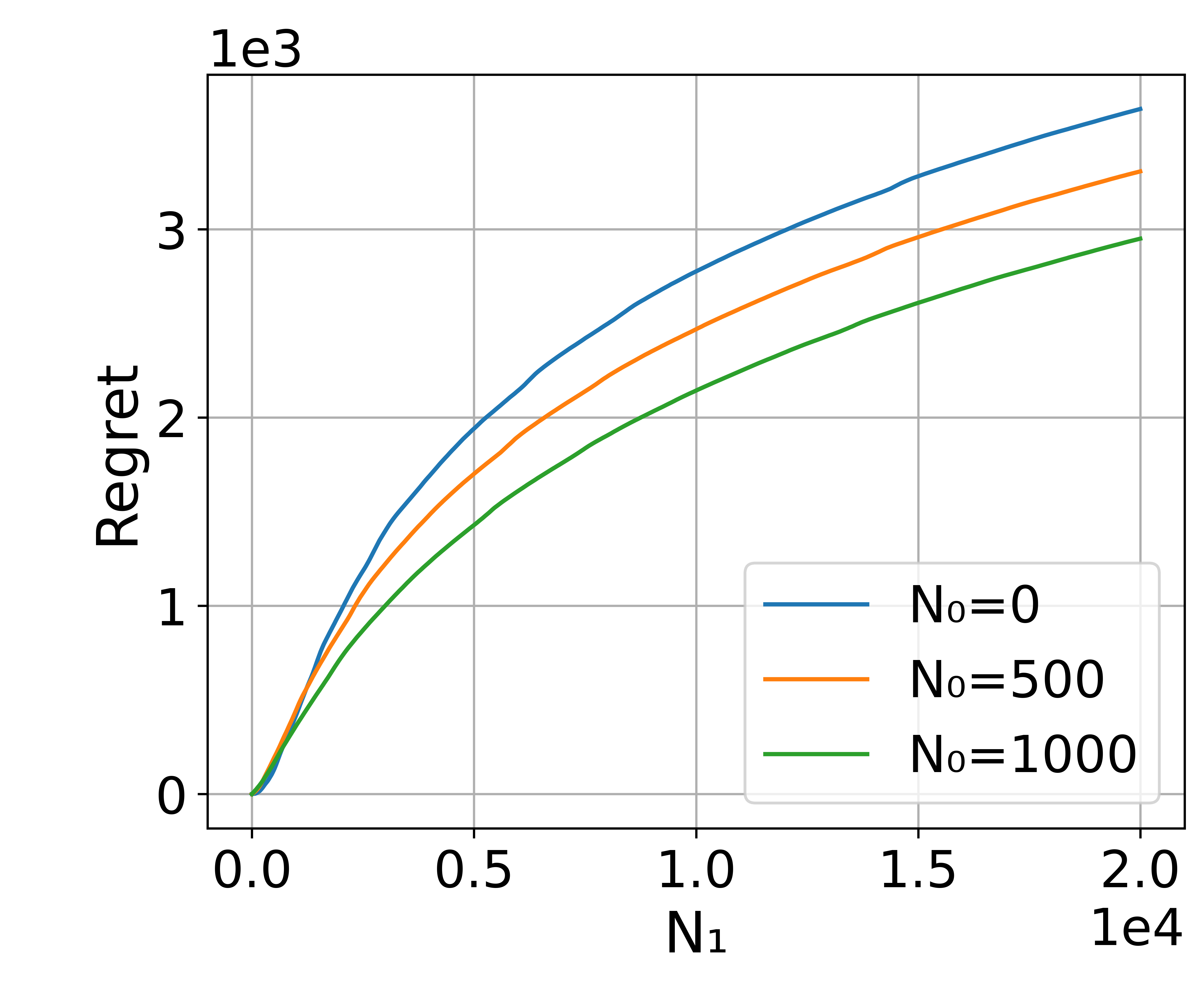} 
    \subcaption{Regret v.s. $N_0$}
    \label{fig:2_4}
  \end{minipage}
  \caption{Experimental results on sub-optimality gap (SOG) and regret for different behavior policies and $N_0$. Figures (a) and (f) show the concentrability coefficients (CE) of three different behavior policies in linear contextual bandits and MDPs, respectively. Figures (b)-(e) are the results on linear contextual bandits. Figures (g)-(j) are results on tabular MDPs. }
  \label{fig:1}
\end{figure*}

In this section, we evaluate the performances of the proposed algorithms in synthetic environments. 
Additional experimental results are provided in \Cref{append:experiment},  including evaluations in a contextual linear bandit constructed from the MovieLens dataset~\citep{harper2015movielens} and a tabular MDP discretized from the Mountain Car environment~\citep{Moore90efficientmemory-based}, implemented in Gymnasium \citep{towers2024gymnasium}.
All of the experiments are conducted on a server equipped with an AMD EPYC 7543 32-core processor and 256GB memory. No GPUs are used. We believe the experiments are also easy to replicate on common PCs.

\textbf{Environment.} We consider two types of environments as described in \Cref{sec:eg}: linear contextual bandits and tabular MDPs.

For the linear contextual bandits, we set $|\Xc|=20$, $|\mathcal{A}|=100$ and $d=10$. At each episode $\{\phi(x,a)\}_{a\in\mathcal{A}}$ is randomly drawn from the unit sphere $\mathcal{S}^{d-1}$. We set $\theta^*$ as a unit vector with the first element being 1. 
In addition, the reward is given by $r_t = \phi(x_t,a_t)^\TT\theta^* + \xi_t$, where $\xi_t\in[-1,1]$ is independently sampled from the uniform distribution.

For tabular MDPs, we set $H=3$, $|\Xc|=5$ and $|\mathcal{A}|=10$. The initial states are uniformly and randomly chosen at each episode.
The transition probability $P_h(\cdot|s,a)$ at each step and state-action pair is uniformly sampled from the probability simplex.
The reward $r_h(s,a)$ is uniformly generated from $[0,1]$ and is assumed to be known to the agent for simplicity.

\textbf{Offline Dataset Collection.} 
We adopt the Boltzmann policy \citep{szepesvari2022algorithms} as the behavior policy. Under the Boltzmann policy, actions are taken randomly according to $\rho_h(a|x) = \frac{\exp\{k Q_h(x,a)\}}{\sum_{a\in \mathcal{A}} \exp\{k Q_h(x,a)\}}$, where $k\in\Rb$, and $Q_h(x,a)$ is the optimal Q-value function starting from $(x,a)$ at time step $h$. Note that a larger $k$ makes $\rho$ closer to the optimal policy, and therefore makes $\mathtt{C}(\pi^*|\rho)$ smaller. In particular, $Q_1(x,a)=r(x,a)$ in linear contextual bandits.


We consider three behavior policies, denoted as $\rho_1$, $\rho_2$, and $\rho_3$, by setting different $k$ of the Boltzmann policy. In \Cref{fig:1}(a) and (f), we list the values of $k$ used to generate the Boltzmann policy and the concentrability coefficient $\mathtt{C}(\pi^{-\epsilon}|\rho)$ for the two environments. 
When $k=\infty$, $\rho_1$ is the optimal policy, so it has the best coverage of the optimal policy, and $\mathtt{C}(\pi^*|\rho)=1$. As $k$ decreases, the policy becomes further away from the optimal policy, thus $\mathtt{C}(\pi^*|\rho)$ increases. In addition, in both environments, we ensure that  $\rho_2$ and $\rho_3$ are sub-optimal polices and thus, $\mathtt{C}(\pi^{-\varepsilon}|\rho_1) > \mathtt{C}(\pi^{-\varepsilon}|\rho_2) > \mathtt{C}(\pi^{-\varepsilon}|\rho_3)$.

Finally, in \Cref{fig:1_1} and \Cref{fig:2_1}, we fix the offline dataset size $N_0$ as $2000$ and $1000$, respectively. In \Cref{fig:1_2} and \Cref{fig:2_2}, we fix the behavior policy as $\rho_2$.


\textbf{Results.} We present the experiment results in \Cref{fig:1}. For both environments, we evaluate the sub-optimality gap and the regret with different offline behavior policies and varying numbers of offline trajectories. For each environment, we conduct $100$ trials and plot the sample average sub-optimality gap or regret as a function of online time steps $N_1$. 
The baseline is the pure UCB algorithm without any offline dataset.

\textit{Sub-optimality gap.} Our experimental results confirm our theoretical findings in \Cref{thm:suboptimality}. Specifically, augmentation using offline data exhibits superior performance than pure online results in both environments. More importantly, the smaller the concentrability coefficient $\mathtt{C}(\pi^*|\rho)$ (\Cref{fig:1_1,fig:2_1}), or the larger the offline dataset size (\Cref{fig:1_2,fig:2_2}), the smaller the sub-optimality gap under the same amount online episodes.


\textit{Regret.}
We then compare the results of regret minimization in both the contextual linear bandit and the tabular MDP. Recall that while $\mathtt{C}(\pi^*|\rho_1) < \mathtt{C}(\pi^*|\rho_2) < \mathtt{C}(\pi^*|\rho_3)$, we have $\mathtt{C}(\pi^{-\varepsilon}|\rho_1) > \mathtt{C}(\pi^{-\varepsilon}|\rho_2) > \mathtt{C}(\pi^{-\varepsilon}|\rho_3)$. As a result, \Cref{fig:1_3,fig:2_3} shows that the regret decreases as the $\mathtt{C}(\pi^{-\varepsilon}|\rho)$ decreases and \Cref{fig:1_4,fig:2_4} shows that larger offline dataset size leads to smaller regret. These experimental results align with our theoretical findings in \Cref{thm:main regret}. Finally, we remark that in \Cref{fig:2_3}, the pure online algorithm is slightly better than the hybrid algorithm with offline policy being the optimal policy. This outcome arises because the hybrid algorithm will prioritize exploring actions that are less explored in the offline dataset. When the offline dataset primarily consists of optimal actions, the hybrid algorithm takes sub-optimal actions more frequently than pure online algorithms, leading to slightly higher regret. Nevertheless, the performance is still comparable with the baseline and aligns with \Cref{thm:main regret}.

\textit{Key insight.} The contrasting performances of the same behavior policy under sub-optimality gap minimization and regret minimization problems highlight the need for different offline datasets for these two tasks. Specifically, if the objective is to find a near-optimal policy and the cost of online exploration is negligible, then an offline dataset that focuses on covering the optimal policy is sufficient. However, if the goal is to minimize the regret, it is more effective to collect offline data using various sub-optimal policies rather than the optimal policy.

\if{0}
and compare the performances of the algorithms in \Cref{fig:1_3} and \Cref{fig:2_3}, respectively.  We note for our algorithm with offline datasets collected under the three behavior policies, the regret increases in the order of $\rho_3$, $\rho_2$ and then $\rho_1$, where $\rho_1$ corresponds to the optimal policy. This indicates that datasets collected under good policies (in terms of closeness to the optimal policy) may not be the most beneficial for regret minimization. This finding aligns with our theoretical results in \Cref{thm:main regret}, which states that achieving small regret
requires a small maximum concentrability coefficient of sub-optimal policies (i.e., $\mathtt{C}(\pi^{-\varepsilon}|\rho)$) is small. 
Computing $\mathtt{C}(\pi^{-\epsilon}|\rho)$ is challenging due to the difficulty in identifying the most challenging suboptimal policy. However, the coefficient tends to be large when $\mathtt{C}(\pi^*|\rho)$ is small. This relationship arises because the better a policy covers the optimal policy, the more challenging it becomes for that policy to encompass all suboptimal policies. Thus, we qualitatively use the opposite ordering of $\mathtt{C}(\pi^*|\rho)$ to characterize the ordering of $\mathtt{C}(\pi^{-\epsilon}|\rho)$.
Among those three behavior policies, the ascending order of $\mathtt{C}(\pi^*|\rho)$ is $\rho_1,\rho_2,\rho_3$, which approximates the descending ordering of $\mathtt{C}(\pi^{-\epsilon}|\rho)$.
As shown in \Cref{fig:1_3} and \Cref{fig:2_3}, the order of regret, from highest to lowest, is $\rho_1,\rho_2,\rho_3$, which is consistent with \Cref{thm:main regret}.
Besides, the regret under the pure UCB algorithm is generally higher than our hybrid algorithm with the offline datasets, indicating the benefit of including offline data for regret minimization. 
When the behavior policy is $\rho_1$ in the MDP environment, the regret of hybrid learning is slightly higher than that of the pure online UCB algorithm. 
This is reasonable because $\rho_1$ is the optimal policy. When the optimal policy collects the offline dataset, the online UCB algorithm will tend to explore where the optimal policy does not cover well, leading to a marginally higher regret.
Additionally, \Cref{fig:1_4} and \Cref{fig:2_4} compare the regrets with varying numbers of offline dataset size $N_0$.
It clearly demonstrates that increasing $N_0$ reduces the online learning regret, corroborating the theoretical results in \Cref{thm:main regret}. 
\fi

\section{Conclusion} 
In our paper, we developed a general hybrid RL framework to minimize the sub-optimality gap and the online learning regret. 
The framework achieves performance bounds of $\tilde{O}(1/\sqrt{N_0/\mathtt{C}(\pi^*|\rho) + N_1} )$ for the sub-optimality gap and $\tilde{O}(\sqrt{N_1}\sqrt{N_1/(N_0/\mathtt{C}(\pi^{-\varepsilon}|\rho) + N_1)})$ for the regret, where $\mathtt{C}(\pi^*|\rho)$ and $\mathtt{C}(\pi^{-\varepsilon}|\rho)$ are two concentrability coefficients for optimal policy and sub-optimal policies, respectively. Our results demonstrate the benefits of integrating offline data with online interactions. More importantly, the same behavior policy $\rho$ leads to different performances in sub-optimality gap and regret minimization. Our experimental results corroborated our theoretical findings. In addition, we particularized our framework to two specific settings: the contextual linear bandit setting and the tabular MDP setting.
We also derived lower bounds for the hybrid RL problem, showing that our approach is nearly optimal.
Our results highlight the advantages of leveraging offline datasets for more efficient online learning and provide insights into the selection of offline datasets and policies for different online tasks. 


\begin{acknowledgements}
     The work of R. Huang, D. Li and J. Yang was supported in part by the U.S. National Science Foundation (NSF) under the grants CNS-1956276 and CNS-2114542. The work of C. Shen was supported in part by NSF awards 2029978, 2002902, and 2143559
\end{acknowledgements}

\bibliography{main}

\newpage
\appendix
\onecolumn

\title{Augmenting Online RL with Offline Data is All You Need:\\ A Unified Hybrid RL Algorithm Design and Analysis \\(Supplementary Material)}
\maketitle


\section{Related Works}
\label{sec:related_works}

\textbf{Offline Reinforcement Learning (RL).} It has been known that the performance of offline RL depends critically on the concept of ``coverage''. Early works \citep{munos2008finite,ross2012agnostic,chen2019information,duan2020minimax} largely assume full coverage, which suggests that the data generated by offline policy can cover the data distribution generated by any policy. This restriction has been alleviated by recent results \citep{kumar2020conservative,jin2021pessimism,rashidinejad2021bridging,xie2021bellman,zanette2021provable,uehara2022pessimistic}. These results provide a better understanding of the pessimism principle under partial coverage assumption, which only requires the offline data to cover the data distribution of the optimal policy or a comparator policy. 

\textbf{Hybrid RL.} 
Several recent works have studied the hybrid policy learning problem for reinforcement learning. 
\citet{xie2021policy} focus on the sample complexity improvement in episodic tabular MDPs when both offline and online datasets are used. 
They show the single-policy concentrability coefficient $C^*$ between offline behavior policy and optimal policy is crucial, and the sample complexity required to achieve an $\epsilon$-optimal policy is $\tilde{O}(H^3 S \min{(A, C^*)}/\epsilon^2)$, where $H, S,A$ are the episodic length, number of states, and number of actions of the MDPs, respectively. {While it achieves the best sample complexity in both offline and online RL,} it requires the knowledge of the concentrability and the access of the behavior policy. 
\citet{li2023rewardagnostic} also consider tabular MDPs with a fixed offline dataset and do not assume the knowledge of $C^*$.
They introduce a new single-policy partial concentrability coefficient $C^*(\sigma)$ which generalizes the original $C^*$ by allowing the behavior policy only cover a proportion $\sigma$ of the state-action pairs. 
With the new concentrability coefficient and an imitation approach, the show that an $\epsilon$-optimal policy can be obtained if $N_0+N_1\ge \tilde{O}(H^3SC^*(\sigma)/\epsilon^2)$ and $N_1 \ge \tilde{O}(H^3S\min(H\sigma, 1)/\epsilon^2)$ for a $\sigma\in[0,1]$, where $N_0$ and $N_1$ are the numbers of offline and online samples, respectively.
By choosing proper $\sigma\in[0,1]$, such sample complexity outperforms pure online and offline RL algorithms.

Beyond the tabular setting, \citet{wagenmaker2023leveraging} consider linear MDPs and also design a new online-to-offline concentrability coefficient $C_\text{o2o}$. 
Compared with the single-policy concentrability coefficient $C^*$, the coefficient $C_\text{o2o}$ considers not only the coverage from the offline dataset but also the coverage of the potential online dataset.
Under the assumptions that the offline dataset has good coverage and the number of online samples is not large, they show the leading term of the sample complexity can be reduced from $\tilde{O}(1/\epsilon^2)$ to $\tilde{O}(1/\epsilon^{8/5})$.

There are also works on hybrid RL with general function approximation.
\citet{song2022hybrid} propose hybrid Q-learning, which achieves regret in regret $\tilde{O}(\max\{C, 1\} \sqrt{d N_1})$ and shows empirical advantages in Atari environments. However, when the offline dataset does not have good coverage, the coverage coefficient $C$ will be much greater than $1$ and the regret could be worse than pure online learning, which is also mentioned in \citet{wagenmaker2023leveraging}.
\citet{tan2024natural} study a Global Optimism based on Local Fitting (GOLF) ~\citet{jin2021bellman}-based algorithm for hybrid RL with general Q-function approximation, and show that it can achieve a regret of $\tilde{O}(\sqrt{d N_1^2/N_2}+\sqrt{d N_1})$ in stochastic linear bandits. We also note a concurrent work~\citet{tan2024hybrid} studied the sub-optimality gap and regret simultaneously in linear MDPs. However, their results relies on all-policy concentrability coefficient and is less general than our results.

\textbf{Online Bandits.} 
The study of multi-armed bandit problems traces back to the original work by \citet{thompson1933likelihood} for adaptive clinical trials. 
Many classical algorithms have been proposed, including Thompson Sampling \citep{chapelle2011empirical,agrawal2012analysis}, 
and the family of Upper Confidence Bound (UCB) algorithms \citep{Lai:1985,Auer:2002,audibert2009exploration,abbasi2011improved,garivier2011kl,cappe2013kullback}. 
These algorithms balance the intrinsic exploration-exploitation tradeoff and achieve the optimal learning regret. 
The linear bandit model, as a generalization of finite armed bandits with linear reward structure, has also been well studied. 
\citet{Auer:2002} extend the UCB algorithm to stochastic linear bandits problem and achieve regret $\tilde{O}(\sqrt{N_1})$ over time horizon $N_1$. 
\citet{dani2008stochastic,abbasi2011improved} further match the lower bound $\Omega(d\sqrt{N_1})$ of \citet{dani2008stochastic} up to logarithmic factors, where $d$ is the feature dimension.
The celebrated LinUCB has been proposed and analyzed in \citet{li2010contextual}, which consider linear contextual bandits and achieve regret $\tilde{O}(\sqrt{K d N_1})$ for $K$-armed disjoint linear model.
Other than UCB-type algorithms, \citet{agrawal2013thompson} use Thompson Sampling in linear contextual bandit and achieves regret $\tilde{O}(d^2\sqrt{N_1})$.
Additionally, \citet{soare2014best, jedra2020optimal} use optimal design for best-arm identification with $\tilde{O}(d/\epsilon^2)$ samples. \citet{yang2022minimax} and \citet{wagenmaker2021experimental} utilize optimal design in linear bandits to achieve minimax optimal results for fixed-budget best-arm identification and regret minimization, respectively.

\textbf{Offline Bandits.} 
Research on offline bandits has been limited. \citet{rashidinejad2021bridging} develop a pessimism-based algorithm and match their information-theoretic lower bound on the sub-optimality gap $\Omega\left(\sqrt{S C^*/N_0}\right)$ for finite-arm finite-context bandits, where $C^*$ is the concentrability coefficient $C^*$ representing the coverage of the offline dataset on the optimal policy, $S$ is the number of the contexts and $N_0$ is the size of the offline dataset. 
\citet{li2022pessimism} propose a family of pessimistic learning rules for offline linear contextual bandits and pprove that the suboptimality gap scales in $\tilde{O}(\sqrt{d C^\star/N_0})$ for fixed contexts.

\textbf{Hybrid Bandits.}
Hybrid policy learning in bandits is related to the \emph{warm-start} bandits problem. 
Several works \citep{li2010contextual, sharma2020warm, silva2023user, zhang2019warm} investigate utilizing the offline data to improve the online performance under different settings. 
\citet{shivaswamy2012multi} study stochastic bandit with finite arms, where 
the offline dataset is utilized to approximate the confidence bound in UCB algorithms. It shows that the number of pulling a sub-optimal arm $a$ with reward gap $\Delta_a$ scales as $\tilde{O}(\max(0, 8/\Delta^2_a - N_{0,a}))$, where $N_{0,a}$ is the number of times pulling arm $a$ in the offline dataset. 
\citet{oetomo2023cutting} augment the existing Thompson Sampling algorithms by initialing the covariance matrix and reward vector in linear contextual bandits using the offline data. They achieve regret $\tilde{O}(\sqrt{N_1\log((\det(V_{N_0+N_1}))/\det(V_{N_0}))})$, where $V_{N_0+N_1}$ and $V_{N_0}$ are the covariance matrices constructed with both online and offline datasets and offline dataset only, respectively. Since the improvement is logarithmic, the regret advantage is marginal.
\citet{agrawal2023optimal} study the lower bound of the $\delta$-correct best-arm identification in stochastic $K$-armed bandits given the offline dataset generated by an unknown policy, and further design an algorithm whose instant-dependent sample complexity matches the lower bound.
Beyond these, \citet{cheung2024leveraging} consider generalized hybrid learning in tabular stochastic bandits where the offline dataset can have different reward distributions than the online environment. 
When both distributions match, their result reduces to that of \citet{shivaswamy2012multi}, but the work demonstrates the transferability of hybrid learning.
Our paper focuses on the hybrid stochastic linear bandit problem, and we develop algorithms that can simultaneously have better sub-optimality and regret than online or offline learning, which has not been studied before.

\if{0}

\begin{algorithm}
\caption{UCB-VI}
\label{alg:ucbvi}
    \begin{algorithmic}
        \STATE {\bfseries Initialize:} $\hat{P}_{t-1}$, $\hat{r}_{t-1}$, $N_h^{(t-1)}:\Xc\times \mathcal{A}\rightarrow \mathbb{N}$.
        \FOR{$t=1,\ldots, N_1$}
        \STATE Choose policy 
        \ENDFOR
    \end{algorithmic}
\end{algorithm}

\begin{algorithm}
\caption{Lin-UCB}
\label{alg:linucb}
    \begin{algorithmic}
        \STATE {\bfseries Initialize} $\hat{\Lambda}_0$, $b_0$ $\beta$
        \FOR{$t=1,\ldots, N_1$}
        \STATE Compute $\hat{\theta}_{t-1} = \hat{\Lambda}_{t-1}^{-1}b_{t-1}$
        \STATE Observe context $x_t$, select $a_t = \arg\max_{a\in\mathcal{A}} \phi(x_t,a)^\TT\hat{\theta}_{t-1} + \beta\|\phi(x_t,a)\|_{\hat{\Lambda}_{t-1}^{-1}}$, receive reward $r_t$.
        \STATE Update $\hat{\Lambda}_t = \hat{\Lambda}_{t-1} + \phi(x_t,a_t)\phi(x_t,a_t)^\TT$, $b_{t} = b_t + \phi(x_t,a_t)r_t$.
        \ENDFOR
    \end{algorithmic}
\end{algorithm}

\begin{algorithm}
\caption{BPI-UCB}
\label{alg:bpiucb}
    \begin{algorithmic}
        \STATE {\bfseries Initialize:} $\hat{\Lambda}_0$, $b_0$ $\beta$
        \FOR{$t=1,\ldots, N_1$}
        \STATE Observe context $x_t$, select $a_t = \arg\max_{a\in\mathcal{A}}\|\phi(x_t,a)\|_{\hat{\Lambda}_{t-1}^{-1}}$, receive reward $r_t$.
        \STATE Update $\hat{\Lambda}_t = \hat{\Lambda}_{t-1} + \phi(x_t,a_t)\phi(x_t,a_t)^\TT$, $b_{t} = b_t + \phi(x_t,a_t)r_t$.
        \ENDFOR
        \STATE Compute $\hat{\theta} = \hat{\Lambda}_{N_1}^{-1}b_{N_1}$
        \STATE {\bfseries Output:} $\pi(x_t) = \arg\max_{a\in\mathcal{A}} \phi(x_t,a)^\TT\hat{\theta}_{t-1} - \beta\|\phi(x_t,a)\|_{\hat{\Lambda}_{t-1}^{-1}} $
    \end{algorithmic}
\end{algorithm}
\fi

\section{Missing Proofs of Main Results}
In this work, our analysis is built upon high-probability bound. That is, all inequalities and equations hold with probability at least $1-\delta$, where $\delta>0$ can be arbitrarily small with a $O(\log(1/\delta))$-factor blow-up in inequalities. 

In particular, by taking $\hat{V}^{\rho} = \frac{1}{|\Dc_0|}\sum_{\tau\in\Dc_0} \sum_{h=1}^H r_h(\tau)$, we obtain a near-optimal upper bound for $\mathtt{U}(\rho|\Dc_0)$ through Azuma-Hoeffding inequality. Mathematically, the following inequality holds with probability at least $1-\delta$.
\[ \mathtt{U}(\rho|\Dc_0) \leq \sqrt{ \frac{2\log(1/\delta)}{N_0} }  =  \tilde{O}(1/\sqrt{N_0}). \]

\begin{theorem}[Restatement of \Cref{thm:suboptimality}]\label{thm:gap appendix}
    Let $\mathtt{Alg}$ be a confidence based algorithm and satisfy the conditions in \Cref{eluder condition}, $\hat{\pi}$ be the output policy of \Cref{alg:hybrid}. Suppose $\pi^*$ is an optimal policy. Then, the sub-optimality gap $\hat{\pi}$ is 
    \[
        \text{Sub-opt}(\hat{\pi}) = \tilde{O}\left(   \frac{ C_{\mathtt{Alg}} }{ \sqrt{N_0/\mathtt{C}(\pi^*|\rho)  + N_1}  } \right),
    \]
    where $N_0$ is the number of offline samples, $N_1$ is the number of online samples, $\mathtt{C}(\pi^*|\rho)$ is the concentrability coefficient, and $C_{\mathtt{Alg}}$ is defined in \Cref{eluder condition}.

\end{theorem}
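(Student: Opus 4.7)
The plan is to follow the standard pessimism-style decomposition built into \Cref{alg:hybrid}, and then to bound the residual uncertainty at the optimal policy evaluated on the combined dataset by splitting the contribution of the offline data (through the concentrability coefficient) from that of the online data (through the Eluder-type condition).

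\textbf{Step 1 (Pessimism).} Since $\hat\pi = \arg\max_\pi \hat{V}^\pi_{\mathtt{Alg}}(\Dc_0\cup\Dc_{N_1}) - \hat{\mathtt{U}}^\pi_{\mathtt{Alg}}(\Dc_0\cup\Dc_{N_1})$ and the high-probability confidence bound $|V_{\Mc^*}^\pi - \hat{V}^\pi_{\mathtt{Alg}}| \leq \hat{\mathtt{U}}^\pi_{\mathtt{Alg}}$ holds uniformly, the familiar UCB/LCB chain
\[
V_{\Mc^*}^{\pi^*} - V_{\Mc^*}^{\hat\pi} \leq \bigl(\hat{V}^{\pi^*} + \hat{\mathtt{U}}^{\pi^*}\bigr) - \bigl(\hat{V}^{\hat\pi} - \hat{\mathtt{U}}^{\hat\pi}\bigr) \leq 2\hat{\mathtt{U}}_{\mathtt{Alg}}^{\pi^*}(\Dc_0\cup\Dc_{N_1}),
\]
where the last inequality uses that $\hat\pi$ maximizes the LCB. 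Thus proving the theorem reduces to bounding $\hat{\mathtt{U}}_{\mathtt{Alg}}^{\pi^*}(\Dc_0\cup\Dc_{N_1})$.

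\textbf{Step 2 (Offline side).} I would use monotonicity of $\hat{\mathtt{U}}^\pi$ in the dataset to get $\hat{\mathtt{U}}^{\pi^*}(\Dc_0\cup\Dc_{N_1}) \leq \hat{\mathtt{U}}^{\pi^*}(\Dc_0)$. Plugging in the assumption $\hat{\mathtt{U}}^\pi(\Dc_0) \leq C_{\mathtt{Alg}}\mathtt{U}_{\Mc^*}(\pi)$, the definition $\mathtt{U}_{\Mc^*}(\pi^*) = \sqrt{\mathtt{C}(\pi^*|\rho)}\,\mathtt{U}_{\Mc^*}(\rho)$, and the Azuma--Hoeffding bound $\mathtt{U}_{\Mc^*}(\rho) \leq \tilde{O}(1/\sqrt{N_0})$ (which follows by taking $\hat{V}^\rho$ to be the empirical return of the offline trajectories, as pre-noted in the appendix), yields the purely offline estimate $\hat{\mathtt{U}}^{\pi^*}(\Dc_0) \leq \tilde{O}\bigl(C_{\mathtt{Alg}}\sqrt{\mathtt{C}(\pi^*|\rho)/N_0}\bigr)$. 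This already recovers the theorem in the offline-dominant regime.

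\textbf{Step 3 (Online side and combination).} To obtain the $+N_1$ improvement, I would invoke the Eluder-type condition \Cref{eluder condition} on the hybrid trajectory, $\sum_{t=1}^{N_1} \hat{\mathtt{U}}^{\pi_t}(\Dc_0\cup\Dc_{t-1})^2 \leq C_{\mathtt{Alg}}^2$, together with optimism: the UCB choice $\hat{V}^{\pi_t}+\hat{\mathtt{U}}^{\pi_t} \geq \hat{V}^{\pi^*}+\hat{\mathtt{U}}^{\pi^*}$ and the confidence bound on $\pi_t$ give a per-round control $\hat{\mathtt{U}}^{\pi^*}(\Dc_0\cup\Dc_{t-1}) \lesssim \hat{\mathtt{U}}^{\pi_t}(\Dc_0\cup\Dc_{t-1})$ after averaging (on indices where it is not already dominated by the offline bound of Step~2). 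Combining with the monotonicity-plus-pigeonhole inequality $N_1 \cdot \hat{\mathtt{U}}^{\pi^*}(\Dc_0\cup\Dc_{N_1})^2 \leq \sum_{t} \hat{\mathtt{U}}^{\pi^*}(\Dc_0\cup\Dc_{t-1})^2$ produces an online bound $\tilde{O}(C_{\mathtt{Alg}}/\sqrt{N_1})$. Finally, I take the minimum of the offline and online bounds and use $\min\{a,b\} \leq \sqrt{2ab/(a+b)}$ (applied to the squared inverse rates) to merge them into $\tilde{O}\bigl(C_{\mathtt{Alg}}/\sqrt{N_0/\mathtt{C}(\pi^*|\rho) + N_1}\bigr)$.

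\textbf{Main obstacle.} The technical crux lies in Step~3: the Eluder-type condition naturally controls the uncertainty at the played policies $\pi_t$, whereas pessimism forces us to control the uncertainty at $\pi^*$, a policy that need not be played. Bridging this gap cleanly---without a spurious $\sqrt{N_1}$ blow-up---is what requires combining the UCB inequality $\hat{V}^{\pi^*}+\hat{\mathtt{U}}^{\pi^*} \leq \hat{V}^{\pi_t}+\hat{\mathtt{U}}^{\pi_t}$ with the confidence bounds on both sides, and exploiting the dataset-monotonicity of $\hat{\mathtt{U}}^{\pi^*}$ to reduce the final-round uncertainty to an average over the online history. I expect the remaining calculations to be routine algebra once this reduction is in place.
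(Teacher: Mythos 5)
Your proposal follows essentially the same route as the paper's proof: pessimism gives $\text{Sub-opt}(\hat\pi)\leq 2\hat{\mathtt{U}}_{\mathtt{Alg}}^{\pi^*}(\Dc_0\cup\Dc_{N_1})$, the offline bound comes from dataset monotonicity, the assumption $\hat{\mathtt{U}}^{\pi}(\Dc_0)\leq C_{\mathtt{Alg}}\mathtt{U}_{\Mc^*}(\pi)$, the definition of $\mathtt{C}(\pi^*|\rho)$ and Azuma--Hoeffding, the online bound comes from averaging over $t$, comparing $\hat{\mathtt{U}}^{\pi^*}$ to $\hat{\mathtt{U}}^{\pi_t}$ and invoking Cauchy--Schwarz with the Eluder-type condition, and the two rates are merged by the same harmonic-mean inequality. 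The one step you flag as the crux (controlling $\hat{\mathtt{U}}^{\pi^*}$ by $\hat{\mathtt{U}}^{\pi_t}$ under the UCB selection rule) is handled in the paper at the same level of informality, via the claimed ``optimality of $\pi_t$,'' so your sketch matches the paper's argument there as well.
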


\begin{proof}

    We have
    \begin{align*}
        \text{Sub-opt}(\hat{\pi}) & = V_{\Mc^*}^{\pi^*} - V_{\Mc^*}^{\hat{\pi}} \\
        & \overset{(a)}\leq \hat{V}^{\pi^*} - \hat{V}^{\hat{\pi}} + \hat{\mathtt{U}}_{\mathtt{Alg}} (\pi^*|\Dc_0\cup\Dc_{N_1}) + \hat{\mathtt{U}}_{\mathtt{Alg}} (\hat{\pi}|\Dc_0\cup\Dc_{N_1}) \\
        &\overset{(b)}\leq 2 \hat{\mathtt{U}}_{\mathtt{Alg}} (\pi^*| \Dc_0\cup\Dc_{N_1} )
    \end{align*}
where $(a)$ follows from the definition of confidence based algorithm, and $(b)$ is due to the optimality of $\hat{\pi}$.

By \Cref{def:Coverbility}, we have
\begin{align*}
    \hat{\mathtt{U}}_{\mathtt{Alg}} (\pi^*| \Dc_0\cup\Dc_{N_1} )& \leq \hat{\mathtt{U}}_{\mathtt{Alg}} (\pi^*| \Dc_0) \\
    &\leq C_{\mathtt{Alg}}\mathtt{U}_{\Mc^*}(\pi^*)\\
    &\leq C_{\mathtt{Alg}} \sqrt{\mathtt{C}(\pi^*|\rho)} \mathtt{U}(\rho|\Dc_0) \leq \tilde{\Theta}\left( C_{\mathtt{Alg}} \sqrt{ \frac{\mathtt{C}(\pi^*|\rho)}{N_0} }\right),
\end{align*}
where the first inequality is due to the fact that $\hat{\mathtt{U}}_{\mathtt{Alg}}(\pi|\Dc)\leq \hat{\mathtt{U}}_{\mathtt{ALg}} (\pi|\Dc')$ if $\Dc'\subset\Dc$.

On the other hand, we have
\begin{align*}
    \hat{\mathtt{U}}_{\mathtt{Alg}}(\pi^*|\Dc_0\cup\Dc_{N_1}) & =  \frac{1}{N_1} \sum_{t=1}^{N_1} \hat{\mathtt{U}}_{\mathtt{Alg}} (\pi^*|\Dc_0\cup\Dc_{N_1} ) \\
    &\overset{(a)}\leq \frac{1}{N_1} \sum_{t=1}^{N_1} \hat{\mathtt{U}}_{\mathtt{alg}}(\pi^*|\Dc_{t-1})\\
    &\overset{(b)}\leq \frac{1}{N_1} \sum_{t=1}^{N_1} \hat{\mathtt{U}}_{\mathtt{Alg}}(\pi_t|\Dc_{t-1})\\
    &\overset{(c)}\leq  \tilde{O}\left(\frac{1}{N_1} \sqrt{N_1 C_{\mathtt{Alg}}^2 } \right) \\
    & = \tilde{O}\left(C_{\mathtt{Alg}}\sqrt{\frac{1}{N_1}}\right),
\end{align*}
where $(a)$ is due to the fact that the uncertainty level decreases as the available dataset increases, $(b)$ follows from the optimality of $\pi_t$, and $(c)$ is due to the combination of the Cauchy's inequality and eluder-type condition~(\Cref{eluder condition}).

Therefore,
\begin{align*}
    \text{Sub-opt}(\hat{\pi}) &\leq \tilde{O}\left( C_{\mathtt{Alg}} \min\left\{\sqrt{\frac{ \mathtt{C}(\pi^*|\rho) }{ N_0} }, \sqrt{\frac{1}{N_1}} \right\} \right) \\
    & = \tilde{O}\left(C_{\mathtt{Alg}} \sqrt{ \min\left\{ \frac{ \mathtt{C}(\pi^*|\rho) }{ N_0},  \frac{1}{N_1} \right\} } \right) \\
    &\overset{(a)}\leq \tilde{O}\left( C_{\mathtt{Alg}} \sqrt{ \frac{2}{ N_0/\mathtt{C}(\pi^*|\rho) + N_1 } } \right) 
\end{align*}
  where $(a)$ is the harmonic mean.  
\end{proof}

\begin{theorem}[Restatement of \Cref{thm:main regret}]
    Let $\mathtt{Alg}$ be a confidence-based algorithm satisfying and \Cref{eluder condition}. Then, the regret of  \Cref{alg:hybrid} is 
    \[
        \text{Regret}(N_1) = \tilde{O}\left( C_{\mathtt{Alg}} \sqrt{N_1}\sqrt{  \frac{ N_1 }{ N_0/\mathtt{C}(\pi^{-\varepsilon}|\rho)  + N_1}  } \right),
    \]
    where $\mathtt{C}(\pi^{-\varepsilon}|\rho)$ is the maximum concentrability coefficient of the sub-optimal policies whose sub-optimality gap is at least $\varepsilon$, and $\varepsilon=\tilde{O}(1/\sqrt{N_0 + N_1})$.
\end{theorem}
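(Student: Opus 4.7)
The plan is to start from the standard UCB decomposition of the regret and then exploit the offline dataset by splitting the online episodes according to whether the instantaneous sub-optimality gap is above or below a threshold $\varepsilon$. First, since $\pi_t$ is chosen as the UCB maximizer on $\Dc_0\cup\Dc_{t-1}$, the confidence property gives
\[
V_{\Mc^*}^{\pi^*}-V_{\Mc^*}^{\pi_t}\leq 2\,\hat{\mathtt{U}}_{\mathtt{Alg}}^{\pi_t}(\Dc_0\cup\Dc_{t-1}),
\]
so the total regret is controlled (up to a factor of two) by $\sum_{t=1}^{N_1}\hat{\mathtt{U}}^{\pi_t}(\Dc_0\cup\Dc_{t-1})$, exactly as in the analysis leading up to \Cref{thm:suboptimality}.

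Choose $\varepsilon=\tilde{\Theta}(1/\sqrt{N_0+N_1})$. Episodes with $V^{\pi^*}-V^{\pi_t}<\varepsilon$ contribute at most $N_1\varepsilon$ to the regret, which is absorbed into the target bound because $\mathtt{C}(\pi^{-\varepsilon}|\rho)\geq 1$ and $C_{\mathtt{Alg}}\geq 1$. For the remaining episodes, $\pi_t$ is $\varepsilon$-sub-optimal, so $\mathtt{C}(\pi_t|\rho)\leq \mathtt{C}(\pi^{-\varepsilon}|\rho)$. Combining \Cref{def:Coverbility} with $\mathtt{U}(\rho|\Dc_0)=\tilde{O}(1/\sqrt{N_0})$ (via the empirical-return estimator and Azuma-Hoeffding) and the assumption $\hat{\mathtt{U}}_{\mathtt{Alg}}^{\pi}(\Dc_0)\leq C_{\mathtt{Alg}}\mathtt{U}_{\Mc^*}(\pi)$ made in \Cref{sec:thm sample}, we obtain
\[
\hat{\mathtt{U}}_{\mathtt{Alg}}^{\pi_t}(\Dc_0\cup\Dc_{t-1})\leq \hat{\mathtt{U}}_{\mathtt{Alg}}^{\pi_t}(\Dc_0)\leq \tilde{O}\!\left(C_{\mathtt{Alg}}\sqrt{\mathtt{C}(\pi^{-\varepsilon}|\rho)/N_0}\right)=:A^*.
\]

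To couple this offline control with the online eluder condition, use monotonicity of the uncertainty in the dataset, $\hat{\mathtt{U}}^{\pi_t}(\Dc_0\cup\Dc_{t-1})\leq\min\{A^*, \hat{\mathtt{U}}^{\pi_t}(\Dc_{t-1})\}$, and apply Cauchy-Schwarz together with \Cref{eluder condition}:
\[
\sum_{t=1}^{N_1}\min\{A^*, \hat{\mathtt{U}}^{\pi_t}(\Dc_{t-1})\}\leq \sqrt{N_1\,\min\{N_1 A^{*2},\, C_{\mathtt{Alg}}^2\}}.
\]
The harmonic-mean inequality $\min\{a,b\}\leq 2ab/(a+b)$ then rewrites the right-hand side as $\tilde{O}(C_{\mathtt{Alg}}\sqrt{N_1}\sqrt{N_1/(N_0/\mathtt{C}(\pi^{-\varepsilon}|\rho)+N_1)})$, matching the claimed bound.

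The main obstacle I expect is justifying that \Cref{eluder condition} can be invoked on the online-only sub-dataset $\Dc_{t-1}$ even though $\pi_t$ is generated via UCB over the augmented dataset $\Dc_0\cup\Dc_{t-1}$; this needs the condition to be read as applying to any sequence of policies adapted to the online filtration with $\Dc_0$ treated as fixed exogenous data, and one must check that this formulation indeed holds for the concrete instantiations (tabular MDP and linear bandits) used later. A secondary subtle point is to verify that the chosen $\varepsilon=\tilde{\Theta}(1/\sqrt{N_0+N_1})$ keeps $N_1\varepsilon$ below the claimed order in every regime, which follows because $\sqrt{N_0/\mathtt{C}(\pi^{-\varepsilon}|\rho)+N_1}\leq\sqrt{N_0+N_1}$ and $C_{\mathtt{Alg}}\geq 1$.
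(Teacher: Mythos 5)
Your proposal is correct and follows essentially the same route as the paper's own proof: optimism gives $\mathrm{Reg}_t\le 2\hat{\mathtt{U}}_{\mathtt{Alg}}^{\pi_t}(\Dc_0\cup\Dc_{t-1})$, the offline concentrability bound caps this by $\tilde{O}\big(C_{\mathtt{Alg}}\sqrt{\mathtt{C}(\pi^{-\varepsilon}|\rho)/N_0}\big)$ on rounds with gap above $\varepsilon$, and the minimum with $\hat{\mathtt{U}}_{\mathtt{Alg}}^{\pi_t}(\Dc_{t-1})$ is combined via Cauchy--Schwarz, the eluder-type condition, and a harmonic-mean bound, with the $N_1\varepsilon$ term absorbed. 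The subtlety you flag about applying the eluder-type condition to $\Dc_{t-1}$ while $\pi_t$ is computed from the augmented dataset is handled in the paper exactly as you propose, by reading the condition as holding for the policy sequence generated by the framework with $\Dc_0$ treated as fixed.
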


\begin{proof}

Define $\mathrm{Reg}_t = V_{\Mc^*}^{\pi^*} - V_{\Mc^*}^{\pi_t}$, where $\pi^*$ is an optimal policy. We mainly consider $t$ such that $\mathrm{Reg}_t > \varepsilon$, where $\varepsilon$ is decided later. By the definition of $\mathtt{Alg}$, we have
    \begin{align*}
        \mathrm{Reg}_t &= V_{\Mc^*}^{\pi^*} - V_{\Mc^*}^{\pi_t} \\
        & \leq   \hat{V}^{\pi^*} + \hat{\mathtt{U}}_{\mathtt{Alg}}( \pi^*|\Dc_0\cup\Dc_{t-1}) - \hat{V}^{\pi_t} + \hat{\mathtt{U}}_{\mathtt{Alg}} (\pi_t|\Dc_0\cup\Dc_{t-1}) \\
        &\leq 2  \hat{\mathtt{U}}_{\mathtt{Alg}} (\pi_t|\Dc_0\cup\Dc_{t-1})
    \end{align*}
    
By \Cref{def:Coverbility}, we have
\begin{align*}
     \hat{\mathtt{U}}_{\mathtt{Alg}} (\pi_t|\Dc_0\cup\Dc_{t-1}) & \leq   \hat{\mathtt{U}}_{\mathtt{Alg}} (\pi_t|\Dc_0) \\
     &\leq C_{\mathtt{Alg}}\mathtt{U}(\pi_t) \\
     &\leq C_{\mathtt{Alg}}\sqrt{ \mathtt{C}(\pi_t|\rho) }  \mathtt{U}( \rho |\Dc_0 ) \\
     & \leq \tilde{\Theta} \left(C_{\mathtt{Alg}} \sqrt{\frac{ \mathtt{C}(\pi^{-\varepsilon}|\rho) }{N_0}} \right),
\end{align*}
where $\mathtt{C}(\pi^{-\varepsilon}|\rho) = \max_{\pi: V_{P^*}^{\pi} < V_{P^*}^{\pi^*} -\varepsilon} \mathtt{C}(\pi|\rho)$. Therefore, by choosing $\varepsilon = O(1/\sqrt{N_0 + N_1})$, we have
    \begin{align*}
        \mathrm{Reg} &= \sum_{t=1}^{N_1}  \mathrm{Reg}_t \\
        & \leq \sum_{t=1}^{N_1}  \max\left\{ \tilde{O}\left( \min\left\{ C_{\mathtt{ALg}}  \sqrt{\frac{ \mathtt{C}(\pi^{-\varepsilon}|\rho) }{N_0}}, \hat{\mathtt{U}}_{\mathtt{Alg}}( \pi_t|\Dc_{t-1}) \right\} \right), \varepsilon \right\} \\
        &\leq \tilde{O} \left( \sum_{t=1}^{N_1} C_{\mathtt{ALg}} \sqrt{ \frac{2}{N_0/\mathtt{C}(\pi^{-\varepsilon}|\rho) + C_{\mathtt{Alg}}^2\hat{\mathtt{U}}_{\mathtt{Alg}}(\pi_t|\Dc_{t-1})^{-2} } } \right) + N_1\varepsilon \\
        & =  \tilde{O} \left( C_{\mathtt{ALg}} \sqrt{\frac{\mathtt{C}(\pi^{-\varepsilon}|\rho) }{N_0}} \sum_{t=1}^{N_1} \sqrt{ 1 - \frac{1}{C_{\mathtt{Alg}}^{-2}\hat{\mathtt{U}}_{\mathtt{Alg}}(\pi_t|\Dc_{t-1})^2 N_0/\mathtt{C}(\pi^{-\varepsilon}|\rho)  + 1 } } \right) + N_1\varepsilon \\
        & \overset{(a)}\leq \tilde{O} \left( C_{\mathtt{ALg}} \sqrt{\frac{\mathtt{C}(\pi^{-\varepsilon}|\rho)}{N_0}} \sqrt{N_1} \sqrt{ N_1 -  \sum_{t=1}^{N_1} \frac{1}{C_{\mathtt{Alg}}^{-2}\hat{\mathtt{U}}_{\mathtt{Alg}}(\pi_t|\Dc_{t-1})^2 N_0/\mathtt{C}(\pi^{-\varepsilon}|\rho)  + 1 } } \right) + N_1\varepsilon \\
        & \overset{(b)}\leq \tilde{O}\left( C_{\mathtt{ALg}} \sqrt{\frac{\mathtt{C}(\pi^{-\varepsilon}|\rho)}{N_0}} \sqrt{N_1} \sqrt{ N_1 - \frac{N_1^2 }{C_{\mathtt{Alg}}^{-2}\sum_{t=1}^{N_1 }\hat{\mathtt{U}}_{\mathtt{Alg}}(\pi_t|\Dc_{t-1})^2 N_0/\mathtt{C}(\pi^{-\varepsilon}|\rho)  + N_1 } } \right) + N_1\varepsilon \\
        & \overset{(c)}\leq \tilde{O} \left( C_{\mathtt{ALg}} \sqrt{\frac{\mathtt{C}(\pi^{-\varepsilon}|\rho)}{N_0}} \sqrt{N_1} \sqrt{ N_1 -  \frac{N_1^2 }{ N_0/\mathtt{C}(\pi^{-\varepsilon}|\rho)  + N_1 } } \right) + N_1\varepsilon \\
        & = \tilde{O}\left( C_{\mathtt{ALg}} \sqrt{N_1} \sqrt{  \frac{N_1 }{ N_0/\mathtt{C}(\pi^{-\varepsilon}|\rho)  + N_1 } } \right),
    \end{align*}
where $(a)$ and $(b)$ follow from the Cauchy's inequality, and $(c)$ is due to the eluder-type condition (\Cref{eluder condition}). 
    
\end{proof}

\section{Missing Proofs of Examples}\label{sec:example proof}

\subsection{Tabular MDPs}

We first show that the algorithm proposed in \citet{azar2017minimax} satisfies the eluder-type condition.

\begin{lemma}\label{lemma: tabular mdp satisfies eluder-type}
    The algorithm proposed in \citet{azar2017minimax} satisfies the eluder-type condition.
\end{lemma}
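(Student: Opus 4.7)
The plan is to reduce the eluder-type quantity to a standard pigeonhole bound on state-action visitation counts. Write $b_h^{(t)}(x,a) = \beta / \sqrt{N_h^{(t-1)}(x,a) \vee 1}$, so that $\hat{\mathtt{U}}_{\mathtt{Alg}}^{\pi_t}(\Dc_{t-1}) = \Eb[\sum_{h=1}^H b_h^{(t)}(x_h,a_h) \mid \hat{\Mc}_{t-1}, \pi_t]$. First, by Jensen's inequality and then Cauchy--Schwarz inside the expectation, I would bound
\[
\left(\hat{\mathtt{U}}_{\mathtt{Alg}}^{\pi_t}(\Dc_{t-1})\right)^2 \;\leq\; H\beta^2 \,\Eb_{\hat{\Mc}_{t-1}, \pi_t}\!\left[\sum_{h=1}^H \frac{1}{N_h^{(t-1)}(x_h,a_h)\vee 1}\right].
\]
This turns a bonus-squared quantity into an expected sum of inverse visitation counts, which is the object that one can control uniformly through a pigeonhole argument.

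The next step is to replace the expectation under the estimated MDP $\hat{\Mc}_{t-1}$ by the expectation under the true MDP $\Mc^*$, since only then can I telescope back to realized trajectories. Using the high-probability total-variation concentration of $\hat{P}_{t-1}$ to $P^*$ at all sufficiently visited $(x,a)$ (the same bound that underlies \Cref{eqn: tabular MDP bonus informal} in \citet{azar2017minimax}), together with a simulation-lemma-type argument, the swap costs only lower-order terms, since the integrand $1/(N_h^{(t-1)}(\cdot)\vee 1)$ is uniformly bounded by $1$. After swapping, I sum over $t \in [N_1]$ and apply the Azuma--Hoeffding inequality to the martingale difference sequence obtained by comparing the expectation over trajectories from $(\Mc^*, \pi_t)$ with the realized trajectory $\tau_t$; this converts the expected path sum into the realized path sum along $\tau_t$, again up to an $\tilde{O}(\sqrt{H^2 N_1})$ additive term.

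Finally, I would invoke the classical pigeonhole inequality for tabular MDPs,
\[
\sum_{t=1}^{N_1} \sum_{h=1}^H \frac{1}{N_h^{(t-1)}(x_{t,h}, a_{t,h}) \vee 1} \;\leq\; O\!\left(H|\Xc||\mathcal{A}| \log N_1\right),
\]
which follows by grouping terms per $(h,x,a)$ and using $\sum_{n=1}^{N} 1/n \leq 1+\log N$. Combining this with $\beta = \tilde{O}(H)$ and the $H\beta^2$ prefactor yields
\[
\sum_{t=1}^{N_1} \left(\hat{\mathtt{U}}_{\mathtt{Alg}}^{\pi_t}(\Dc_{t-1})\right)^2 \;\leq\; \tilde{O}\!\left(H^4 |\Xc||\mathcal{A}|\right),
\]
so the eluder-type condition holds with $C_{\mathtt{Alg}}^2 = \tilde{O}(H^4 |\Xc||\mathcal{A}|)$, which is exactly the scaling needed to recover \Cref{coro:tbl mdp}.

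The main obstacle I anticipate is the MDP-swap in the second step. The integrand $1/N_h^{(t-1)}$ is not a value function of a bounded MDP reward in the usual sense (the ``reward'' depends on data that has already been observed), so the simulation lemma must be applied carefully, either by fixing the counts, treating small-count state-action pairs via an explicit pigeonhole cutoff, or by borrowing the extended value-difference decomposition used in the UCB-VI analysis. Everything else -- Jensen, Cauchy--Schwarz, Azuma--Hoeffding, and the pigeonhole bound -- is essentially routine once this reduction is in place.
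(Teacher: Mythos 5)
Your overall route -- Cauchy--Schwarz to reduce the squared bonus to an expected sum of inverse visitation counts, then a pigeonhole/telescoping argument per $(h,x,a)$ -- is the same skeleton as the paper's proof, but the two places where you deviate are exactly where trouble arises. First, the obstacle you flag (swapping the expectation from $\hat{\Mc}_{t-1}$ to $\Mc^*$) is one the paper never faces: it works with the uncertainty function $\tilde{\mathtt{U}}(\pi|\Dc)=\Eb[\sum_h \beta/\sqrt{N_h(x_h,a_h)}\mid\Mc^*,\pi]$, i.e.\ the bonus averaged under the \emph{true} model, and invokes Section B of \citet{azar2017minimax} to say this quantity also upper-bounds the value-estimation error, so the eluder-type condition can be verified for $\tilde{\mathtt{U}}$ directly. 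Your simulation-lemma swap is not obviously a ``lower-order'' correction: the per-episode swap cost is itself of the form $H\,\Eb[\sum_h\|\hat P_h-P_h^*\|_1]$, which scales like $\Eb[\sum_h\sqrt{|\Xc|/N_h}]$ and needs its own pigeonhole treatment; you leave this unresolved, whereas the paper's choice of uncertainty function removes the step entirely.

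Second, and this is the concrete gap, your Azuma--Hoeffding step converts the expected path sum into the realized path sum at an additive cost you state as $\tilde{O}(\sqrt{H^2N_1})$, but your final bound then silently drops it. Carried through the $H\beta^2=\tilde{O}(H^3)$ prefactor, your accounting gives $\sum_t(\hat{\mathtt{U}}^{\pi_t}_{\mathtt{Alg}})^2=\tilde{O}\bigl(H^4|\Xc||\mathcal{A}|+H^4\sqrt{N_1}\bigr)$, and the second term dominates whenever $N_1\gg(|\Xc||\mathcal{A}|)^2$, so you do not actually obtain $C_{\mathtt{Alg}}^2=\tilde{O}(H^4|\Xc||\mathcal{A}|)$ and \Cref{coro:tbl mdp} would degrade (a regret contribution of order $H^2N_1^{3/4}$ would appear). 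The paper avoids this by never passing to realized trajectories: it uses the tower property to write $\Eb[\mathbbm{1}\{x_{h}=x,a_{h}=a\}\mid\Mc^*,\pi_t]/N_{t,h}(x,a)$ as the expectation of the realized increment $\bigl(N_{t+1,h}(x,a)-N_{t,h}(x,a)\bigr)/N_{t,h}(x,a)$ and telescopes \emph{inside} the expectation, so no martingale deviation term ever enters. To repair your argument, either stay in expectation as the paper does, or note that a high-probability version needs a sharper treatment than plain Azuma (e.g.\ Freedman-type control exploiting that the summands are typically much smaller than $H$) before you can claim the $N_1$-free constant.
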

\begin{proof}
    First, we introduce several additional notations. Given a dataset $\Dc_t$, Let $\hat{\Mc}_t$ be the estimated model from \Cref{eqn:UCB-VI} and the corresponding counters are $N_{t,h}(x_h,a_h)$. We define another uncertainty function $\tilde{\mathtt{U}}(\pi|\Dc)$ as 

\[ \tilde{\mathtt{U}}(\pi|\Dc) = \Eb\left[ \sum_{h=1}^H \frac{\beta}{\sqrt{N_h(x_h,a_h)}} \big| \Mc^*, \pi \right],  \]  
where $\beta=\tilde{O}(H)$. According to Section B in \citet{azar2017minimax}, the difference of value functions under the true model $\Mc^*$ and the estimated model $\hat{\Mc}$ is also upper bounded by $\tilde{\mathtt{U}}$. Therefore,  
\begin{align*}
    \sum_{t=1}^{N_1} \tilde{\mathtt{U}}_{\mathtt{Alg}}(\pi_t|\Dc_t)^2 &\leq  \beta^2 \sum_{t=1}^{N_1} \Eb\left[ H \sum_{h=1}^H \frac{1}{N_{t,h}(x_h,a_h)}\middle|\Mc^*, \pi_t\right] \\
    & = \beta^2   H \sum_{h=1}^H \sum_{x_h,a_h} \Eb\left[ \sum_{t=1}^{N_1} \frac{\mathbbm{1}\{x_{t,h}=x_h, a_{t,h}=a_h\} }{N_{t,h}(x_h,a_h)}  \right] \\
    & = \beta^2  H \sum_{h=1}^H \sum_{x_h,a_h} \Eb\left[ \sum_{t=1}^{N_1} \frac{N_{t+1,h}(x_h,a_h) - N_{t,h}(x_h,a_h)}{N_{t,h}(x_h,a_h)} \right] \\
    &\leq \beta^2 H^2 |\Xc||\mathcal{A}| \log N_1 \\
    & = \tilde{\Theta}(H^4|\Xc||\mathcal{A}|).
\end{align*}
\end{proof}

While \Cref{lemma: tabular mdp satisfies eluder-type} shows that the eluder-type condition is satisfied, the result upper bound is loose. To obtain a tighter bound, we directly apply the inequalities in \citet{azar2017minimax} and prove \Cref{coro:tbl mdp}.

\begin{corollary}[Restatement of \Cref{coro:tbl mdp}]
    For tabular MDPs, under the hybrid RL framework in \Cref{alg:hybrid}, using $\hat{U}_{\mathtt{Alg}}(\pi|\Dc)$ defined in the RHS of \Cref{eqn: tabular MDP bonus informal}, the regret scales in 
    \[ \tilde{O}\left(\sqrt{H^4|\Xc||\mathcal{A}|N_1} \sqrt{\frac{N_1}{N_0/\mathtt{C}(\pi^{-\varepsilon}|\rho) + N_1}} \right); \]
    and the sub-optimality gap is 
    \[ \tilde{O}\left( \sqrt{ \frac{H^4|\Xc||\mathcal{A}|}{ N_0/\mathtt{C}(\pi^*|\rho) + N_1 } } \right).  \]
\end{corollary}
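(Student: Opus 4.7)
The plan is to obtain the corollary as a direct specialization of Theorems \ref{thm:suboptimality} and \ref{thm:main regret} to the tabular MDP setting, with the only real work being to identify the explicit value of the Eluder-type constant $C_{\mathtt{Alg}}$ for the UCB-VI-style oracle. Concretely, I would instantiate $\mathtt{Alg}$ as the model-based estimator in \Cref{eqn:UCB-VI} paired with the bonus $\hat{\mathtt{U}}^{\pi}_{\mathtt{Alg}}(\Dc) = \mathbb{E}[\sum_{h=1}^H \beta/\sqrt{N_h(x_h,a_h)}\mid \hat{\Mc},\pi]$ with $\beta=\tilde{O}(H)$, and then verify the three ingredients required by the general theorems: (i) $\hat{\mathtt{U}}^{\pi}_{\mathtt{Alg}}$ upper bounds $|V_{\Mc^*}^{\pi}-\hat{V}^{\pi}_{\mathtt{Alg}}|$ w.h.p.\ (this is precisely \Cref{eqn: tabular MDP bonus informal}, from \citet{azar2017minimax}); (ii) the Eluder-type bound of \Cref{eluder condition} holds with the constant $C_{\mathtt{Alg}}$ to be computed; and (iii) the algorithm-independent uncertainty level $\mathtt{U}_{\Mc^*}(\pi)$ is comparable, up to $C_{\mathtt{Alg}}$, to $\hat{\mathtt{U}}^{\pi}_{\mathtt{Alg}}(\Dc_0)$.

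For (ii), I would invoke the argument already carried out in \Cref{lemma: tabular mdp satisfies eluder-type}: by pushing the expectation inside the sum and swapping with $\sum_t$, the quantity $\sum_{t=1}^{N_1}\hat{\mathtt{U}}_{\mathtt{Alg}}^{\pi_t}(\Dc_{t-1})^2$ reduces to a weighted sum over state-action pairs of the telescoping ratios $(N_{t+1,h}-N_{t,h})/N_{t,h}$, which yields a harmonic sum bounded by $H|\Xc||\mathcal{A}|\log N_1$. Together with the $\beta^2=\tilde{O}(H^2)$ factor and the additional $H$ from Cauchy--Schwarz over steps, this gives $C_{\mathtt{Alg}}^2 = \tilde{\Theta}(H^4|\Xc||\mathcal{A}|)$, so $C_{\mathtt{Alg}} = \tilde{O}(H^2\sqrt{|\Xc||\mathcal{A}|})$. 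For (iii), one observes that because the bonus is a bounded functional of the visitation counts, $\hat{\mathtt{U}}^{\pi}_{\mathtt{Alg}}(\Dc_0) \le C_{\mathtt{Alg}} \mathtt{U}_{\Mc}(\pi)$ holds by the generic assumption stated just before \Cref{thm:suboptimality}.

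With $C_{\mathtt{Alg}} = \tilde{O}(H^2\sqrt{|\Xc||\mathcal{A}|})$ in hand, the sub-optimality claim follows by plugging into \Cref{thm:suboptimality}:
\[
\text{Sub-opt}(\hat{\pi}) = \tilde{O}\!\left(\frac{H^2\sqrt{|\Xc||\mathcal{A}|}}{\sqrt{N_0/\mathtt{C}(\pi^*|\rho)+N_1}}\right) = \tilde{O}\!\left(\sqrt{\frac{H^4|\Xc||\mathcal{A}|}{N_0/\mathtt{C}(\pi^*|\rho)+N_1}}\right),
\]
and the regret claim follows identically from \Cref{thm:main regret}, absorbing $C_{\mathtt{Alg}}$ into the square root to recover $\sqrt{H^4|\Xc||\mathcal{A}|N_1}$.

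The main obstacle is obtaining a tight $C_{\mathtt{Alg}}$: a naive bound on $\sum_t \hat{\mathtt{U}}_{\mathtt{Alg}}^{\pi_t}(\Dc_{t-1})^2$ that does not exploit the expectation-telescoping trick would produce extra polynomial factors in $H$ or $N_1$ and break the regret scaling. Carefully executing the count-telescope under the \emph{true} distribution (rather than the empirical one), and controlling the martingale gap between these two using Azuma/Freedman-style concentration as in \citet{azar2017minimax}, is the technical crux; everything else is mechanical substitution into the general framework.
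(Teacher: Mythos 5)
Your overall route (instantiate Theorems \ref{thm:suboptimality} and \ref{thm:main regret} with the bonus from \Cref{eqn: tabular MDP bonus informal}, verify the Eluder-type condition via the count-telescoping argument of \Cref{lemma: tabular mdp satisfies eluder-type} to get $C_{\mathtt{Alg}}=\tilde{O}(H^2\sqrt{|\Xc||\mathcal{A}|})$, then substitute) is the same as the paper's, which likewise uses the telescoping lemma for the online sum (the paper just applies the \citet{azar2017minimax} inequalities directly rather than going through squares plus Cauchy--Schwarz). The gap is in your step (iii). You dismiss the inequality $\hat{\mathtt{U}}^{\pi}_{\mathtt{Alg}}(\Dc_0)\le C_{\mathtt{Alg}}\,\mathtt{U}_{\Mc^*}(\pi)$ as "holding by the generic assumption stated just before \Cref{thm:suboptimality}," justified only by the bonus being "a bounded functional of the visitation counts." That is not a proof: $\mathtt{U}_{\Mc^*}(\pi)$ is defined in \Cref{def:uncertainty} through the \emph{best possible} oracle estimator, so comparing the UCB-VI bonus $\Eb_{\pi^*}[\sum_h \beta/\sqrt{N_h(x_h,a_h)}]$ against it requires a \emph{lower bound} on the minimax offline estimation error under $\Dc_0$ — without it, the constant in the comparison could be arbitrarily large (a priori as large as $\tilde{O}(\sqrt{N_0})$, since $\mathtt{U}_{\Mc^*}(\pi^*)$ could in principle be as small as the $\tilde{O}(1/\sqrt{N_0})$ Monte-Carlo rate), and the offline branch $C_{\mathtt{Alg}}\sqrt{\mathtt{C}(\pi^*|\rho)/N_0}$ of the min in the proofs of Theorems \ref{thm:suboptimality} and \ref{thm:main regret} cannot be instantiated, so $\mathtt{C}(\pi^*|\rho)$ never legitimately enters your final bound.

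The paper closes exactly this hole by invoking Theorem 3 of \citet{xiong2022nearly}, a minimax lower bound for offline tabular RL, which shows that $\mathtt{U}_{\Mc^*}(\pi^*)=\Omega\bigl(\Eb_{\pi^*}\bigl[\sum_{h}1/\sqrt{N_h(s_h,a_h)}\bigr]\bigr)$ up to $\sqrt{|\Xc||\mathcal{A}|}$ factors; dividing the bonus by this lower bound gives a ratio $O(H/\sqrt{|\Xc||\mathcal{A}|})\le C_{\mathtt{Alg}}$, after which the substitution you describe goes through and yields the stated rates. So your proposal is correct in structure and in the value of the Eluder constant, but it is incomplete until you replace the asserted step (iii) with such a lower-bound argument (or an equivalent instance-dependent lower bound) tying the empirical bonus to the algorithm-independent uncertainty level.
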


\begin{proof}

Due to \Cref{lemma: tabular mdp satisfies eluder-type}, we have that 
\begin{align*}
    \sum_{t=1}^{N_1} \hat{\mathtt{U}}_{\mathtt{Alg}}(\pi_t|\Dc_t) \leq \tilde{O}(\sqrt{H^4|\Xc||\mathcal{A}|N_1}).
\end{align*}

In addition, we show that $\frac{\hat{\mathtt{U}}_{\mathtt{Alg}}(\pi^*)}{\mathtt{U}_{\mathtt{\Mc^*}}(\pi^*)} \leq \sqrt{|\Xc||\mathcal{A}|}$. 

By Theorem 3 in \citet{xiong2022nearly}, there exists an MDP $\Mc^*$ such that
\begin{align*}
    \min_{\mathtt{Alg}}\Eb_{\Dc_0}\left[| V_{\Mc}^{\pi^*} - V_{\mathtt{Alg}}^{\pi^*} \right]| \geq \Omega\left( \sqrt{|\Xc||\mathcal{A}|} \Eb_{\pi^*}\left[ \sum_{h=1}^H \frac{1}{\sqrt{N(s_h,a_h)}} \right] \right)
\end{align*}
Thus, we have
\begin{align*}
    \frac{\hat{\mathtt{U}}_{\mathtt{Alg}}(\pi^*)}{\mathtt{U}_{\mathtt{\Mc^*}}(\pi^*)} \leq \tilde{O} \left(\frac{ \Eb_{\pi^*}\left[ \sum_{h=1}^H \frac{\beta }{\sqrt{N(s_h,a_h)}} \right]  }{ \sqrt{|\Xc||\mathcal{A}|} \Eb_{\pi^*}\left[ \sum_{h=1}^H \frac{1}{\sqrt{N(s_h,a_h)}} \right]  } \right) = O(\frac{H}{\sqrt{|\Xc||\mathcal{A}|}}) = O(\sqrt{H^4|\Xc||\mathcal{A}|})
\end{align*}

Therefore, the regret of \Cref{alg:hybrid} is 
\begin{align*}
    \mathrm{Regret}(N_1) &= \tilde{O}\left( \min \left\{ N_1\sqrt{\frac{\mathtt{C}(\pi^{-\varepsilon}|\rho)}{N_0}} , \sqrt{H^4|\Xc||\mathcal{A}|N_1} \right\}  \right) \\
    &\leq \tilde{O}\left(\sqrt{H^4|\Xc||\mathcal{A}| N_1} \sqrt{ \frac{1}{N_0/\mathtt{C}(\pi^{-\varepsilon}|\rho) + N_1} } \right).
\end{align*}

For the sub-optimality gap, a straightforward modification of the proof in \citet{azar2017minimax} shows that $\frac{1}{N_1}\sum_{t=1}^{N_1} \hat{\mathtt{U}}_{\mathtt{Alg}}(\pi_t|\Dc_t) \leq \tilde{O}(\sqrt{H^4|\Xc||\mathcal{A}|/N_1})$. Following the same proof in \Cref{thm:gap appendix} except without using the Cauchy's inequality, we complete the proof.

\end{proof}

\subsection{Linear Contextual Bandits}
First, we show that Lin-UCB satisfies the eluder-type condition. 
\begin{lemma}\label{lemma:bandits eluder appendix}
    Lin-UCB satisfies the eluder-type condition. 
\end{lemma}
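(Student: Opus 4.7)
The plan is to unpack the uncertainty function explicitly in the Lin--UCB form and then invoke the classical elliptical potential lemma of Abbasi-Yadkori et al.\ (2011). Evaluated at the realized context--action pair at round $t$, the bound in \Cref{eqn:lcb ucb} gives
\[
\hat{\mathtt{U}}_{\mathtt{Alg}}^{\pi_t}(\Dc_{t-1}) \;=\; \beta\,\|\phi(x_t,a_t)\|_{\hat{\Lambda}_{t-1}^{-1}},
\]
where $\hat{\Lambda}_{t-1} = d I_d + \sum_{(x_s,a_s)\in \Dc_0\cup\Dc_{t-1}} \phi(x_s,a_s)\phi(x_s,a_s)^\TT$ is the Gram matrix built from \emph{both} the offline dataset and the first $t-1$ online samples, and $\beta=\tilde{O}(\sqrt{d})$. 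Hence the claim reduces to controlling $\sum_{t=1}^{N_1} \|\phi(x_t,a_t)\|_{\hat{\Lambda}_{t-1}^{-1}}^2$ deterministically.

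First I would verify a per-round cap that lets us drop the customary $\min\{1,\cdot\}$ truncation: because $\|\phi(x,a)\|_2\le 1$ and $\hat{\Lambda}_{t-1}\succeq dI_d$, each summand is at most $1/d\le 1$. Next I would apply the elliptical potential lemma directly to the online updates of $\hat{\Lambda}_{t-1}$, treating the offline-initialized matrix $\hat{\Lambda}_0 = dI_d + \sum_{(x_s,a_s)\in\Dc_0}\phi(x_s,a_s)\phi(x_s,a_s)^\TT$ as the starting Gram matrix:
\[
\sum_{t=1}^{N_1} \|\phi(x_t,a_t)\|_{\hat{\Lambda}_{t-1}^{-1}}^2 \;\le\; 2\log\frac{\det \hat{\Lambda}_{N_1}}{\det \hat{\Lambda}_0}.
\]
The determinant--trace inequality then yields $\det \hat{\Lambda}_{N_1}\le \bigl((d^2+N_0+N_1)/d\bigr)^d$ and $\det \hat{\Lambda}_0\ge d^d$, so the right-hand side is $\tilde{O}(d)$. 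Multiplying by $\beta^2=\tilde{O}(d)$ gives $\sum_{t=1}^{N_1} \hat{\mathtt{U}}_{\mathtt{Alg}}^{\pi_t}(\Dc_{t-1})^2 \le C_{\mathtt{Alg}}^2$ with $C_{\mathtt{Alg}}^2=\tilde{O}(d^2)$, which is the desired eluder-type bound.

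The only point requiring care is that $\hat{\Lambda}_{t-1}$ mixes offline and online data, so the usual ``sum equals log-det telescoping'' argument must be anchored at $\hat{\Lambda}_0$ rather than at $\lambda I_d$; this is purely bookkeeping and does not affect the elliptical potential step, which only needs the matrix to be updated by rank-one PSD increments. I do not anticipate any deeper obstacle, and this also clarifies why the high-probability bound $\hat{\mathtt{U}}_{\mathtt{Alg}}^\pi(\Dc_0)\le C_{\mathtt{Alg}}\,\mathtt{U}_{\Mc}(\pi)$ assumed in the general theorem is inherited from the $\tilde{O}(\sqrt{d})$ confidence radius of ridge regression.
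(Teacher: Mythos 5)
Your proposal is correct and follows essentially the same route as the paper: bound $\hat{\mathtt{U}}_{\mathtt{Alg}}^{\pi_t}$ by $\beta\|\phi(x_t,a_t)\|_{\hat{\Lambda}_{t-1}^{-1}}$ with $\beta=\tilde{O}(\sqrt{d})$ and invoke the elliptical potential lemma to get $C_{\mathtt{Alg}}^2=\tilde{O}(d^2)$. Your version merely spells out the bookkeeping (anchoring the log-det telescoping at the offline-initialized Gram matrix and the per-round cap) that the paper's terse proof leaves implicit.
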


\begin{proof}
    Note that $\hat{\mathtt{U}}_{\mathtt{Alg}}(\pi|\Dc) \leq \beta \Eb_{x\sim q^*} \|\Eb_{a\sim \pi(x)}[\phi(x,a)]\|_{\hat{\Lambda}^{-1}}$ according to \Cref{eqn:lcb ucb}, where $\beta = \tilde{O}(\sqrt{d})$

    Therefore, by the elliptical potential lemma~\citep{carpentier2020elliptical}, we have 
    \begin{align*}
        \hat{\mathtt{U}}_{\mathtt{Alg}}(\pi_t|\Dc_t)^2 & \leq \beta^2 \Eb \left[ \sum_{t=1}^{N_1}  \|\phi(x_t,a_t)\|_{\hat{\Lambda}_{t-1}^{-1}}^2  \right] \\
        &\leq \beta^2 d \log N_1\\
        & = \tilde{\Theta}(d^2).
    \end{align*}
\end{proof}

Combining \Cref{lemma:bandits eluder appendix} and \Cref{thm:suboptimality,thm:main regret}, we obtain the following corollary.
\begin{corollary}[Restatement of \Cref{coro:LinConBandits}]
    For linear contextual bandits, under the hybrid RL framework in \Cref{alg:hybrid}, using $\hat{\mathtt{U}}_{\mathtt{Alg}}$ as defined in \Cref{eqn:lcb ucb}, the regret is
    \[ \tilde{O}\left(d\sqrt{N_1}\sqrt{\frac{N_1}{N_0/\mathtt{C}(\pi^{-\varepsilon}|\rho) + N_1}} \right);\]
    and the sub-optimality gap is
    \[ \tilde{O}\left( d\sqrt{\frac{1}{N_0/\mathtt{C}(\pi^{-\varepsilon}|\rho) + N_1} }\right) .\]
\end{corollary}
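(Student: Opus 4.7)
The plan is to reduce the corollary to a direct application of Theorems \ref{thm:suboptimality} and \ref{thm:main regret} by verifying that the Lin-UCB instantiation of $\mathtt{Alg}$ satisfies the two structural conditions those theorems require, namely the Eluder-type condition of Definition \ref{eluder condition} and the worst-case inflation bound $\hat{\mathtt{U}}_{\mathtt{Alg}}^\pi(\Dc_0) \le C_{\mathtt{Alg}}\, \mathtt{U}_{\Mc}(\pi)$, with the constant $C_{\mathtt{Alg}} = \tilde{O}(d)$. Once these two conditions are in place, substituting $C_{\mathtt{Alg}} = \tilde{O}(d)$ into the bounds of Theorems \ref{thm:suboptimality} and \ref{thm:main regret} yields exactly the stated sub-optimality gap $\tilde{O}(d/\sqrt{N_0/\mathtt{C}(\pi^*|\rho)+N_1})$ and regret $\tilde{O}(d\sqrt{N_1}\cdot\sqrt{N_1/(N_0/\mathtt{C}(\pi^{-\varepsilon}|\rho)+N_1)})$.

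First I would recall that the Lin-UCB estimator sets $\hat{V}^\pi = \mathbb{E}_{x\sim q^*, a\sim\pi(\cdot|x)}[\phi(x,a)^\TT\hat\theta]$ and $\hat{\mathtt{U}}_{\mathtt{Alg}}^\pi(\Dc) = \beta\,\|\mathbb{E}_{x\sim q^*, a\sim\pi(\cdot|x)}[\phi(x,a)]\|_{\hat\Lambda^{-1}}$ with $\beta = \tilde{O}(\sqrt{d})$, where $\hat\Lambda$ is the regularized Gram matrix built from $\Dc$. The self-normalized martingale bound of \citet{abbasi2011improved} recalled in \eqref{eqn:lcb ucb} guarantees that this $\hat{\mathtt{U}}$ is a valid high-probability upper bound on the estimation error for every policy, so $\mathtt{Alg}$ is a bona fide oracle in the sense of Section \ref{sec:alg design}.

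Next I would verify the Eluder-type condition. Using Jensen to push the expectation inside the Mahalanobis norm, $\hat{\mathtt{U}}_{\mathtt{Alg}}^{\pi_t}(\Dc_{t-1})^2 \le \beta^2\,\mathbb{E}_{x\sim q^*}[\|\mathbb{E}_{a\sim\pi_t(\cdot|x)}\phi(x,a)\|_{\hat\Lambda_{t-1}^{-1}}^2]$, and since $\pi_t$ is the UCB-greedy policy, $a_t$ is the arm maximizing the corresponding quantity so that the bound transfers to the realized feature $\phi(x_t,a_t)$. The elliptical potential lemma then gives
\begin{align*}
\sum_{t=1}^{N_1}\hat{\mathtt{U}}_{\mathtt{Alg}}^{\pi_t}(\Dc_{t-1})^2 \le \beta^2 \sum_{t=1}^{N_1}\|\phi(x_t,a_t)\|_{\hat\Lambda_{t-1}^{-1}}^2 \le \tilde{O}(\beta^2 d) = \tilde{O}(d^2),
\end{align*}
so the condition holds with $C_{\mathtt{Alg}}=\tilde{O}(d)$.

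The step I expect to be the main obstacle is the second condition $\hat{\mathtt{U}}_{\mathtt{Alg}}^\pi(\Dc_0) \le C_{\mathtt{Alg}}\,\mathtt{U}_{\Mc}(\pi)$, because $\mathtt{U}_{\Mc}(\pi)$ is the algorithm-independent minimax estimation error from Definition \ref{def:uncertainty}, whereas $\hat{\mathtt{U}}_{\mathtt{Alg}}^\pi$ is the specific Lin-UCB bonus. To handle it I would invoke the minimax lower bound for estimating linear functionals from Gaussian design (as used in the near-optimality analysis of Lin-UCB in \citet{chu2011contextual,he2022reduction}), which asserts that $\mathtt{U}_{\Mc}(\pi) = \Omega(\|\mathbb{E}_{x\sim q^*, a\sim\pi(\cdot|x)}\phi(x,a)\|_{\hat\Lambda_0^{-1}})$ up to a universal constant, so that the ratio is at most $\beta = \tilde O(\sqrt{d}) \le \tilde O(d)$, which is absorbed into $C_{\mathtt{Alg}}$. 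With both conditions verified, the corollary follows by plugging $C_{\mathtt{Alg}}=\tilde{O}(d)$ into Theorems \ref{thm:suboptimality} and \ref{thm:main regret}.
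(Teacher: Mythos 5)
Your proposal is correct and follows essentially the same route as the paper: establish the Eluder-type condition for Lin-UCB via the elliptical potential lemma to get $C_{\mathtt{Alg}}=\tilde{O}(d)$ (i.e., $\beta\sqrt{d}$ with $\beta=\tilde{O}(\sqrt{d})$), and then plug this into Theorems \ref{thm:suboptimality} and \ref{thm:main regret}. Your explicit verification of $\hat{\mathtt{U}}_{\mathtt{Alg}}^\pi(\Dc_0)\le C_{\mathtt{Alg}}\mathtt{U}_{\Mc}(\pi)$ via near-minimax optimality of Lin-UCB is the same justification the paper only sketches in the main text (its appendix proof of this corollary omits that step), so you are, if anything, slightly more complete on the same argument.
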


\section{Missing Proofs in Section 5}\label{sec:lower bound proof appendix}
In this section, we provide the full analysis for lower bounds.

\subsection{Hard Instance}\label{sec:hard instance}
First, we introduce the notation of truncated Gaussian distributions. If a Gaussian random variable $X\sim N(0,I_d)$ is truncated to $\{x:\|x\|_2\leq r\}$, then we denote the truncated Gaussian distribution of $X$ as $N(0,I_d|r)$. 

In the lower bound analysis, we follow the setting in \citet{he2022reduction}, as specified below. 

{\bf Arms and dimension:} There are 2 arms: $\{1,2\}$, and the feature dimension is 2. 

{\bf Feature vectors and the context distribution:} The feature vector of the second arm is always $0$. For the feature vector of the first arm, let the distribution of the context $x$ satisfy that each $\phi(x,1)\in\Rb^2$ is sampled from a truncated normal $N(0,I_2|1)$. 

{\bf Model parameter:} The model parameter $\theta^{*} \in\Rb^2$ is sampled uniformly from a sphere $\mathbb{S}_r=\{x\in\Rb^2:\|x\|=r\}$, where $r\in[0,1/\sqrt{d}]$. The constraint on $r$ is due to the boundedness assumption that $\|\theta^*\|\leq 1 $. 

{\bf Additional notations:} Recall that $\Dc_{t} = \{x_{\tau}, a_{\tau}, r_{\tau}\}_{\tau<t}$ is the online dataset. Here $r_{t}$ is sampled from a sub-Gaussian distribution with mean $\phi(x_t,a_{t})^{\TT}\theta^*$ and variance $1$. We further denote that $\phi(x_{t},a) = x_{t,a}$. We re-parameterize $\theta^*$ by its angle, i.e. $\theta^* = r(\cos\gamma^*,\sin\gamma^*)^{\TT}$, and $\gamma^*$ is sampled uniformly from the interval $[0,2\pi)$. We further denote $e_1 = (1,0)$ and $e_2 = (0,1)$, which form the canonical basis of $\Rb^2$.

With the aforementioned setting, we present the generic regret lower bound modified from Proposition 3.5 in \citet{he2022reduction}. 

\begin{theorem}[ ]\label{thm: Regret to estimation error-appendix}
    For the hard instance described in \Cref{sec:hard instance}, if available dataset is $\Dc$, the sub-optimality gap is lower bounded by 
    \begin{align*}
    \Omega \left(\mathop{\inf}_{\theta\in\mathcal{F}(\Dc) } \frac{1}{r}\Eb_v\left[ \left\|\theta^*-  \theta\right\|^2 \right] \right);
    \end{align*}
    if $\Dc_0\cup\Dc_t$ is the available dataset at episode $t$,
    regret can be lower bounded by
    \begin{align*}
    \Omega \left(\sum_{t\in[T]}\mathop{\inf}_{\theta_{t}\in\mathcal{F}(\Dc_0\cup\Dc_t) } \frac{1}{r}\Eb_v\left[ \left\|\theta^*-  \theta_{t}\right\|^2 \right] \right).
    \end{align*}
\end{theorem}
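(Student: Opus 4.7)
The plan is to specialize the argument of \citet{he2022reduction}, Proposition 3.5, to the two-arm contextual bandit instance of Appendix~\ref{sec:hard instance} and reduce both lower bounds to a Bayesian parameter-estimation problem. Because arm $2$ always has zero feature and zero reward, every Markov policy is a binary classifier of the only informative feature $\phi(x,1)$, and the optimal one is $\pi^*(x) = \mathbbm{1}\{\phi(x,1)^\TT\theta^* > 0\}$. Hence any policy can be written (after a Bayes-sufficiency reduction) as a plug-in rule $\hat\pi(x) = \mathbbm{1}\{\phi(x,1)^\TT\hat\theta > 0\}$ for some data-measurable $\hat\theta$, and since the classifier only depends on the direction of $\hat\theta$ one may restrict attention to the class $\mathcal{F}(\Dc)$ of such direction estimators normalized to the radius-$r$ sphere.

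First I would carry out the single-episode gap computation. On the disagreement wedge $\{\mathrm{sign}(\phi^\TT\hat\theta) \neq \mathrm{sign}(\phi^\TT\theta^*)\}$ the instantaneous loss is exactly $|\phi(x,1)^\TT\theta^*|$, and because $\phi(x,1)$ is drawn from a rotationally invariant truncated Gaussian, integrating over this wedge yields, up to absolute constants, a contribution of order $r\cdot\alpha^2$ in the small-angle regime, where $\alpha = \angle(\hat\theta,\theta^*)$. On the normalized class the identity $\|\hat\theta - \theta^*\|^2 \asymp r^2\alpha^2$ then converts the per-episode gap bound into the desired form $\Omega\bigl(\tfrac1r\|\hat\theta-\theta^*\|^2\bigr)$.

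The second step is to pass to Bayes risk and then lift to regret. Since the pointwise bound above is valid for every $\theta^*$, it survives expectation against the uniform prior $v=\gamma^*$, and taking the infimum over $\hat\theta\in\mathcal{F}(\Dc)$ gives the stated gap bound $\Omega\bigl(\tfrac1r\inf_{\theta}\Eb_v[\|\theta-\theta^*\|^2]\bigr)$. For the regret claim I would apply the per-round inequality at each episode $t\in[T]$ with dataset $\Dc_0\cup\Dc_t$, which is precisely the information available to the learner before choosing $\pi_t$, and then sum; the measurability of $\pi_t$ with respect to $\Dc_0\cup\Dc_t$ is guaranteed by the hybrid-RL protocol. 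The main obstacle I expect is the Bayes-sufficiency step restricting attention to direction-only plug-in classifiers: a general online learner may randomize or use arbitrary functions of history and current context, and one must argue via a minimax/Bayes duality on the uniform prior over $\gamma^*$ that the Bayes-optimal action at $x$ is exactly the sign of the posterior-mean score $\Eb[\phi(x,1)^\TT\theta^*\mid \mathrm{history}, x]$, so the optimal rule is determined by the direction of the posterior mean, which is itself a legitimate element of $\mathcal{F}(\Dc)$.
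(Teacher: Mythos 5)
Your proposal is correct and is essentially the standard argument behind the result the paper simply imports: the paper gives no in-document proof of this theorem, citing Proposition 3.5 of \citet{he2022reduction}, and your reduction — instantaneous loss $|\phi(x,1)^\TT\theta^*|$ on the disagreement wedge, rotational invariance giving $\Omega(r\alpha^2)\asymp\Omega(r^{-1}\|\theta-\theta^*\|^2)$ for sphere-normalized estimators, and the Bayes-duality step showing any (possibly randomized) history-dependent rule is dominated per round by the posterior-mean plug-in rule, whose MSE dominates the infimum over $\mathcal{F}(\Dc)$ — is exactly how such a statement is established. The only points to tighten are that the wedge lower bound must be argued for all angles $\alpha\in[0,\pi]$ (not only the small-angle regime, though this is immediate by monotonicity) and that the literal claim "every policy is a plug-in rule" should be replaced throughout by the Bayes-comparison you already sketch in your final paragraph.
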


\subsection{Proof of Lower Bound}\label{sec:CDP w. memory}
Equipped with \Cref{thm: Regret to estimation error-appendix}, we are able to lower bound the regret by the estimation error.

{\it Proof Outline:} {\bf Step 1} is to decompose the estimation error to the expectations a random variable $Z$ which capture the covariance of the estimator. {\bf Step 2} upper bounds $\Eb[Z]$. {\bf Step 3} combines the previous steps to prove the results.

{\bf Step 1: Decompose the Estimation Error.}

\begin{lemma}[Fingerprinting Lemma]
\label{lemma: Fingerprinting}
    Define random variables $Z$ as follows.
    \begin{align*}
    &Z = (\hat{\theta} - \theta^*)^{\TT}(-e_1\sin\gamma^* + e_2\cos\gamma^*)(-e_1\sin\gamma^* + e_2\cos\gamma^*)^{\TT}\bar{V}(\bar{\theta} - \theta^*) ,
\end{align*}
where $\bar{V} = \Lambda_0 + \sum_{\tau<t}\phi(x_\tau,1)\phi(x_\tau,1)^{\TT}$, and $\bar{\theta} = \bar{V}^{\dagger}\left( \sum_{(x,a,r)\in\Dc_0}\phi(x,a)r + \sum_{\tau<t}\phi(x_\tau,1)r_{\tau}\mathbbm{1}\{a_\tau=1\}\right)$, and recall that $r_{\tau}$ is sampled from $\mathcal{N}(\phi(x_\tau,a_\tau)^{\TT}\theta^*,1)$.

Then, we have
\begin{align*}
    \Eb\left[ \left\|\theta^*-  \hat{\theta}\right\|^2 \right] = 2r^2 - 2r^2 \Eb[Z] .
\end{align*}
\end{lemma}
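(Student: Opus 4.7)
The plan is to apply a Stein-type integration-by-parts identity with respect to the rotation angle $\gamma^*$, exploiting that $\gamma^* \sim \mathrm{Unif}[0,2\pi)$ and that $\partial_{\gamma^*}\theta^* = r\theta^*_\perp$ with $\theta^*_\perp = (-\sin\gamma^*, \cos\gamma^*)^{\TT}$. The quantity $\bar{V}(\bar{\theta}-\theta^*)$ will enter naturally as a scalar multiple of the score of the reward likelihood with respect to $\gamma^*$; once identified, it couples against $\theta^*_\perp$ inside the cross-term of the expansion $\|\theta^*-\hat\theta\|^2 = r^2 + \|\hat\theta\|^2 - 2\langle\theta^*,\hat\theta\rangle$.

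Concretely, the first step is to expand the squared error and focus on $\Eb[\langle\theta^*,\hat\theta\rangle]$. Because the integrand is $2\pi$-periodic in $\gamma^*$, boundary terms vanish and integration by parts yields $\Eb[\langle\partial_{\gamma^*}\theta^*,\hat\theta\rangle] = -\Eb[\langle\theta^*,\hat\theta\rangle\,s_{\gamma^*}]$ plus an explicit contribution from differentiating $\hat\theta$ through the data, where $s_{\gamma^*} := \partial_{\gamma^*}\log p(\mathrm{data}\mid\gamma^*)$. Computing $s_{\gamma^*}$ under the Gaussian reward model, and using that arm $2$ carries no information about $\theta^*$ since $\phi(x,2)=0$, one obtains
\[ s_{\gamma^*} = r(\theta^*_\perp)^{\TT}\left[\sum_{(x,a,r)\in\Dc_0}\phi(x,a)(r - \phi(x,a)^{\TT}\theta^*) + \sum_{\tau<t}\phi(x_\tau,1)(r_\tau - \phi(x_\tau,1)^{\TT}\theta^*)\mathbbm{1}\{a_\tau=1\}\right]. \]
Unrolling the definitions of $\bar{V}$ and $\bar{\theta}$ in the lemma statement, the bracketed sum is exactly $\bar{V}(\bar\theta - \theta^*)$, hence $s_{\gamma^*} = r(\theta^*_\perp)^{\TT}\bar{V}(\bar\theta - \theta^*)$.

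Substituting back and using $\theta^{*\TT}\theta^*_\perp = 0$ to recenter $\hat\theta$ at $\theta^*$ in the contraction against $\theta^*_\perp$, the cross-term becomes $-2r^2\,\Eb[(\hat\theta-\theta^*)^{\TT}\theta^*_\perp(\theta^*_\perp)^{\TT}\bar V(\bar\theta-\theta^*)] = -2r^2\Eb[Z]$; combining with the $\Eb\|\hat\theta\|^2$ piece, which collapses to the desired constant via the same integration-by-parts after decomposing $\hat\theta$ in the orthonormal frame $\{\theta^*/r,\theta^*_\perp\}$, yields $\Eb\|\theta^*-\hat\theta\|^2 = 2r^2 - 2r^2\Eb[Z]$. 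The main obstacle I expect is the score-matching step: rigorously identifying $s_{\gamma^*}$ term-by-term with $\bar V(\bar\theta-\theta^*)$ given the pseudoinverse $\bar V^{\dagger}$ in the definition of $\bar\theta$ and the split over offline/online (arm-one only) contributions. A secondary technical issue is verifying enough regularity of $\hat\theta$ in $\gamma^*$ to interchange differentiation and expectation, which should be mild since $\hat\theta$ depends on $\gamma^*$ only through the data.
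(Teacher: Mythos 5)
Your proposal follows essentially the same route as the paper's proof: expand the cross term $\Eb[\langle\theta^*,\hat\theta\rangle]$, integrate by parts over the uniform angle $\gamma^*$ (periodic boundary terms vanish), identify the resulting score of the Gaussian reward likelihood with $r(\theta^*_\perp)^{\TT}\bar V(\bar\theta-\theta^*)$, and recenter $\hat\theta$ at $\theta^*$ (your orthogonality argument $\theta^{*\TT}\theta^*_\perp=0$ is an equally valid, slightly cleaner justification than the paper's zero-mean-score step). The only imprecision is your claim that the $\Eb\|\hat\theta\|^2$ piece "collapses via the same integration by parts": it does not follow from that device, but simply equals $r^2$ because the estimator is constrained to the sphere $\|\hat\theta\|=r$, which is exactly what the paper assumes.
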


\begin{proof}
Due to $\|\theta^*\| = \|\hat{\theta}\| = r$, it suffices to analyze the term $\Eb\left[\hat{\theta}^{\TT}\theta^*\right].$ Note that $\theta^* = r(\cos\gamma^*,\sin\gamma^*)^{\TT}$. Then, we have
\begin{align*}
\Eb\left[\hat{\theta}^{\TT}\theta^*\right]& = \frac{r}{2\pi} \int_0^{2\pi} e_1^{\TT}\Eb[\hat{\theta}|\gamma^*]\cos\gamma^* + e_2^{\TT}\Eb[\hat{\theta}|\gamma^*]\sin\gamma^* d\gamma^*\\
    & = \frac{r}{2\pi} \left(e_1^{\TT}\Eb[\hat{\theta}|\gamma^*]\sin\gamma^* - e_2^{\TT}\Eb[\hat{\theta}|\gamma^*]\cos\gamma^* \right)\bigg|_{\gamma^*=0}^{\gamma^*=2\pi}\\
    &\quad - \frac{r}{2\pi}\int_0^{2\pi} e_1^{\TT}\frac{\partial}{\partial\gamma^*}\Eb[\hat{\theta}|\gamma^*]\sin\gamma^* + e_2^{\TT}\frac{\partial}{\partial\gamma^*}\Eb[\hat{\theta}|\gamma^*]\cos\gamma^* d\gamma^*\\
    & = r\Eb_{\gamma^*}\left[ \left(-e_1\sin\gamma^* + e_2\cos\gamma^*\right) ^{\TT}\frac{\partial}{\partial\gamma^*}\Eb[\hat{\theta}|\gamma^*] \right].
\end{align*}

For the derivative, it is worth noting that $\Eb[\hat{\theta}|\gamma^*] = \Eb \left[\Eb\left[\hat{\theta}\big|\Dc_0\cup \Dc_t\right] \big|\gamma^*\right]$. We have
\begin{align*}
    \frac{\partial}{\partial\gamma^*}\Eb[\hat{\theta}|\gamma^*] &= \int_{\Dc_0\cup\Dc_t}\Eb\left[\hat{\theta}\big| \Dc_0\cup\Dc_t \right]\frac{1}{(2\pi)^{(t-1)/2}} \frac{\partial}{\partial\gamma^*} \exp\left(-\frac{1}{2} \sum_{(x,a,r)\in\Dc_0\cup\Dc_t}\left(r-\phi(x,a)^\TT\theta^*\right)^2  \right)\\
    & = r\Eb \left[ \Eb\left[\hat{\theta}|\Dc_0\cup \Dc_t\right] (-e_1\sin\gamma^* + e_2\cos\gamma^*)^{\TT}   \sum_{(x,a,r)\in\Dc_0\cup\Dc_t}\phi(x,a)(r-\phi(x,a)^\TT\theta^*)     \bigg| \theta^*\right]\\
    & = r\Eb \left[ \hat{\theta} (-e_1\sin\gamma^* + e_2\cos\gamma^*)^{\TT} \bar{V}(\bar{\theta}-\theta^*) \big| \theta^*\right].
\end{align*}

Combining with the fact that $\Eb[\bar{V}(\bar{\theta} - \theta^*)|\theta^*, \bar{V}] = 0$, we have
\begin{align*}
    \Eb\left[\hat{\theta}^{\TT}\theta^*\right] &= r^2\Eb\left[(-e_1\sin\gamma^* + e_2\cos\gamma^*)^{\TT}\left(\hat{\theta} - \theta^*\right)(-e_1\sin\gamma^* + e_2\cos\gamma^*)^{\TT} \bar{V}(\bar{\theta} - \theta^*)\right]\\
    & = r^2 \Eb[Z].
\end{align*}

Therefore,
\begin{align*}
    \Eb\left[\left\|\theta^* - \hat{\theta}\right\|^2\right] & = 2r^2 - 2\Eb\left[\hat{\theta}^{\TT}\theta^*\right] = 2r^2 - 2r^2 \Eb[Z],
\end{align*}
which completes the proof.

\end{proof}

{\bf Step 2: Upper Bound Each $\Eb[Z]$.}

\begin{lemma}\label{lemma: upper bound Z}
    Under the same setting as in \Cref{lemma: Fingerprinting}, we have
    \begin{align}
        &\Eb[Z] \leq \sqrt{(t-1 + N_0 \Eb_{x\sim q^*, a\sim\rho(\cdot|x)}[(\phi(x,a)^\TT\theta_{\bot}^*)^2])\Eb\left[\|\hat{\theta} - \theta^*\|^2\right]}. \label{eqn: upper bound Zi non-private}
    \end{align}
\end{lemma}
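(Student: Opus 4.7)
The plan is to view $Z$ as a product of two scalars and apply the Cauchy--Schwarz inequality. Letting $\theta_\bot^* := -e_1\sin\gamma^* + e_2\cos\gamma^*$ denote the unit vector orthogonal to $\theta^*$, write
$$Z \;=\; \underbrace{(\hat\theta - \theta^*)^\TT \theta_\bot^*}_{=:\,A}\,\cdot\,\underbrace{(\theta_\bot^*)^\TT \bar V(\bar\theta - \theta^*)}_{=:\,B},$$
so that $\Eb[Z]\le\sqrt{\Eb[A^2]\,\Eb[B^2]}$. For the first factor, since $\|\theta_\bot^*\|=1$, a pointwise Cauchy--Schwarz gives $A^2\le\|\hat\theta-\theta^*\|^2$, hence $\Eb[A^2]\le\Eb[\|\hat\theta-\theta^*\|^2]$; this already furnishes the second factor under the square root in the claim.

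The bulk of the work is to show $\Eb[B^2]\le (t-1) + N_0\,\Eb_{x\sim q^*,\,a\sim\rho(\cdot|x)}[(\phi(x,a)^\TT\theta_\bot^*)^2]$. I would expand $\bar V(\bar\theta - \theta^*)$ using the definitions of $\bar V$ and $\bar\theta$: after subtracting $\bar V\theta^*$ from $\bar V\bar\theta$, the expression collapses (using the identities already invoked in the proof of \Cref{lemma: Fingerprinting}) into feature-weighted residuals
$$\bar V(\bar\theta - \theta^*)\;=\;\sum_{(x,a,r)\in\Dc_0\cup\Dc_t}\phi(x,a)\bigl(r-\phi(x,a)^\TT\theta^*\bigr),$$
where each arm-$2$ summand is identically zero because $\phi(x_\tau,2)=0$. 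Projecting on $\theta_\bot^*$ yields $B=\sum_i \eta_i\,(\theta_\bot^*)^\TT\phi_i$, where the $\eta_i$'s are the reward noises, zero-mean with variance (sub-Gaussian parameter) at most $1$, conditionally independent across $i$ given the filtration of contexts and actions. Taking conditional expectations kills the cross terms, giving
$$\Eb[B^2] \;\le\; \sum_{(x,a)\in\Dc_0}\Eb\bigl[((\theta_\bot^*)^\TT\phi(x,a))^2\bigr] + \sum_{\tau<t}\Eb\bigl[((\theta_\bot^*)^\TT\phi(x_\tau,1))^2\mathbbm{1}\{a_\tau=1\}\bigr].$$
The offline sum equals $N_0\,\Eb_{x\sim q^*,\,a\sim\rho(\cdot|x)}[(\phi(x,a)^\TT\theta_\bot^*)^2]$ by i.i.d.\ sampling. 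Each online summand is bounded by $\|\phi(x_\tau,1)\|^2\|\theta_\bot^*\|^2\le 1$, for a total of at most $t-1$. Combining with the bound on $\Eb[A^2]$ gives the claim.

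The main obstacle is justifying the variance computation for $B$ in the presence of adaptivity: the online actions $a_\tau$ are chosen from past rewards and $\bar V$ is itself data-dependent, so unconditional independence fails. I would handle this by a tower-of-conditioning argument --- condition first on the entire sequence of contexts and actions (so that the $(\theta_\bot^*)^\TT\phi_i$'s and the indicators $\mathbbm{1}\{a_\tau=1\}$ become deterministic), then use zero-mean and conditional independence of the rewards $\{\eta_i\}$ along the online filtration to eliminate the cross terms, and finally take an outer expectation. A secondary care is needed to reconcile the exact algebraic form of $\bar V(\bar\theta-\theta^*)$ with the noise-sum representation used above, but this is already implicit in the identities used to establish \Cref{lemma: Fingerprinting} and requires no new computation.
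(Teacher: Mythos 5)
Your proposal is correct and follows essentially the same route as the paper: Cauchy--Schwarz splitting $\Eb[Z]^2 \le \Eb[\|\hat\theta-\theta^*\|^2]\,\Eb[B^2]$ (using $\|\theta_\bot^*\|=1$), rewriting $\bar V(\bar\theta-\theta^*)$ as the noise-weighted feature sum, killing cross terms by the zero-mean noise, and bounding the offline and online contributions by $N_0\,\Eb_{x\sim q^*,a\sim\rho(\cdot|x)}[(\phi(x,a)^\TT\theta_\bot^*)^2]$ and $t-1$ respectively. Your explicit tower-of-conditioning treatment of the adaptive online actions is a more careful justification of the cross-term cancellation than the paper spells out, but it is the same argument.
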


\begin{proof}

Recall that 
\begin{align*}
Z_i =  (\hat{\theta} - \theta^*)^{\TT}(-e_1\sin\gamma^* + e_2\cos\gamma^*)(-e_1\sin\gamma^* + e_2\cos\gamma^*)^{\TT} \sum_{(x,a,r)\in\Dc_0\Dc_t} \phi(x ,a )(r  - \phi(x , a )^{\TT} \theta^* ).
\end{align*}

By the Cauchy's inequality, we have
\begin{align*}
    \Eb[Z]^2 &\leq \Eb\left[\|\hat{\theta} - \theta^*\|^2\right] \Eb\left[\left(\big(-e_1\sin\gamma^* + e_2\cos\gamma^*\big)^{\TT} \sum_{(x,a,r)\in\Dc_0\cup\Dc_t} \phi(x ,a )(r - \phi(x ,a )^{\TT}\theta^*) \right)^2\right] \\
    & = \Eb\left[\| \hat{\theta} - \theta^*\|^2\right] \Eb\left[ \sum_{ (x,a,r)\in\Dc_0\cup\Dc_t} \left(\big(-e_1\sin\gamma^* + e_2\cos\gamma^*\big)^{\TT} \phi(x,a) \right)^2\right]\\
    & \leq (t-1 + N_0 \Eb_{x\sim q^*, a\sim\rho(\cdot|x)}[(\phi(x,a)^\TT\theta_{\bot}^*)^2])\Eb\left[\|\hat{\theta} - \theta^*\|^2\right].
\end{align*}

\end{proof}

{\bf Step 3: Lower Bound the Total Regret.}

\begin{theorem}[Restatement of \Cref{thm:lowerbound}]\label{thm:lowerbound-appdix}
    Under the instance  described in \Cref{sec:hard instance}, any hybrid RL algorithm must incur a sub-optimality gap in 
    \(\Omega \left( \frac{1}{\sqrt{N_0/\mathtt{C}(\pi^*|\rho) + N_1}} \right),\)
    and regret in 
    \(
    \Omega\left( \frac{N_1}{\sqrt{N_0/\mathtt{C}(\pi^{-\varepsilon}|\rho) + N_1}}\right).\)
\end{theorem}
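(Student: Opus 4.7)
The plan is to work with the two-arm linear contextual bandit instance from \citet{he2022reduction} already fixed in \Cref{sec:hard instance}, and to reduce both lower bounds to a lower bound on the parameter estimation error $\Eb[\|\hat\theta - \theta^*\|^2]$ under the uniform prior $\gamma^* \sim \mathrm{Unif}[0,2\pi)$ on the angle of $\theta^*$. First I would invoke \Cref{thm: Regret to estimation error-appendix}, which shows that the sub-optimality gap of any policy produced by the algorithm is at least $\Omega(r^{-1}\Eb[\|\hat\theta-\theta^*\|^2])$ for the induced parameter estimate $\hat\theta$, and that the total regret is at least $\Omega(r^{-1}\sum_{t\le N_1}\Eb[\|\hat\theta_t-\theta^*\|^2])$. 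The entire problem thus reduces to a Bayesian per-round estimation-error lower bound when the algorithm has access to both $\Dc_0$ and the first $t-1$ online rounds.

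Next I would apply the Fingerprinting identity of \Cref{lemma: Fingerprinting}. Integration by parts in $\gamma^*$ produces $\Eb[\|\hat\theta-\theta^*\|^2] = 2r^2(1 - \Eb[Z])$, where $Z$ pairs the estimator against the tangent direction $(-e_1\sin\gamma^*+e_2\cos\gamma^*)$ weighted by the score of the posterior. Then Cauchy--Schwarz (\Cref{lemma: upper bound Z}) gives $\Eb[Z]\le \sqrt{A_t\cdot \Eb[\|\hat\theta-\theta^*\|^2]}$ with
\[
A_t \;:=\; (t-1) + N_0\,\Eb_{x\sim q^*, a\sim \rho(\cdot|x)}\!\left[(\phi(x,a)^{\TT}\theta^*_{\bot})^2\right],
\]
which is the Fisher-information-like quantity along the tangent direction. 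Setting $y := \Eb[\|\hat\theta-\theta^*\|^2]$ yields the quadratic inequality $y \ge 2r^2 - 2r^2\sqrt{A_t y}$, and choosing $r = \Theta(1/\sqrt{A_t})$ (permissible since $r\le 1/\sqrt{d}$) gives $y = \Omega(1/A_t)$.

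The third step is to match $A_t$ with the concentrability-coefficient form in the statement. For the ridge-regression oracle on this instance, $\mathtt{U}_{\Mc^*}(\pi)^2$ is proportional to $1/\Eb_{x\sim q^*,a\sim\pi(\cdot|x)}[(\phi(x,a)^\TT\theta^*_{\bot})^2]$ when restricted to the perpendicular direction that drives the remaining uncertainty; taking the ratio in \Cref{def:Coverbility} then gives $\mathtt{C}(\pi^*|\rho) = N_0/\Eb_{\rho}[(\phi^\TT\theta^*_{\bot})^2]$ for the sub-optimality gap and $\mathtt{C}(\pi^{-\varepsilon}|\rho)$ for the regret branch (where the worst sub-optimal comparator pins down the relevant perpendicular direction). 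Plugging into $y\ge \Omega(1/A_t)$, dividing by $r = \Theta(1/\sqrt{A_{N_1}})$, and invoking \Cref{thm: Regret to estimation error-appendix} yields the sub-optimality gap lower bound $\Omega(1/\sqrt{N_0/\mathtt{C}(\pi^*|\rho)+N_1})$. For regret, summing the per-round bound $1/\sqrt{A_t}$ for $t=1,\ldots,N_1$ via the standard integral comparison $\sum_{t=1}^{N_1}(c+t)^{-1/2} = \Theta(N_1/\sqrt{c+N_1})$ with $c = N_0/\mathtt{C}(\pi^{-\varepsilon}|\rho)$ produces the claimed $\Omega(N_1/\sqrt{N_0/\mathtt{C}(\pi^{-\varepsilon}|\rho)+N_1})$.

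The main obstacle I anticipate is cleanly passing from the algorithm-independent concentrability coefficient $\mathtt{C}(\pi|\rho)$ defined through the minimax oracle $\mathtt{Alg}_0$ in \Cref{def:uncertainty} to the concrete Fisher-information quantity $\Eb_{\rho}[(\phi^{\TT}\theta^*_{\bot})^2]$ that appears in the Cauchy--Schwarz step, since the definition is in terms of an abstract best estimator rather than ridge regression directly; this requires the near-minimax optimality argument mentioned around \Cref{coro:LinConBandits}. A secondary subtlety is the regret branch: we must argue that the worst-case suboptimal comparator in the hard instance carries concentrability at most $\mathtt{C}(\pi^{-\varepsilon}|\rho)$, which forces a careful choice of $\varepsilon = \tilde O(1/\sqrt{N_0+N_1})$ consistent with \Cref{thm:main regret} so that rounds with $\mathrm{Reg}_t \le \varepsilon$ contribute only a negligible $N_1\varepsilon$ additive term.
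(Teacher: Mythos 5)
Your proposal follows essentially the same route as the paper's proof: the same hard instance and reduction via \Cref{thm: Regret to estimation error-appendix}, the same Fingerprinting identity and Cauchy--Schwarz bound yielding the quantity $N_0\,\Eb_{x\sim q^*,a\sim\rho}[(\phi^{\TT}\theta^*_{\bot})^2] + (t-1)$, the same identification of that quantity with $N_0/\mathtt{C}(\pi^*|\rho)$ by re-running the argument on $\Dc_0$ alone, and the same observation that $\mathtt{U}(\pi^*|\Dc_0)=\Theta(\mathtt{U}(\pi^{-\varepsilon}|\Dc_0))$ handles the regret branch. The only cosmetic difference is that the paper fixes a single prior radius $r=1/\sqrt{N_0\alpha+N_1}$ and evaluates the resulting sum directly (rather than quoting a per-round $1/\sqrt{A_t}$ bound), which is what your ``dividing by $r=\Theta(1/\sqrt{A_{N_1}})$'' step amounts to anyway.
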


\begin{proof}
Combine Step 1 (\Cref{lemma: Fingerprinting}) and Step 2 (\Cref{lemma: upper bound Z}), we have
\begin{align*}
    2r^2 &= \Eb\left[\|\hat{\theta} -\theta^*\|^2\right] + 2r^2 \Eb\left[Z \right]\\
    &\leq  \Eb\left[\|\hat{\theta} -\theta^*\|^2\right] + 2r^2\sqrt{(N_0\alpha + t-1) \Eb\left[\|\hat{\theta}-\theta^*\|^2\right]},
\end{align*}
where $\alpha = \Eb_{x\sim q^*, a\sim\rho(\cdot|x)}[(\phi(x,a)^\TT\theta_{\bot}^*)^2]$.

Therefore,
\begin{align*}
    r^2\geq \Eb\left[\|\hat{\theta} -\theta^*\|^2\right]\geq \frac{r^2}{4r^2 (N_0\alpha + t-1) + 4}.
\end{align*}

Substituting the above result into the generic lower bound in \Cref{thm: Regret to estimation error-appendix}, and selecting $r=1/\sqrt{N_0\alpha + N_1}$, we conclude that
\begin{align*}
    \text{Regret}(T) &\geq \Theta\left(\sum_{t\in[T]} \frac{r}{ 4 r^2 (N_0\alpha + t-1) + 4} \right)\\
    & = \Theta\left(  \frac{1}{4r} \log  \left( 1 + \frac{ r^2 N_1 }{1 + r^2 N_0 \alpha} \right) \right)\\
    & = \Theta \left( \frac{r N_1 }{1 + r^2 N_0\alpha} \right) \\
    & = \Theta \left( \frac{N_1 }{\sqrt{N_0\alpha + N_1}} \right) .
\end{align*}

To establish the relationship between $\alpha$ and the concentrability coefficient, we repeatedly apply \Cref{lemma: Fingerprinting} and \Cref{lemma: upper bound Z} on dataset $\Dc_0$. Then, we obtain
    \begin{align*}
        r^2 \geq \Eb[ \|\hat{\theta} - \theta^*\|^2 ] \geq \frac{r^2}{ 4r^2 N_0 \alpha  + 4}.
    \end{align*}

By choosing $r = 1/\sqrt{N_0\alpha}$, we have $ \frac{1}{r}\Eb[\|\hat{\theta} - \theta^*\|^2 ]  = \Theta(1/\sqrt{N_0\alpha})$. Thus, 
    \begin{align*}
    \alpha^{-1} &= \Theta\left( \frac{ \frac{1}{r} \Eb[\|\hat{\theta} - \theta^*\|^2 ]  }{ 1/\sqrt{N_0} } \right)^2 \\
    & = \Theta \left( \frac{ \mathtt{U}(\pi^*|\Dc_0) }{ \mathtt{U}(\rho|\Dc_0) } \right)^2\\
    & = \mathtt{C}(\pi^*|\rho).
    \end{align*}

 The proof is completed by noting that $\mathtt{U}(\pi^*|\Dc_0) = \Theta(\mathtt{U}(\pi^{-\varepsilon}|\Dc_0))$ for any $\varepsilon = O(1/\sqrt{N_0+N_1})$ in the setting described in \Cref{sec:hard instance}.   
\end{proof}

\section{Additional Experiments}\label{append:experiment}

In this section, we provide additional experimental results evaluating the performance of algorithms instantiated within our proposed framework in more realistic environments. Specifically, we consider a contextual linear bandit constructed from the MovieLens dataset~\citep{harper2015movielens} and a tabular MDP discretized from the Mountain Car environment ~\citep{Moore90efficientmemory-based} implemented in Gymnasium~\citep{towers2024gymnasium}.

\subsection{Experimental Results with MovieLens Dataset}
\textbf{Environment.} We construct our linear contextual bandit environment using the MovieLens-100K dataset~\citep{harper2015movielens}, which provides sparse ratings from 943 users on 1682 movies. Following \citet{bogunovic2021stochastic}, we first apply collaborative filtering~\citep{Keval2019Collaborative} to complete the partially observed rating matrix. We then factorize the resulting rating matrix $R = [r_{i,a}]\in \Rb^{943\times1682}$ using non-negative matrix factorization with $3$ latent factors, yielding $R = XH$. Here, $X \in \Rb^{943\times3}$ represents user feature vectors, and $H \in \Rb^{3\times1682}$ represents movie feature vectors. In the linear contextual bandit framework, we treat each row in $X$ (i.e., each user’s feature vector) as the context, denoted by $x_i\in \Rb^{3}$ for the $i$-th user. The contexts are known in the contextual bandits. Meanwhile, we randomly select 20 columns of $H$ to serve as the arms (i.e., 20 movies for chosen), where each arm’s unknown parameter vector corresponds to a movie’s feature vector that must be estimated from data, denoted as $\theta_a\in\Rb^{3}$. At each decision point, the linear contextual bandit model randomly provides a user context and the agent predicts the expected reward for choosing arm $a$ (i.e., recommending a movie) based on observed context $x$.

\textbf{Offline Dataset Collection.} Similar to the offline data collection method in the main text, we adopt the Boltzmann policy~\citep{szepesvari2022algorithms} as our behavior policy. Specifically, the policy chooses an action $a$ according to
\[
  \rho(a | x) = \frac{\exp\{k r(x,a)\}}{\sum_{b \in \mathcal{A}} \exp\{k r(x,b)\}},
\]
where $k\in\mathbb{R}$, $\mathcal{A}$ is the set of all arms (movies), and $r(x,a)$ denotes the true reward function. Under the Boltzman policy, a larger $k$ makes $\rho$ closer to the greedy policy (i.e., always selecting the highest reward arm), whereas a smaller $k$ makes the policy more exploratory.

The three different behavior policies, $\rho_1$, $\rho_2$, and $\rho_3$, are constructed in the same way introduced in the main text, each defined by a distinct value of $k$ in the Boltzmann distribution. As $k$ decreases, the policy’s action deviates from the optimal choice, increasing the concentrability coefficient $\mathtt{C}(\pi^* | \rho)$ and thus degrading coverage of the optimal policy. \Cref{fig:append:MovieLens}(a) provide the exact $k$ values and the approximated $\mathtt{C}(\pi^{*} | \rho)$ in the environment. 

In \Cref{fig:movie_1} and \Cref{fig:movie_3}, we fix the offline dataset size $N_0$ to be $4000$ and vary $k$ to illustrate the impact of different levels of coverage. In constrast, in \Cref{fig:movie_2} and \Cref{fig:movie_4}, we fix the behavior policy to be $\rho_2$ while varying the offline dataset size. This setup allows us to systematically examine how both the behavior policy’s level of optimality and the sample size affect the performance of various learning algorithms.

\begin{figure*}[h!]

\begin{minipage}[b]{.19\linewidth}
    \centering
    \begin{tabular}{c|c|c}
        $\rho$ & $\mathtt{C}(\pi^*|\rho)$ & $k$ \\
        \hline
        $\rho_1$  & $1.0$ & $\infty$ \\
        $\rho_2$  & $2.43$ & $5$      \\
        $\rho_3$  & $4.05$ & $0$
        \end{tabular}
    \subcaption{CE in bandits}
  \end{minipage}
  \hfill
  \begin{minipage}[b]{.19\linewidth}
    \centering
    \includegraphics[width=0.95\linewidth]{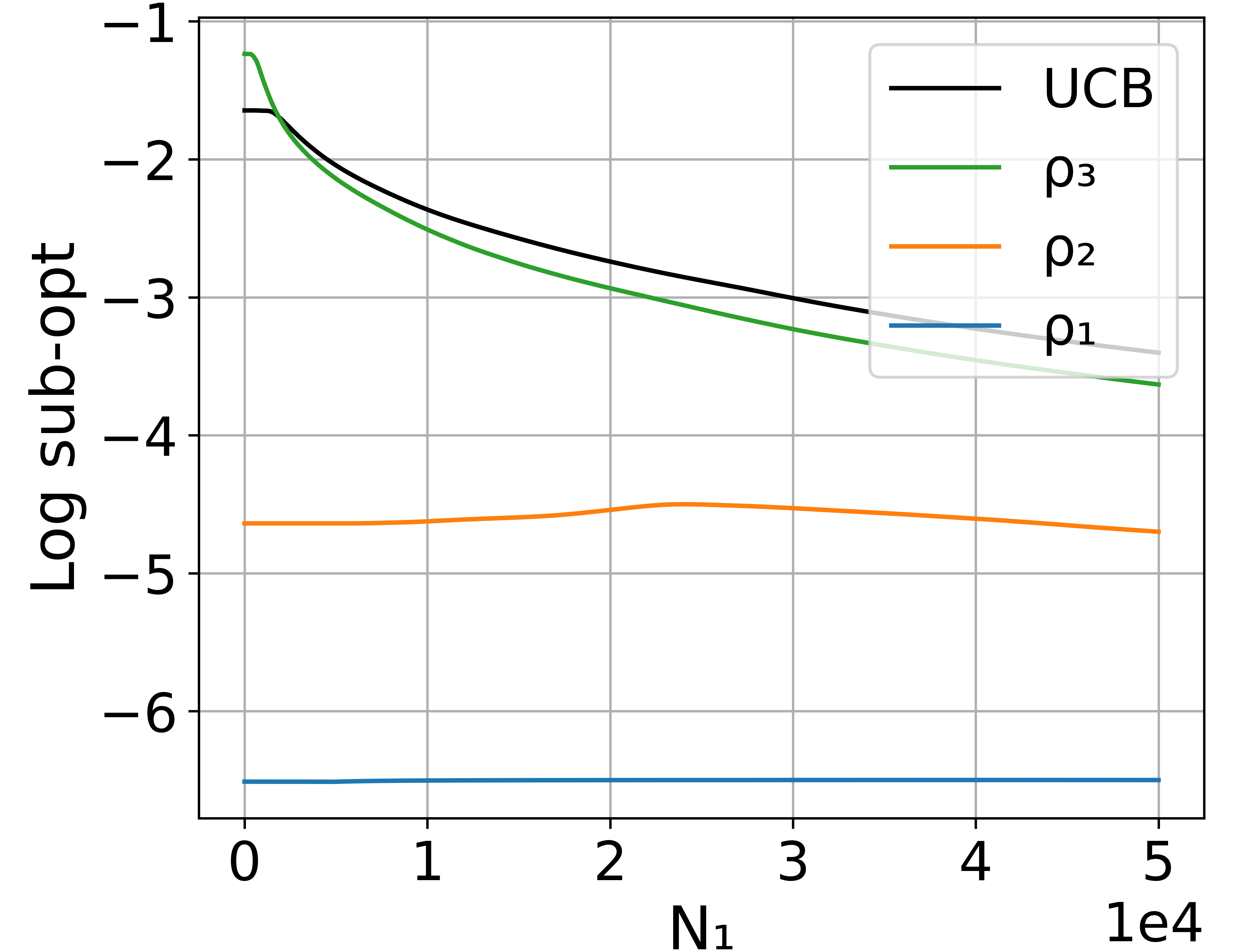}
    \subcaption{SOG v.s. $\rho$} 
    \label{fig:movie_1}
  \end{minipage}
  \hfill
  \begin{minipage}[b]{.19\linewidth}
    \centering
    \includegraphics[width=0.95\linewidth]{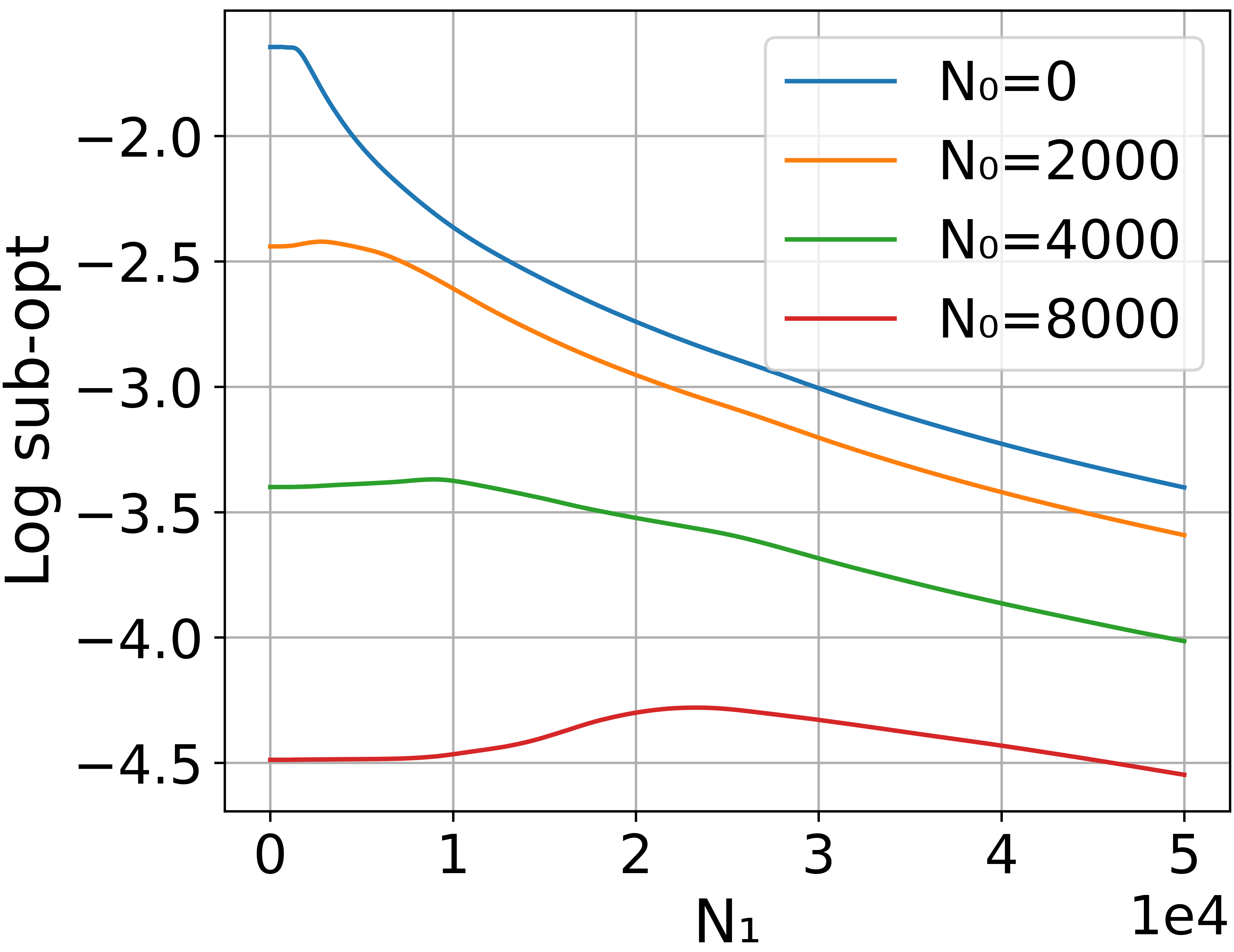} 
    \subcaption{SOG v.s. $N_0$}
    \label{fig:movie_2}
  \end{minipage}
  \hfill
  \begin{minipage}[b]{.19\linewidth}
    \centering
    \includegraphics[width=0.95\linewidth]{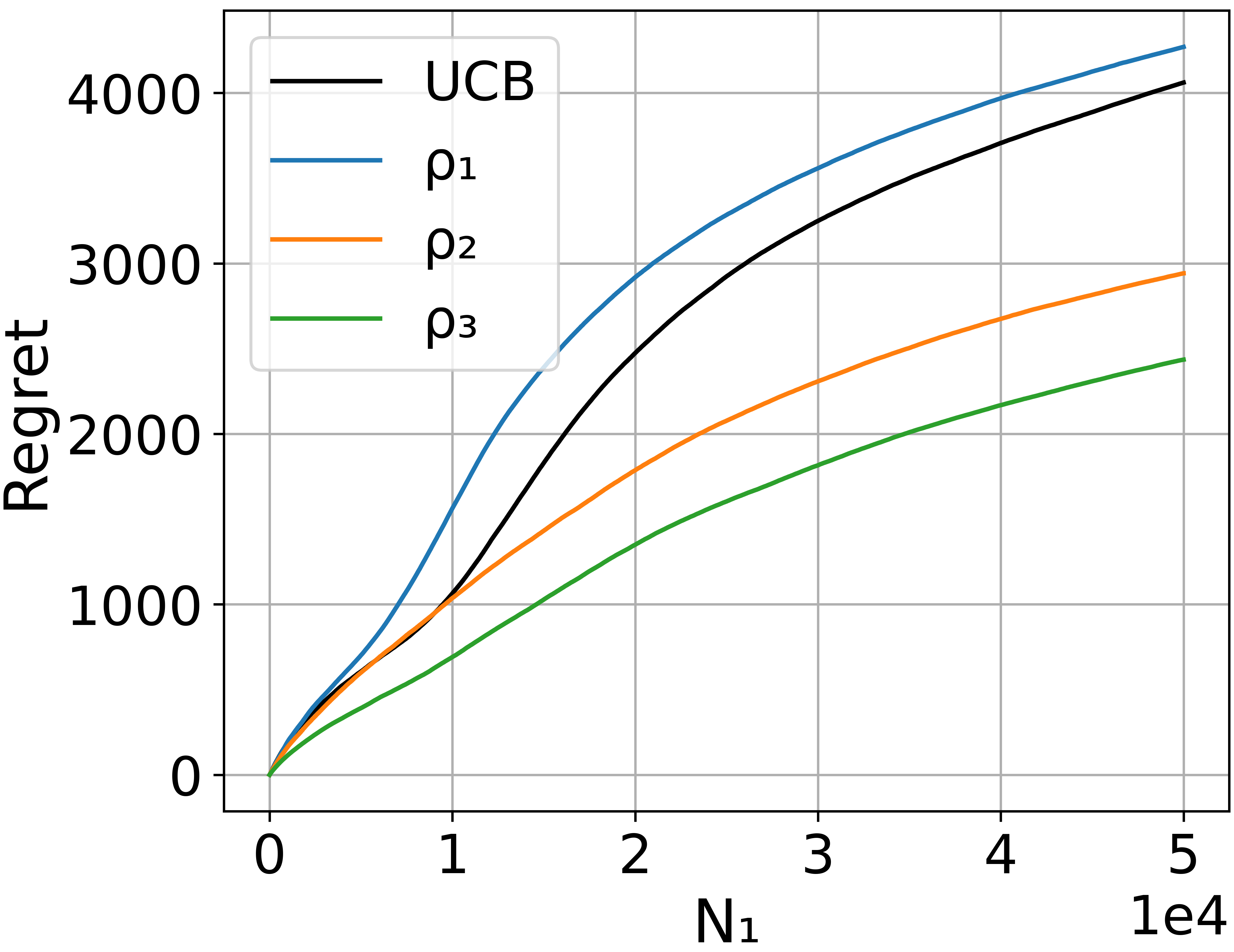} 
    \subcaption{Regret v.s. $\rho$}
    \label{fig:movie_3}
  \end{minipage}
   \hfill
  \begin{minipage}[b]{.19\linewidth}
    \centering
    \includegraphics[width=0.95\linewidth]{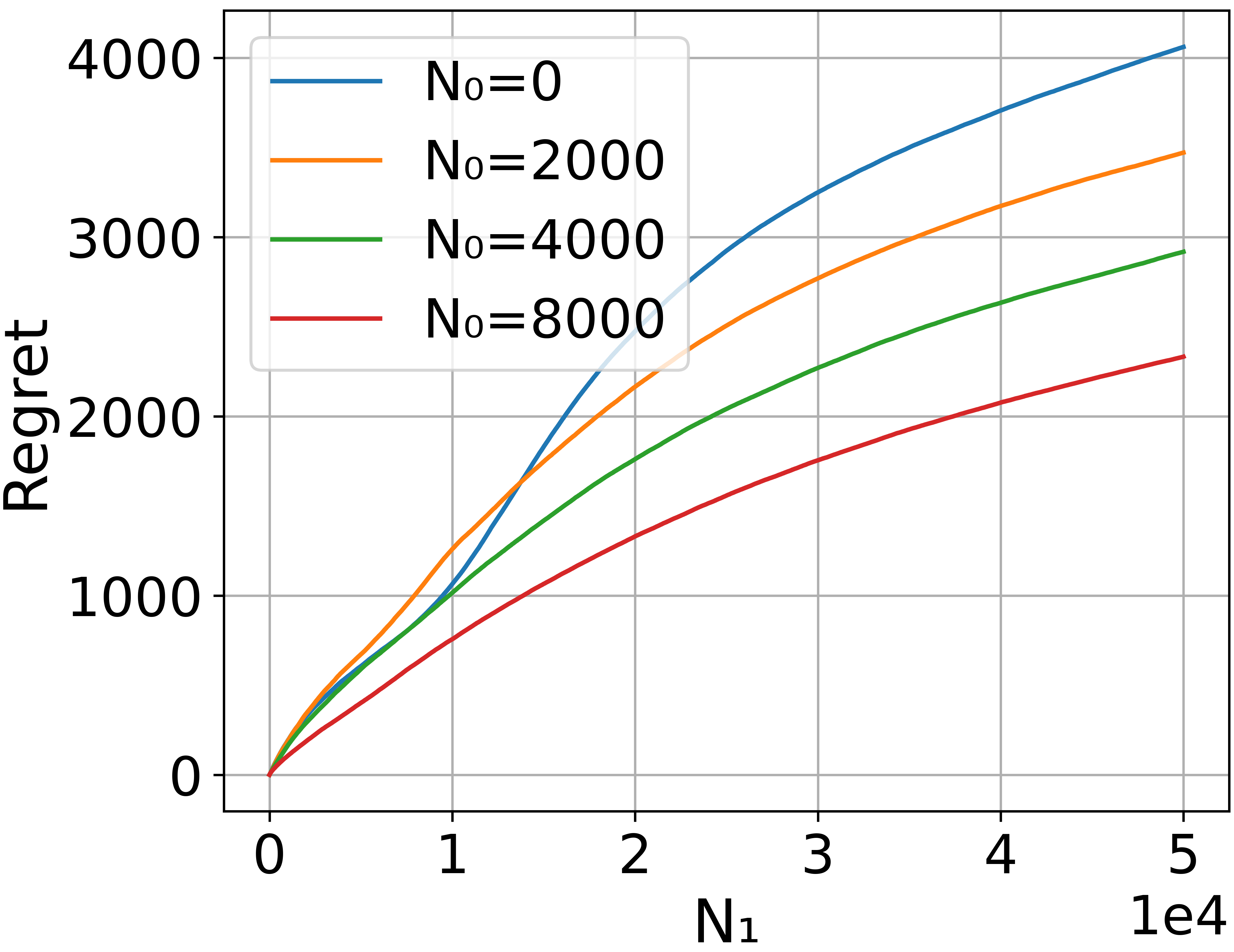} 
    \subcaption{Regret v.s. $N_0$}
    \label{fig:movie_4}
  \end{minipage}
  \caption{Experimental results on sub-optimality gap (SOG) and regret for different behavior policies and $N_0$. Figure (a) shows the concentrability coefficients (CE) of three different behavior policies in MovieLens linear contextual bandits.
  }
  \label{fig:append:MovieLens}
\end{figure*}

\textbf{Results.}
Figure~\ref{fig:append:MovieLens} compares sub-optimality gaps and regrets under various offline behavior policies and dataset sizes, using the pure UCB algorithm without offline data as the baseline. Overall, the results mirror the main text’s theoretical and empirical findings. 

For gap minimization, policies with smaller concentrability coefficients $\mathtt{C}(\pi^*|\rho)$ or larger offline samples lead to tighter gaps, reaffirming that offline data focused on covering optimal actions can greatly enhance efficiency. 

In regret minimization, the algorithm benefits from the offline dataset. It benefit from the offline data more, if the offline data has diverse offline coverage generated by larger $\mathtt{C}(\pi^*|\rho)$ and larger offline data size, as shown in \Cref{fig:movie_3} and \Cref{fig:movie_4}. 
A slight exception occurs when the offline dataset is collected by optimal actions. In this case, the offline data has very big $\mathtt{C}(\pi^{-\epsilon}|\rho)$ and may encourage the algorithm to explore the sub-optimal arms first. Nonetheless, the outcomes are still consistent with theoretical predictions, highlighting the distinct offline data requirements for gap minimization versus regret minimization.

\subsection{Experimental Results with Mountain Car Environment}

\textbf{Environment.} The Mountain Car environment\citep{Moore90efficientmemory-based} is a classic benchmark task in which an underpowered car must drive up a steep slope, featuring a continuous state space over position $[-1.2,0.6]$ and velocity $[-0.07,0.07]$ and a discrete action set for accelerating forward, backward and no acceleration. To model the Mountain Car environment and implement a UCB-type algorithm, we first discretize the state space and apply the UCB algorithm on the tabular MDP. Specifically, we designate any state with position exceeding 0.5 as the goal state, while all other states are formed by uniformly discretizing the position range $[-1.2,0.5]$ and velocity range $[-0.07,0.07]$ into 30 equal intervals each, yielding 901 discrete states in total. The agent receives a reward of 1 only upon taking an action from the goal state; otherwise, the reward is 0. After an action is taken in the goal state, the environment is reset to its start configuration, then follows the original Mountain Car transition dynamics.

\textbf{Offline Data Collection.}
Different from Boltzmann policy-based offline data collection, we use \Cref{alg:append:MountCarOffline}, which iteratively interleaves exploration and exploitation to generate an offline dataset for the Mountain Car environment. Specifically, at each iteration, a model $\hat{P}$ is used to estimate two Q functions, $\hat{Q}_b(s,a)$ and $\hat{Q}_r(s,a)$. Here, $b(s,a)$ is an exploration bonus function akin to a UCB term~\citep{Auer:2002}, encouraging broader exploration, whereas $r(s,a)$ is the known reward function driving exploitation. These estimates yield two policies, exploration-focused $\hat{\pi}_b$ and exploitation-focused $\hat{\pi}_r$. Trajectories collected under these policies populate two datasets, $D$ and $D'$, respectively. After each round of data collection, both the model $\hat{P}$ and the bonus function $b(s,a)$ are updated, reflecting the optimism-in-the-face-of-uncertainty principle characteristic of UCB-based methods. After 10{,}000 iterations, trajectories from $D$ and $D'$ are combined using the offline coefficient $\alpha$, thus balancing exploration and exploitation in the final offline dataset. 

\begin{algorithm}[t]
    \caption{Mountain Car Offline Data Collection}
    \label{alg:append:MountCarOffline}
    \begin{algorithmic}[1]
        \STATE \textbf{Input:} Coefficient $\alpha\in[0,1]$, number of fffline trajectories $N_0 \le 10000$, discount factor $\gamma=0.99$.
        \STATE \textbf{Initialization:} $\Dc \gets \emptyset$, $\Dc' \gets \emptyset$, $\hat{P}$ as a uniform transition model, $b(s,a) \gets 1$
        \FOR{$i = 1$ \textbf{to} $10000$}
            \STATE Estimate 
            \[
              \hat{Q}_b(s_0,a_0) = \hat{\mathbb{E}}_{\hat{P}}\bigl[\sum_t \gamma^t b(s_t,a_t)\mid s_0,a_0\bigr], 
            \]
            \[
              \hat{Q}_r(s_0,a_0) = \hat{\mathbb{E}}_{\hat{P}}\bigl[\sum_t \gamma^t r(s_t,a_t)\mid s_0,a_0\bigr]
            \]
            \STATE Derive policies $\hat{\pi}_b^i$ from $\hat{Q}_b$ and $\hat{\pi}_r^i$ from $\hat{Q}_r$
            \STATE Collect trajectories $\tau_i$ under $\hat{\pi}_b^i$ and $\tau'_i$ under $\hat{\pi}_r^i$
            \STATE Update $\Dc \gets D \cup \{\tau_i\}$ and $\Dc' \gets D' \cup \{\tau'_i\}$
            \STATE Update $\hat{P}$ and $b(s,a)$ using $\Dc$
        \ENDFOR
        \STATE Sample $\alpha \, N_{0}$ trajectories from $\Dc'$ and $(1-\alpha)\,N_0$ trajectories from $\Dc$ to form $\Dc_{0}$
        \STATE \textbf{Output:} Offline dataset $\Dc_0$
    \end{algorithmic}
\end{algorithm}

The motivation for this approach is that, in the Mountain Car environment, a purely uniform policy tends to remain confined to the valley, failing to explore higher positions effectively. This leads to inadequate coverage of the state-action space. By contrast, the pure online exploration policy $\hat{\pi}_b$ naturally seeks out all state-action pairs and thereby achieves broader coverage, populating $\Dc$ with a wide range of trajectories. Meanwhile, the pure exploitation policy $\hat{\pi}_r$ focuses on maximizing rewards and populates $\Dc'$ with near-optimal behavior. Finally, the offline coefficient $\alpha$ determines how these two datasets are combined into the final offline dataset $\Dc_0$, so \Cref{alg:append:MountCarOffline} can simulate the offline dataset with different coverage for our hybrid learning. Higher $\alpha$ generates better coverage on the optimal policy and lower $\alpha$ generates better coverage on all policies. Remarkably, this offline dataset collection method does not contradict our setting that the offline data should be collected by one fixed policy. The offline dataset can be viewed as being collected by a mixture policy that randomly samples $\{\hat{\pi}^i_b\}_i$ and $\{\hat{\pi}^i_r\}_i$ for $N_0$ times. However, it is difficult to provide the estimated $\mathtt{C}(\pi^*|\rho)$ for each policy or data distribution. Intuitively, a greater offline coefficient $\alpha$ results in smaller $\mathtt{C}(\pi^*|\rho)$ and greater $\mathtt{C}(\pi^{-\epsilon}|\rho)$; a smaller offline coefficient $\alpha$ results in greater $\mathtt{C}(\pi^*|\rho)$ and smaller $\mathtt{C}(\pi^{-\epsilon}|\rho)$.

\begin{figure*}[h!]

  \begin{minipage}[b]{.24\linewidth}
    \centering
    \includegraphics[width=0.95\linewidth]{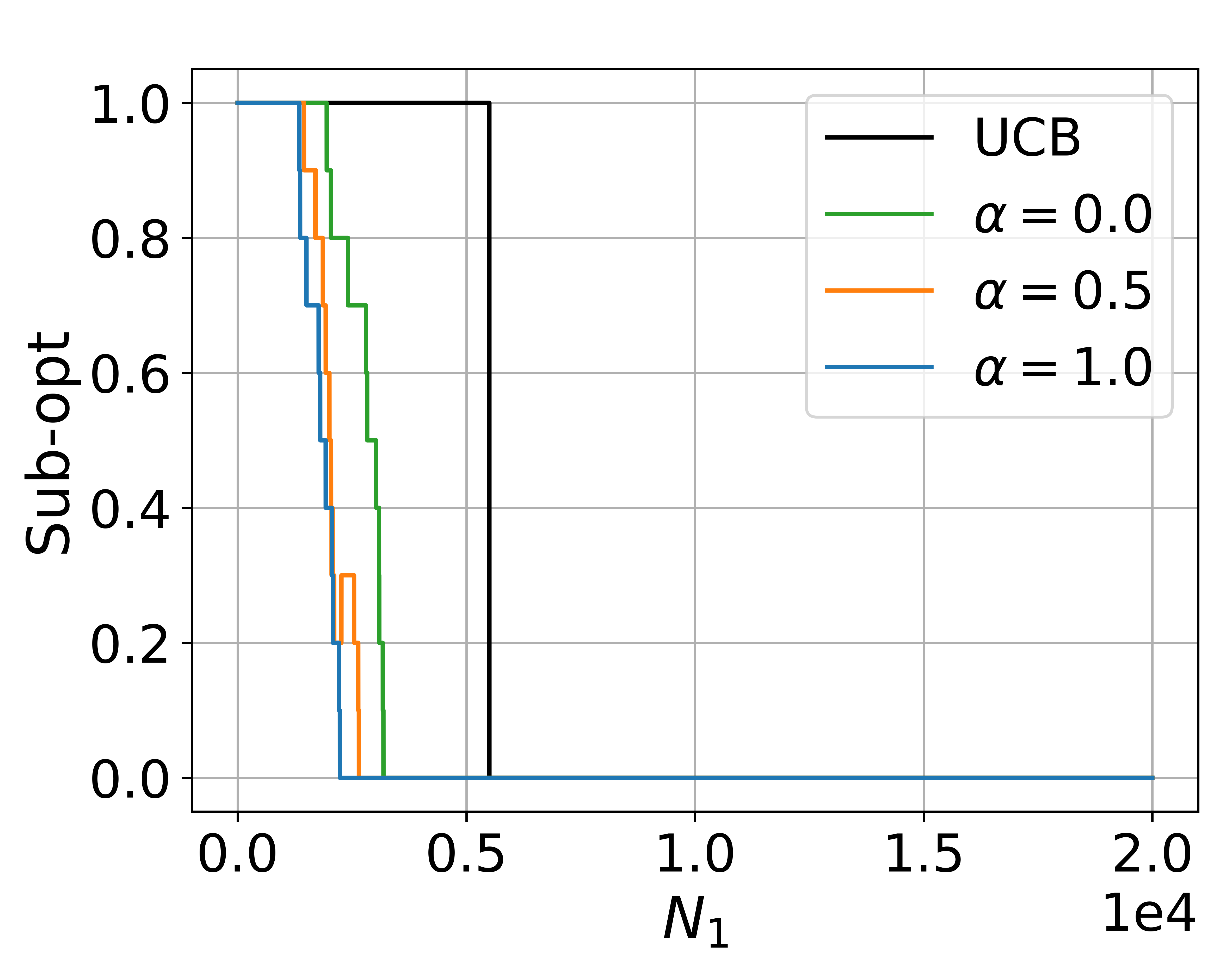}
    \subcaption{SOG v.s. $\alpha$} 
    \label{fig:MountCar_1}
  \end{minipage}
  \hfill
  \begin{minipage}[b]{.24\linewidth}
    \centering
    \includegraphics[width=0.95\linewidth]{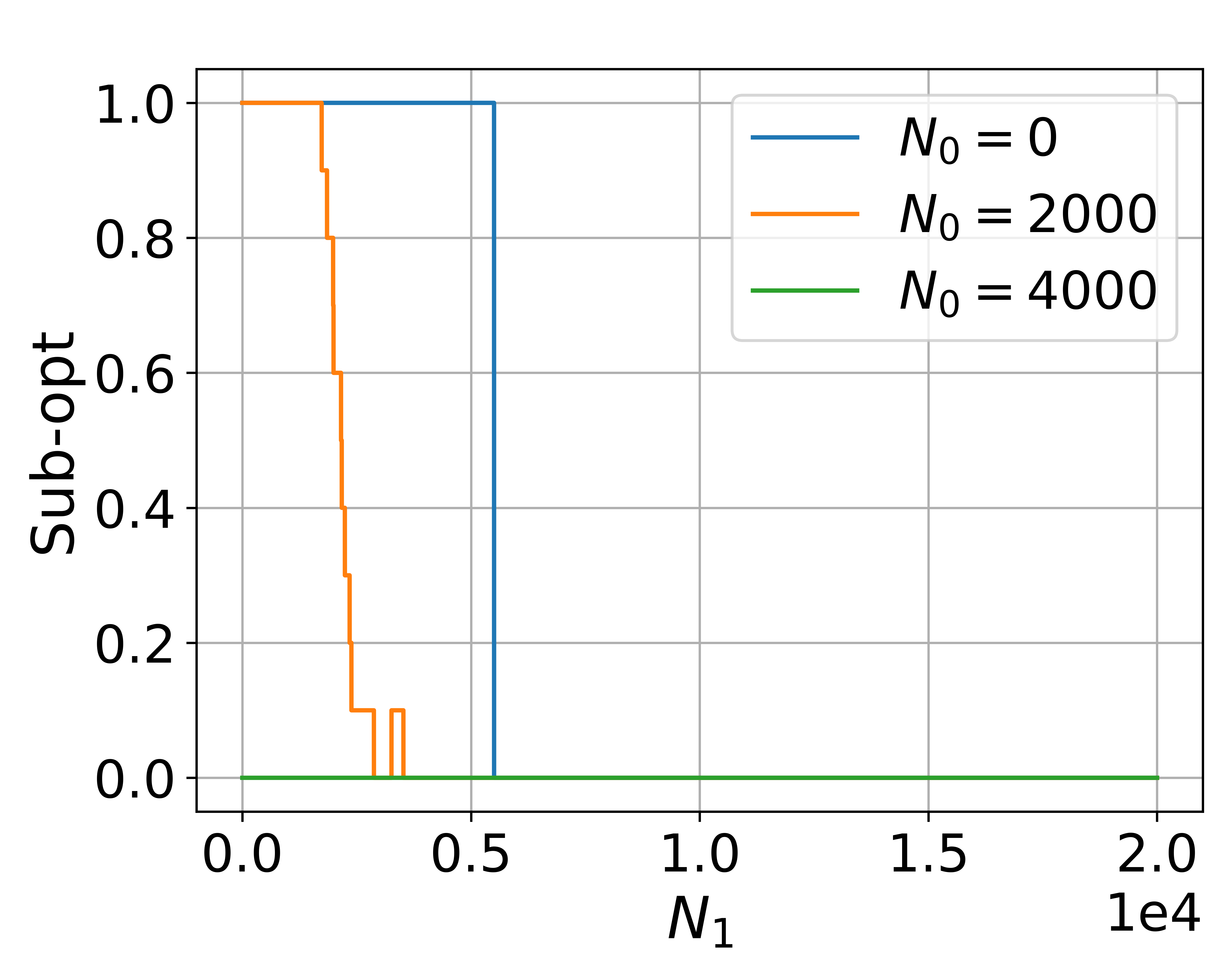} 
    \subcaption{SOG v.s. $N_0$}
    \label{fig:MountCar_2}
  \end{minipage}
  \hfill
  \begin{minipage}[b]{.24\linewidth}
    \centering
    \includegraphics[width=0.95\linewidth]{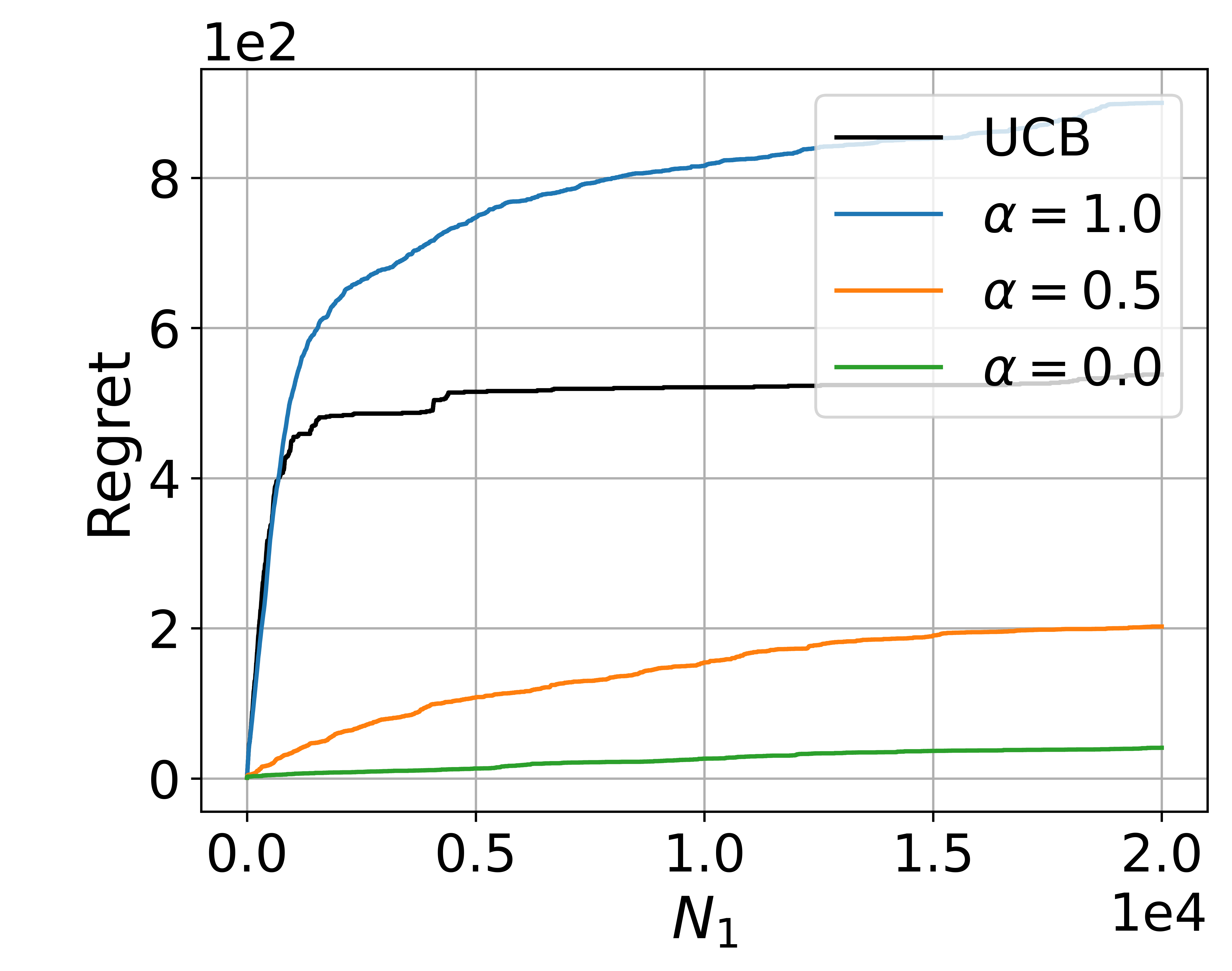} 
    \subcaption{Regret v.s. $\alpha$}
    \label{fig:MountCar_3}
  \end{minipage}
   \hfill
  \begin{minipage}[b]{.24\linewidth}
    \centering
    \includegraphics[width=0.95\linewidth]{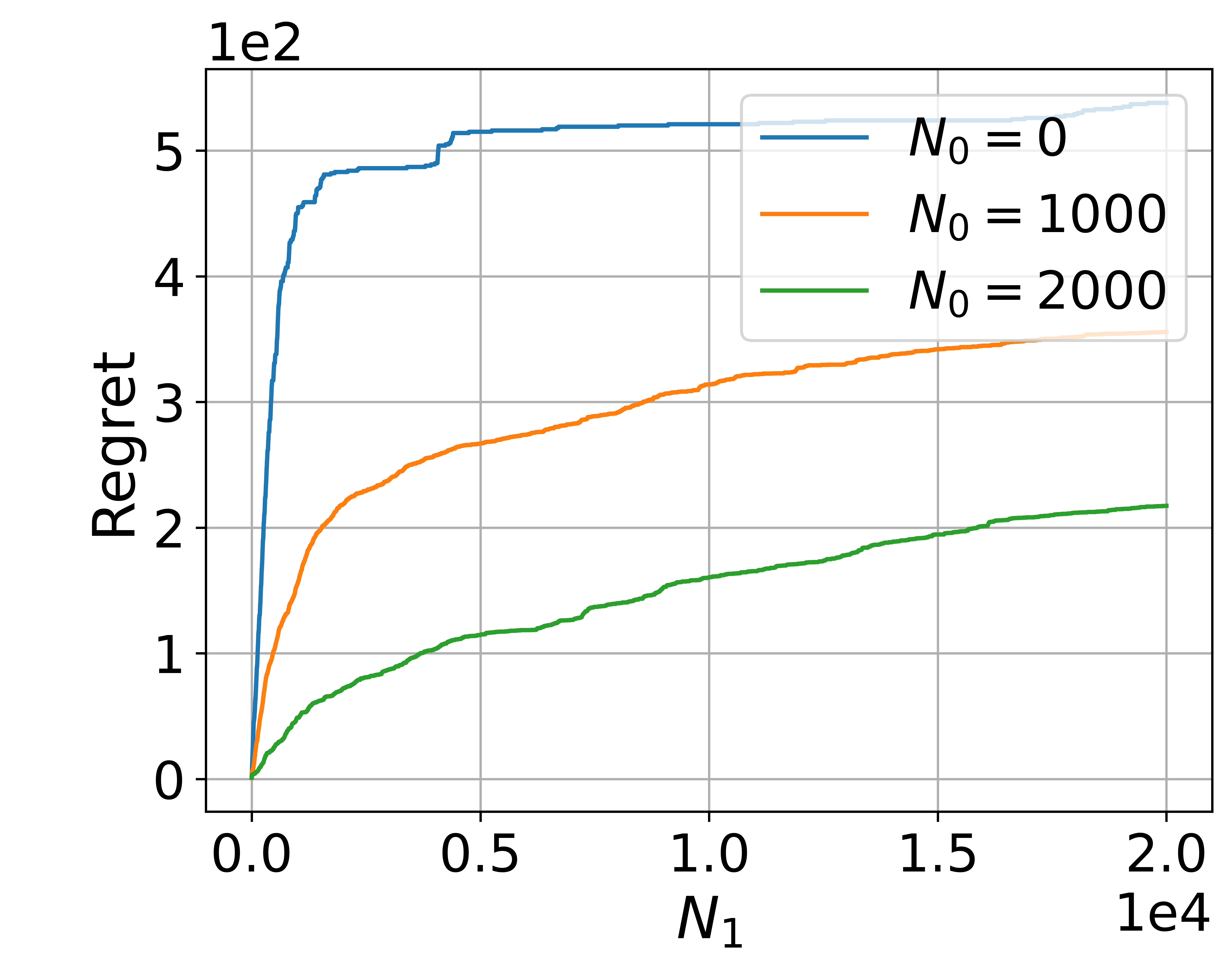} 
    \subcaption{Regret v.s. $N_0$}
    \label{fig:MountCar_4}
  \end{minipage}
  \caption{Experimental results on sub-optimality gap (SOG) and regret for different offline coefficient $\alpha$ and number of offline trajectories $N_0$. }
  \label{fig:append:MountCar}
\end{figure*}


\textbf{Results.} We summarize our findings in Figure~\ref{fig:append:MountCar}, where we experiment with various values of the offline coefficient $\alpha$ and different sizes of the offline dataset in the Mountain Car environment. For each configuration, we conduct 10 independent runs and track the mean sub-optimality gap or regret as a function of the online time horizon $N_1$. The pure UCB method without any offline data serves as our baseline. In \Cref{fig:MountCar_1} and \Cref{fig:MountCar_3}, the number of offline trajectories are 2000. In \Cref{fig:MountCar_2} and \Cref{fig:MountCar_4}, $\alpha$ is $0.5$.

For sub-optimality gap minimization, our results in \Cref{fig:MountCar_1} and \Cref{fig:MountCar_2} indicate that incorporating offline data significantly improves performance compared to the pure online approach, validating the theoretical insights. In \Cref{fig:MountCar_1}, when $\alpha$ is larger, the offline dataset emphasizes more trajectories generated by the near-optimal policy, reducing the sub-optimality gap more quickly. Also, in \Cref{fig:MountCar_2}, increasing the size of the offline dataset further accelerates this reduction in sub-optimality.

For regret minimization, the results also show the benefit of incorporating the offline dataset. As in other settings, an offline dataset that prioritizes a near-optimal policy (i.e., higher $\alpha$) can sometimes lead to slightly higher regret in the experiments, since the hybrid algorithm devotes exploration efforts to less-visited actions, which may be sub-optimal. Conversely, when $\alpha$ is smaller, the offline dataset covers a broader range of behaviors, leading to more informed exploration for regret minimization as \Cref{fig:MountCar_3}. Also, in \Cref{fig:MountCar_4}, enlarging the offline dataset size lowers the regret curve in each case, aligning with our theoretical predictions.

\end{document}